\documentclass[10pt]{article}
\usepackage[preprint]{tmlr}

\usepackage{amsmath,amssymb,amsthm,mathtools}
\usepackage{bm}
\usepackage{aliascnt}
\usepackage{booktabs}
\usepackage{graphicx}
\usepackage{hyperref}
\hypersetup{
  pdftitle={Kernel of Partition Paths: A Unified Representation for Tree Ensembles},
  pdfauthor={Nicolas Mahler},
  pdfsubject={Geometric representation for tree ensembles unifying prediction, attribution, robustness, and generalisation bounds.},
  pdfkeywords={tree ensembles, decision trees, kernel methods, Rademacher complexity, generalisation bounds, robustness, feature attribution, path metric, cross-fitting}
}
\usepackage{url}
\usepackage[nameinlink,noabbrev]{cleveref}

\usepackage{amsmath,amsfonts,bm}

\def\eqref#1{equation~\ref{#1}}

\def\1{\bm{1}}

\DeclareMathAlphabet{\mathsfit}{\encodingdefault}{\sfdefault}{m}{sl}
\SetMathAlphabet{\mathsfit}{bold}{\encodingdefault}{\sfdefault}{bx}{n}

\newcommand{\E}{\mathbb{E}}

\newcommand{\R}{\mathbb{R}}

\newcommand{\Var}{\mathrm{Var}}

\DeclareMathOperator{\sign}{sign}
\DeclareMathOperator{\Tr}{Tr}

\newcommand{\Xspace}{\mathcal X}
\newcommand{\Yspace}{\mathcal Y}

\newcommand{\Fcal}{\mathcal F}
\newcommand{\Hcal}{\mathcal H}

\renewcommand{\R}{\mathbb R}

\renewcommand{\E}{\mathbb E}
\newcommand{\Prob}{\mathbb P}
\newcommand{\one}{\mathbf{1}}

\newcommand{\parnode}{\operatorname{par}}

\newcommand{\Leaves}{V^{\mathrm{leaf}}}
\newcommand{\Internal}{V^{\mathrm{int}}}
\newcommand{\lca}{\operatorname{lca}}

\newcommand{\nodew}{a}
\newcommand{\edgew}{\omega}
\newcommand{\phit}{\phi_t}
\newcommand{\phiT}{\phi_T}

\newcommand{\tphiT}{\widetilde\phi_T}
\newcommand{\dt}{d_t}
\newcommand{\dT}{d_T}
\newcommand{\deltaT}{\delta_T}
\newcommand{\ST}{S_T}

\newcommand{\GT}{G_T}
\newcommand{\GK}{G_K}
\newcommand{\Ginf}{G_\infty}
\renewcommand{\Tr}{\operatorname{Tr}}

\newcommand{\op}{\mathrm{op}}
\newcommand{\deff}{d_{\mathrm{eff}}}
\newcommand{\Deltak}{\Delta_K}
\newcommand{\mueig}{\mu}

\renewcommand{\Var}{\operatorname{Var}}

\newcommand{\Risk}{\mathcal R}

\newcommand{\lreg}{\lambda_{\mathrm{reg}}}

\newcommand{\norm}[1]{\left\lVert #1 \right\rVert}
\newcommand{\abs}[1]{\left\lvert #1 \right\rvert}
\newcommand{\inner}[2]{\left\langle #1,#2 \right\rangle}
\newcommand{\set}[1]{\left\{#1\right\}}
\newcommand{\given}{\,\middle|\,}

\newcommand{\Rcal}{\mathfrak R}

\newcommand{\Phinode}{\tphiT}

\newcommand{\Knode}{\GT}

\newcommand{\Riskhat}{\widehat{\Risk}}

\theoremstyle{plain}
\newtheorem{theorem}{Theorem}[section]

\newaliascnt{proposition}{theorem}
\newtheorem{proposition}[proposition]{Proposition}
\aliascntresetthe{proposition}

\newaliascnt{lemma}{theorem}
\newtheorem{lemma}[lemma]{Lemma}
\aliascntresetthe{lemma}

\newaliascnt{corollary}{theorem}
\newtheorem{corollary}[corollary]{Corollary}
\aliascntresetthe{corollary}

\theoremstyle{definition}
\newaliascnt{definition}{theorem}
\newtheorem{definition}[definition]{Definition}
\aliascntresetthe{definition}

\newaliascnt{assumption}{theorem}
\newtheorem{assumption}[assumption]{Assumption}
\aliascntresetthe{assumption}

\theoremstyle{remark}
\newaliascnt{remark}{theorem}
\newtheorem{remark}[remark]{Remark}
\aliascntresetthe{remark}

\crefname{theorem}{Theorem}{Theorems}
\Crefname{theorem}{Theorem}{Theorems}
\crefname{proposition}{Proposition}{Propositions}
\Crefname{proposition}{Proposition}{Propositions}
\crefname{lemma}{Lemma}{Lemmas}
\Crefname{lemma}{Lemma}{Lemmas}
\crefname{corollary}{Corollary}{Corollaries}
\Crefname{corollary}{Corollary}{Corollaries}
\crefname{definition}{Definition}{Definitions}
\Crefname{definition}{Definition}{Definitions}
\crefname{assumption}{Assumption}{Assumptions}
\Crefname{assumption}{Assumption}{Assumptions}
\crefname{remark}{Remark}{Remarks}
\Crefname{remark}{Remark}{Remarks}

\title{Kernel of Partition Paths:\\A Unified Representation for Tree Ensembles}

\author{\name Nicolas Mahler \email nicolas.mahler@datapred.com \\
        \addr Datapred SAS \\
        23 rue Mirabeau \\
        94300 Vincennes, France \\
        \\
        Datapred SA \\
        EPFL Innovation Park -- Bâtiment A \\
        1015 Lausanne, Switzerland}

\begin{document}

\maketitle

\begin{abstract}
A recent line of work has reframed individual decision trees as linear
models on engineered features associated with their splits, opening
routes for oracle inequalities and feature-importance reinterpretation,
but leaving open the question of what unified geometric object a forest
induces when one indexes its feature map by nodes rather than
by splits. The present paper studies that object. KPP indexes the
feature map by the nodes of the forest, weighted by a path
metric that turns each coordinate into a component of a
squared-Euclidean path-isometric embedding. KPP unifies four pillars
under a single node-indexed representation whose Gram is non-diagonal and carries a metric: prediction,
exact additive attribution, deterministic Lipschitz robust radius in
the KPP metric, and uniform Rademacher risk bounds for regression and
classification under fixed, honest, or cross-fit conditioning. All
probabilistic guarantees are conditional on the representation and are
stated under three explicit conditioning regimes; the robust-radius
guarantee is deterministic in the KPP metric rather than in a norm on
the raw input. Conjectured fast-rate refinements for both regression
and classification are stated as open problems and are not claimed as
theorems.
\end{abstract}

\section{Introduction}
\label{sec:intro}
Random forests \citep{Breiman2001RandomForests} have remained one of the
standard predictive tools for tabular data, and have also served as a
substrate for the theoretical study of ensemble methods and for the
design of model-level interpretation tools. A recent line of work has
reframed individual CART trees as linear models on engineered features
associated with their splits \citep{KlusowskiTian2024LargeScale}, and
has used this reframing to extend forests with explicit regularisation
and generalised linear link functions \citep{Agarwal2025MDIPlus}. This
line of work shows that the partitioning power of forests and the
analytical tractability of linear regression can be combined within a
single estimator, and that the resulting linear view opens routes for
oracle inequalities, feature-importance reinterpretation, and
inductive-bias analysis. The present paper sits within this broad
direction and asks a related but distinct question: what geometric
object does a forest induce when one indexes its feature map not by
splits but by nodes, and what guarantees does that object
support.

The recent literature on linear and geometric views of forests splits
along two structurally distinct directions, surveyed in
\cref{sec:related-work}. One direction, descending from rule ensembles
\citep{FriedmanPopescu2008RuleEnsembles} and refined in recent work on
linear views of CART, indexes features by splits and obtains a diagonal
feature Gram through the orthogonality of per-split decision stumps.
Another direction, descending from forest proximities and from the view
of forests as adaptive kernels \citep{Scornet2016RFKernel}, indexes
similarity by leaf co-membership and exposes a kernel but no explicit
metric over the internal structure of the trees. The first direction
supports linear-model machinery on top of the forest at the price of a
representation whose geometry is degenerate (a diagonal feature Gram
cannot encode path distance). The second direction supports kernel machinery
at the price of losing the additive split-level decomposition that the
stump-based view affords. These two directions, taken individually,
operate within narrower scopes than a single representation that
simultaneously supports prediction, attribution, robustness, and
generalisation.

KPP indexes its feature map by the nodes of the forest, with
each coordinate weighted by a path metric on the tree
(\cref{sec:setup-representation}). The resulting embedding is
squared-Euclidean and path-isometric: distances in the embedded space
recover the weighted path distance through the trees up to a
normalisation factor (\cref{thm:kpp-isometry-tree}). Unlike the
stump-based designs, whose orthogonal per-split features give a diagonal
feature Gram $\Phi^\top \Phi$ and a ridge fit that decouples across
coordinates, and unlike the leaf-based kernels that expose similarity
without a metric on the internal tree structure, KPP uses a
non-orthogonal node system whose coordinates overlap along
ancestor--descendant chains: its feature Gram $\Phi^\top \Phi$ is
non-diagonal, and its sample Gram $\Phi \Phi^\top$ carries the
path-metric structure directly, being non-diagonal whenever two points
share a node of positive weight (\cref{cor:gram-psd-nondiagonal}). KPP
unifies four pillars under a single node-indexed representation. Namely:
prediction
(\cref{sec:setup-risk}), exact additive attribution
(\cref{sec:attribution}), deterministic Lipschitz robust radius in the
KPP metric (\cref{sec:robust-radius}), and uniform Rademacher risk
bounds under fixed, honest, or cross-fit conditioning
(\cref{sec:rademacher,sec:proof-boundary}). Each of these pillars is
established under explicit conditioning hypotheses on the
representation.

The probabilistic guarantees in this paper are conditional on the KPP
representation and are stated under three explicit conditioning
regimes---fixed, honest, and cross-fit---catalogued as PB-01, PB-02,
and PB-03 in \cref{sec:proof-boundary} and discussed in
\cref{sec:radical-honesty}. The robust-radius guarantee is
deterministic in the KPP metric $\deltaT$ rather than in an $\ell_p$
perturbation of the raw input (\cref{sec:robust-radius}); this is a
guarantee in the geometry of the forest, not in the geometry of the
covariate space. Conjectured fast-rate refinements for both regression
(FR-R-01) and classification (FR-C-01) are stated as open problems in
\cref{sec:discussion} and are not claimed as theorems in the present
paper.

The paper is organised as follows. \Cref{sec:related-work} situates the
KPP representation within prior work on linear and geometric views of
forests. \Cref{sec:setup} introduces the representation, the
normalisation, and the risk framework. \Cref{sec:kpp-construction}
establishes the squared-Euclidean path-isometric embedding, the
non-diagonal Gram matrix, and the finite-forest concentration of the
embedded geometry. \Cref{sec:attribution} derives the exact additive
attribution that follows from the node-indexed embedding.
\Cref{sec:robust-radius} establishes the deterministic Lipschitz robust
radius in the KPP metric. \Cref{sec:rademacher} proves the uniform
Rademacher risk bounds for regression and classification, and
\cref{sec:radical-honesty} sets out the asymmetry between regression
and classification guarantees under unconstrained ERM.
\Cref{sec:experiments} reports the empirical behaviour of KPP on five
tabular benchmarks and on a methodologically motivating dataset.
\Cref{sec:discussion} states the open problems, including the
fast-rate conjectures and the directions left unexplored by the proof
boundaries of the present work.

\section{Related work}
\label{sec:related-work}
We organise prior work around the linear and geometric views of
decision trees and forests, which are the main reference points for
the present construction. The discussion is factual: we describe
what each method computes and the regime under which its analysis
applies.

\paragraph{Linear views of decision trees and forests.}
\citet{KlusowskiTian2024LargeScale} establish that the predictions
of a regression or classification tree obtained by CART or C4.5
methodology are equivalent to the OLS fit of the response on the
local stump feature map indexed by the internal nodes of the tree.
This identity is the theoretical bedrock of linear views: it makes
the per-tree predictor a linear model in an explicit, tree-dependent
feature space.

Building on the linear-view identity, several methods regularise
tree-derived feature maps or fit linear models at the leaf level.
\citet{FriedmanPopescu2008RuleEnsembles} fit a LASSO model on
rule-indicator features extracted from a tree ensemble (the RuleFit
construction), combining the rules with the original linear
variables. \citet{Agarwal2022HierarchicalShrinkage} show that the
hierarchical-shrinkage post-processing of a fitted tree is
equivalent to a ridge regression on the internal-node stump
features, providing a linear-model reading of a shrinkage scheme
that was originally introduced for accuracy and interpretability.
\citet{Raymaekers2025RaFFLE} use PILOT trees with linear leaf
models as base learners in a random-forest ensemble, extending the
linear view from internal-node features to leaf-level linear fits.

A parallel lineage extends the mean-decrease-in-impurity (MDI)
feature-importance scheme into local and regularised forms.
\citet{Sutera2021LocalMDI} introduce a Local MDI framework that
decomposes the impurity-decrease score point-by-point and connects
local MDI scores to Shapley values \citep{LundbergLee2017SHAP,LundbergErionChen2020TreeSHAP} under specific conditions.
\citet{Agarwal2025MDIPlus} (MDI+ / RF+) fit a regularised
generalised linear model on the internal-node stump features of a
forest, with a sample-splitting protocol that separates the data
used to grow the trees from the data used to fit the GLM
coefficients; the reported guarantees are conditional on this
honest split. \citet{Liang2025LMDIPlus} (LMDI+) extend the MDI+
construction to point-wise local feature importances by fitting a
regularised GLM at each query point, on the same internal-node
stump feature space.

The present work occupies a different route within the same broad
direction. KPP indexes its feature map by the nodes of the
forest, weighted by a path metric that turns each node
coordinate into a component of a squared-Euclidean
path-isometric embedding (\cref{sec:setup-representation,thm:kpp-isometry-tree}).
Unlike the stump-based designs, whose orthogonal per-split features
produce a diagonal feature Gram $\Phi^\top \Phi$ and a coordinate-wise
decoupled ridge fit, KPP uses a non-orthogonal node system, so its
sample Gram $\Phi \Phi^\top$ is non-diagonal whenever two points share a
node of positive weight (\cref{cor:gram-psd-nondiagonal}) and carries
the path-metric structure directly. This
single representation supports four geometrically-coherent pillars:
prediction (\cref{sec:setup-risk}), exact additive attribution
(\cref{sec:attribution}), deterministic Lipschitz robust radius in
the KPP metric (\cref{sec:robust-radius}), and uniform Rademacher
risk bounds under fixed, honest, or cross-fit conditioning
(\cref{sec:rademacher,sec:proof-boundary}). It is a structurally
distinct construction within the same broad direction. On the attribution
axis, the KPP decomposition of \cref{sec:attribution} is an exact algebraic
accounting of the fitted predictor by marginal split, distinct in kind from the
game-theoretic attributions of \citet{Sutera2021LocalMDI} and the SHAP family
\citep{LundbergLee2017SHAP, LundbergErionChen2020TreeSHAP}: it satisfies no
Shapley axioms and is conditional on the split path
(\cref{rem:attribution-not-axiomatic}), and the variable-level
decomposition rests on a non-unique, minimum-norm choice over an
overcomplete edge design (\cref{rem:attribution-identifiability}).

A separate family of forest-derived kernels takes the leaves of the
trees as the unit of geometry: each pair of points is assigned a
proximity proportional to the number of trees in which they share a
leaf. The proximity construction originates with
\citet{ShiHorvath2006UnsupervisedRF}, who used it for unsupervised
random-forest analysis and multidimensional-scaling visualisation.
\citet{DaviesGhahramani2014RandomPartitionKernels} introduced
random-partition kernels in this spirit, and
\citet{Scornet2016RFKernel} analysed the resulting forest kernel in
a regression setting. \citet{Rhodes2023RFGAP} (RF-GAP) refine the
leaf-based proximity through a kernel-based reweighting that
preserves the forest's predictive accuracy.
\citet{Panda2018KMERF} (KMERF) construct characteristic kernels from
forest leaf occupancy for non-parametric independence testing, and
\citet{FengZhou2017AutoEncoderForest} use forest leaf occupancy as
an encoder in an autoencoder construction.
\citet{VuKaparWrightWatson2025RFAE} extend the leaf-based approach
to a forest-based autoencoder for missing-data imputation. These
proximities operate at the leaf level. The KPP path metric used here
is indexed by the nodes of the forest, with weights that
turn it into a squared-Euclidean path isometry; this is a
structurally distinct geometric object.

\paragraph{Theory of random forests.}
On the theoretical side, the random-forests literature provides
multiple frameworks for analysing forest-based estimators.
\citet{BiauScornet2016GuidedTour} provide a guided tour of the
analysis techniques developed during the first decade of post-Breiman
random-forest theory. \citet{ScornetHooker2025TheoryReview} survey
the more recent developments. \citet{MourtadaGaiffasScornet2020Mondrian}
establish minimax-optimal rates for Mondrian trees and forests under
specific generative models. The present paper uses standard
Rademacher complexity arguments in \cref{sec:rademacher}, with the
KPP-specific input being the controlled trace
$\Tr(\Knode) \le n$ of \cref{prop:norm-bound}.

\paragraph{Robustness verification of tree ensembles.}
A separate research line addresses robustness verification of tree
ensembles against perturbations of the raw input.
\citet{Chen2019RobustnessVerification} verify boosted decision trees
against $\ell_p$ perturbations of the raw input, and
\citet{AndriushchenkoHein2019ProvablyRobust} provide
provably-robust training procedures for boosted decision stumps and
trees under $\ell_\infty$ attack budgets. The robust radius
certificates of \cref{sec:robust-radius} are of a different nature:
they are deterministic in the KPP path metric $\deltaT$ and certify
the KPP linear model, not the underlying forest, against
perturbations measured in $\deltaT$, not in the raw input metric
(see also \cref{sec:discussion}).

\paragraph{Honest forests and proof-boundary conventions.}
The honest and cross-fit constructions used in
\cref{sec:proof-boundary,sec:diabetes-honest-crossfit} have their
origin in the causal-inference literature.
\citet{WagerAthey2018HonestForests} introduced honest random forests
for treatment-effect estimation, and
\citet{AtheyTibshiraniWager2019GRF} developed the generalised
random-forest framework. The present paper does not use honesty or
cross-fitting for causal inference. Following the proof-boundary
discipline of \cref{sec:proof-boundary}, we use the same data-
splitting mechanism so that the trees, the feature map, and the
Gram matrix can be treated as fixed conditionally when the ridge or
logistic estimator is analysed in \cref{sec:rademacher}. The
calibration arguments used in
\cref{sec:rademacher-classification} draw on standard
classification-calibration results
\citep{BartlettJordanMcAuliffe2006Calibration,MammenTsybakov1999SmoothDiscrimination},
which are independent of the KPP representation.

\paragraph{Embeddings, geometric forests, and surveys.}
\citet{LinialLondonRabinovich1995Geometry} established classical
metric-embedding results, including the embedding of tree metrics
into $L_1$, in the graph-theoretic geometry literature. The KPP
path metric is structurally distinct: it is a node-indexed
squared-Euclidean isometry on a re-weighted path metric, not on the
raw tree metric. A separate line of geometric extensions of random
forests addresses non-Euclidean predictors and outcomes.
\citet{Capitaine2024FrechetRF} develop Fréchet random forests for
metric-space-valued regression with non-Euclidean predictors, and
\citet{BenardNafJosse2024MMD} introduce MMD-based variable
importance for distributional random forests. These approaches
modify the loss or the prediction space; the KPP construction
modifies the feature map of a standard regression or classification
forest. Broader interpretability surveys of random-forest methods
are provided by \citet{HaddouchiBerrado2024Survey}.

\section{Setup and notation}
\label{sec:setup}

\subsection{Data}
\label{sec:setup-data}

\begin{assumption}[Data]
\label{ass:data}
The training sample $(X_i, Y_i)_{i=1}^n$ consists of i.i.d.\ draws
from an unknown distribution $P$ on $\Xspace \times \Yspace$, with
$\Xspace \subseteq \R^p$. The label space depends on the task:
$\Yspace \subseteq \R$ with $\E[Y^2] < \infty$ for regression, and $\Yspace = \set{-1, +1}$
for binary classification.
\end{assumption}

We write $\E$ for expectation under $P$, reserve $\Risk$ for the
population risk and $\Riskhat$ for its empirical counterpart on the
training sample (\cref{sec:setup-risk}).

\subsection{Forests, trees, and partition paths}
\label{sec:setup-forests}

A tree $t$ is a finite rooted binary tree with internal nodes
$\Internal$ and leaves $\Leaves$. For a node $v$, $\parnode(v)$
denotes its parent and $\lca(v, v')$ the lowest common ancestor of
$v$ and $v'$. Each internal node carries an axis-aligned split. A
\emph{forest} is a finite collection $T = (t_1, \dots, t_K)$ of trees,
possibly grown with bootstrap or feature subsampling
\citep{Breiman2001RandomForests}.

To each tree $t$ and node $v$ we attach a nonnegative \emph{node
weight} $\nodew_t(v) \ge 0$ summarising the impurity decrease or
local variance at $v$. The associated \emph{edge weight} is
$\edgew_t(v) = (\nodew_t(\parnode(v)) + \nodew_t(v))/2$ for non-root
nodes, with the convention that the edge into the root carries
weight zero. Full definitions, including the global aggregate
$\ST$, are deferred to \cref{sec:kpp-construction}.

\paragraph{Notation policy.}
The rewrite keeps different mathematical roles separated: node
weights are denoted $\nodew_t(v)$, induced edge weights
$\edgew_t(v)$, regularization parameters $\lreg$, and Gram
eigenvalues $\mueig_j$. This avoids the earlier ambiguity in which
the same symbol was used for node weights, ridge parameters, and
eigenvalues.

\subsection{KPP representation}
\label{sec:setup-representation}

The KPP construction (\cref{sec:kpp-construction}) attaches to the forest $T$ an
explicit node-indexed feature map
\[
  \Phinode : \Xspace \to \R^m,
  \qquad
  \norm{\Phinode(x)}_2 \le 1 \text{ for every } x \in \Xspace.
\]
The associated forest path distance is $\dT(x, x')$ and its
normalised counterpart $\deltaT(x, x')$. The Gram matrix of the
embedding on the training inputs is
\begin{equation}
  (\Knode)_{ij} \;:=\; \inner{\Phinode(x_i)}{\Phinode(x_j)},
  \qquad i, j \in \set{1, \dots, n}.
  \label{eq:setup-gram}
\end{equation}
The load-bearing isometry identity, namely
$\norm{\Phinode(x) - \Phinode(x')}_2^2 = \deltaT(x, x')$, is proved
in \cref{sec:kpp-construction}. The symbols $\Phinode, \dT, \deltaT, \Knode, \ST$ are
used in the rest of the paper without further definition.

\subsection{Hypothesis class, prediction, and risk}
\label{sec:setup-risk}

We consider the linear KPP hypothesis class
\begin{equation}
  \Hcal_B \;=\; \set{x \mapsto \inner{w}{\Phinode(x)} + b
                    : w \in \R^m,\, b \in \R,\, \norm{w}_2 \le B},
  \label{eq:hcal-B}
\end{equation}
parametrised by a norm budget $B \ge 0$. All Rademacher and
uniform-loss bounds in
\cref{sec:rademacher-regression,sec:rademacher-classification} are
stated on the homogeneous linear sub-class
$\Fcal_B := \Hcal_B|_{b = 0}$ introduced in
\cref{eq:fcal-B-linear}. For squared-loss regression, an intercept
in $\Hcal_B$ can be absorbed by centering both the response and the
feature columns: at the optimum the intercept profiles out as
$b^\star = \bar y - \inner{w}{\bar\phi}$, with $\bar y$ and $\bar\phi$
the empirical means of the response and of $\Phinode$, so the affine
problem is equivalent to the homogeneous one on the centered features
without changing the relevant capacity. For logistic-loss classification,
mean-centering features does not eliminate an additive intercept
$b$ from the score; we therefore state all classification bounds
on $\Fcal_B$ (i.e., with $b = 0$), matching the convention adopted
for theoretical scores throughout
\cref{sec:rademacher-classification}. Extending the bounds to the
full affine class would require an additional norm constraint on
$b$, which we do not pursue here. The reference implementation fits a
separate unpenalised intercept (the scikit-learn default), which lies
outside $\Fcal_B$; the reported classification bounds accordingly
pertain to the homogeneous predictor; the constant root coordinate of
\cref{def:tree-embedding} supplies an intercept-like term within the
norm-constrained $w$ only when the root weight is positive
($\nodew_t(r_t) > 0$). Removing the separate unpenalised intercept then
aligns the implementation with the analysed class at the cost of a
penalised bias absorbed into the norm budget, not without loss of
expressivity.

The score $g_w(x) = \inner{w}{\Phinode(x)} + b$ enters the two tasks
as follows.
\begin{itemize}
  \item \textbf{Regression.} The predictor is $\widehat f(x) = g_{\widehat w}(x)$
    and the loss is squared, $\ell(g, y) = (y - g)^2$.
  \item \textbf{Classification.} The predicted label is
    $\widehat y(x) = \sign(g_{\widehat w}(x))$, with binary labels
    $Y \in \set{-1, +1}$ and the logistic surrogate loss
    $\ell(g, y) = \log(1 + \exp(-y g))$.
\end{itemize}
The population risk and its empirical counterpart on the training
sample are
\begin{equation}
  \Risk(g) \;=\; \E\!\left[\ell(g(X), Y)\right],
  \qquad
  \Riskhat(g) \;=\; \frac{1}{n}\sum_{i=1}^{n} \ell(g(X_i), Y_i),
  \label{eq:risk-emp-risk}
\end{equation}
where the loss $\ell$ is fixed once a task is specified.

\subsection{Proof boundary: fixed, honest, and cross-fit representations}
\label{sec:proof-boundary}

All statistical statements in this paper use the deterministic KPP
geometry from \cref{sec:kpp-construction}. The proof boundary is the following pipeline:
\begin{equation}
  (X_i, Y_i)_{i=1}^n
  \;\longrightarrow\;
  (T, \nodew, \Phinode)
  \;\longrightarrow\;
  \widehat f .
  \label{eq:full-kpp-pipeline}
\end{equation}
The first arrow builds the forest, node weights, and KPP
representation. The second arrow fits the final convex learner
(ridge for regression, logistic ERM for classification). Bounds for
linear or kernel methods normally analyse only the second arrow,
after the representation is fixed. They should therefore be stated
under one of the regimes below.

\paragraph{Fixed representation.}
The forest, node weights, and features $\Phinode(x_i)$ are regarded
as fixed before the final learner is analysed. Equivalently, the
theorem is conditional on the representation. This is the cleanest
setting for Rademacher bounds, ridge identities, calibration
inequalities, and deterministic robustness certificates.

\paragraph{Honest representation.}
The sample is split into a partition fold and a fit fold. The
partition fold builds the trees and node weights. The fit fold
constructs the KPP design and fits the final ERM. Conditionally on
the partition fold, the representation on the fit fold is fixed with
respect to the labels used by the final ERM. This is the safest
route for oracle quantities such as entropy or variance gains
\citep{WagerAthey2018HonestForests, AtheyTibshiraniWager2019GRF}.

\paragraph{Cross-fit representation.}
The sample is split into folds. For each fold $q$, a representation
$\Phinode^{(-q)}$ is learned without using labels from fold $q$, and
points in fold $q$ are embedded with that representation.
Cross-fitting reduces label leakage while preserving more training
data than a single split. A theorem using cross-fitting should still
state its conditioning explicitly, because the collection of
cross-fit features can depend on labels in other folds.

\paragraph{Label-dependent representations (PB-04).}
If the same labels are used both to choose splits or node weights and
to fit the final linear model, then the feature map is
label-dependent. This does not make the algorithm invalid, but it
changes the proof problem: standard fixed-class Rademacher or ridge
arguments become diagnostics for the fitted representation, not
end-to-end guarantees for the full pipeline in
\cref{eq:full-kpp-pipeline}. An end-to-end theorem would need a
separate analysis of the representation-building step, for example
through full-pipeline stability or a uniform control over the class
of possible forests.

\paragraph{Boilerplate for theorems.}
Every statistical theorem in this paper carries one of the following
clauses:
\begin{quote}
  \emph{Conditionally on the KPP representation, the following bound
  holds for the final ERM.}
\end{quote}
\begin{quote}
  \emph{Assume the representation is learned honestly on data
  disjoint from the final fitting fold.}
\end{quote}
\begin{quote}
  \emph{Assume a cross-fit representation and condition fold-wise on
  the representations used to embed each held-out fold.}
\end{quote}
This convention prevents deterministic KPP geometry, fixed-design
learning theory, and full-pipeline guarantees from being conflated.

\section{KPP construction: embedding, metric, Gram, isometry, and finite-forest concentration}
\label{sec:kpp-construction}

This section defines the deterministic KPP geometry for a fixed tree
and then for a fixed finite forest. No statistical assumption is used
here: the tree, its partition, and its node weights are regarded as
fixed objects.

\subsection{Trees, nodes, leaves, and paths}
\label{sec:trees-paths}

\begin{definition}[Recursive binary partition]
\label{def:recursive-binary-partition}
A rooted binary tree is a triple $t = (V_t, E_t, S_t)$, where $V_t$
is the finite set of nodes, $E_t$ is the set of directed edges, and
$S_t = (S_v)_{v \in V_t}$ is a family of measurable subsets of
$\Xspace$. The root is denoted $r_t$, with $S_{r_t} = \Xspace$. If
an internal node $v$ has children $v_L, v_R$, then
\begin{equation}
  S_v = S_{v_L} \cup S_{v_R},
  \qquad
  S_{v_L} \cap S_{v_R} = \varnothing.
\end{equation}
The set of leaves is denoted $V_t^{\mathrm{leaf}}$ and the set of
internal nodes $V_t^{\mathrm{int}}$.
\end{definition}

For $x \in \Xspace$, let $\ell_t(x) \in V_t^{\mathrm{leaf}}$ be the
leaf reached by $x$. Let $\pi_t(x) \subset V_t$ be the root-to-leaf
path from $r_t$ to $\ell_t(x)$. For two points $x, x'$, let
$\sigma_t(x, x') \subset V_t$ be the unique simple path connecting
$\ell_t(x)$ and $\ell_t(x')$.

\begin{lemma}[Path decomposition]
\label{lem:path-decomposition}
Let $u = \lca(\ell_t(x), \ell_t(x'))$ be the lowest common ancestor
of the two leaves. Then
\begin{equation}
  \sigma_t(x, x') =
    \bigl(\pi_t(x) \triangle \pi_t(x')\bigr) \cup \set{u},
\end{equation}
where $\triangle$ denotes symmetric difference.
\end{lemma}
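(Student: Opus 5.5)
The plan is to prove the identity directly from the structure of root-to-leaf paths in a rooted tree, treating paths as node sets. Write $\ell = \ell_t(x)$ and $\ell' = \ell_t(x')$, and let $u = \lca(\ell, \ell')$. Because the root-to-node path in a rooted tree is unique, $\pi_t(x)$ is the chain of ancestors of $\ell$ (including $\ell$ itself), and likewise $\pi_t(x')$ for $\ell'$. I would first record the standard fact that the common ancestors of $\ell$ and $\ell'$ are exactly the ancestors of $u$, $u$ included. Hence $\pi_t(x) \cap \pi_t(x')$ is precisely the root-to-$u$ path. This gives the decompositions
\[
\pi_t(x) = A \cup P_\ell, \qquad \pi_t(x') = A \cup P_{\ell'},
\]
where $A$ is the root-to-$u$ path, $P_\ell$ is the part of $\pi_t(x)$ strictly below $u$, and $P_{\ell'}$ is the part of $\pi_t(x')$ strictly below $u$. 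The sets $A$, $P_\ell$, $P_{\ell'}$ are pairwise disjoint. Consequently $\pi_t(x) \triangle \pi_t(x') = P_\ell \cup P_{\ell'}$.

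Next I would identify $\sigma_t(x,x')$. The node sequence obtained by climbing from $\ell$ through $P_\ell$ up to $u$, then descending through $P_{\ell'}$ to $\ell'$, is a walk in the underlying undirected tree. It repeats no node because $P_\ell$, $\{u\}$, and $P_{\ell'}$ are disjoint, so it is a simple path. Since a tree has a unique simple path between any two vertices, this walk is $\sigma_t(x,x')$. Its node set is therefore $P_\ell \cup \{u\} \cup P_{\ell'}$, which combined with the previous paragraph yields the claimed identity.

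The only step needing care is the degenerate case $\ell = \ell'$. Then $u = \ell$, both $P_\ell$ and $P_{\ell'}$ are empty, the symmetric difference is empty, and $\sigma_t(x,x') = \{u\}$. This is consistent with the formula; the argument above already covers it as long as I allow empty $P$'s. The main (mild) obstacle is justifying that the common ancestors of $\ell$ and $\ell'$ form exactly the chain up to $u$. I would argue this by noting that the ancestors of any node are totally ordered by depth, so the common ancestors form an initial segment of that chain. The deepest element of this segment is the lowest common ancestor $u$ by definition, and every node above $u$ on the chain is also a common ancestor.
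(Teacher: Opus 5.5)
Your proof is correct and follows the same route as the paper's: you identify the unique simple path as the climb from $\ell_t(x)$ up to $u$ followed by the descent to $\ell_t(x')$, and recognise the nodes strictly below $u$ on the two branches as $\pi_t(x) \triangle \pi_t(x')$. What you add are the steps the paper leaves implicit: that $\pi_t(x) \cap \pi_t(x')$ is exactly the root-to-$u$ chain, the pairwise disjointness that makes the up-then-down walk simple, and the degenerate case $\ell_t(x) = \ell_t(x')$.
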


\begin{proof}
In a tree there is a unique simple path between any two nodes. The
path from $\ell_t(x)$ to $\ell_t(x')$ goes upward from $\ell_t(x)$ to
the lowest common ancestor $u$, and then downward from $u$ to
$\ell_t(x')$. The nodes strictly below $u$ on either branch are
exactly the symmetric difference of the two root-to-leaf paths, and
$u$ is the deepest common node.
\end{proof}

\subsection{Path-weighted distance and edge representation}
\label{sec:path-distance}

Let $t = (V_t, E_t, S_t)$ be a rooted binary tree as in
\cref{lem:path-decomposition}. Each node $v \in V_t$ is assigned a
nonnegative weight
\[
  \nodew_t(v) \ge 0.
\]
In applications, $\nodew_t(v)$ may be an impurity decrease, an
entropy gain, a variance decrease, or another nonnegative relevance
score attached to the node; see \cref{app:variance-gain-choice} for
the variance-gain choice. The canonical KPP embedding used in
this paper adopts the following convention.

\begin{assumption}[Zero leaf weights]
\label{ass:zero-leaf-weights}
For every leaf $v \in V_t^{\mathrm{leaf}}$,
\[
  \nodew_t(v) = 0.
\]
\end{assumption}

\begin{remark}[Why this convention is explicit]
\label{rem:zero-leaf-convention}
The original KPP construction uses a leaf coordinate of the form
$\sqrt{\nodew_t(\parnode(v)) / 2}\,\one\set{x \in S_v}$. This
coordinate does not include the leaf's own weight. Therefore, if
nonzero leaf weights are included in the path distance, the
canonical embedding is not isometric without a modification.
\Cref{app:nonzero-leaf-weights} gives a simple modified embedding
that handles nonzero leaf weights.
In the main text we keep \cref{ass:zero-leaf-weights}, which is
natural for split-gain weights because leaves do not split.
\end{remark}

\begin{definition}[Tree path distance]
Under \cref{ass:zero-leaf-weights}, the KPP path distance induced by
tree $t$ is
\begin{equation}
  \dt(x, x') \;:=\; \sum_{v \in \sigma_t(x, x')} \nodew_t(v),
  \qquad x, x' \in \Xspace.
  \label{eq:tree-path-distance}
\end{equation}
For a finite forest $T$, define the additive forest distance
\begin{equation}
  \dT(x, x') \;:=\; \sum_{t \in T} \dt(x, x').
  \label{eq:forest-path-distance}
\end{equation}
\end{definition}

The word ``distance'' should be read as pseudo-distance: two distinct
points can have zero distance if the forest does not separate them,
or if all separating nodes have zero weight.

\paragraph{From node weights to edge weights.}
For every non-root node $v \ne r_t$, define the edge weight
associated with the edge $(\parnode(v), v)$ by
\begin{equation}
  \edgew_t(v)
  \;:=\;
  \frac{\nodew_t(\parnode(v)) + \nodew_t(v)}{2}.
  \label{eq:edge-weight}
\end{equation}
Let $P_t(x, x')$ be the set of non-root nodes $v$ such that the edge
$(\parnode(v), v)$ lies on the unique simple path between the leaves
$\ell_t(x)$ and $\ell_t(x')$. Equivalently, $v \in P_t(x, x')$ if
and only if exactly one of the two leaves lies in the subtree rooted
at $v$.

\begin{lemma}[Node path as an edge sum]
\label{lem:node-path-edge-sum}
Under \cref{ass:zero-leaf-weights}, for all $x, x' \in \Xspace$,
\begin{equation}
  \dt(x, x') \;=\; \sum_{v \in P_t(x, x')} \edgew_t(v).
\end{equation}
\end{lemma}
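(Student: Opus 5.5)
The identity follows from a direct counting argument over the simple path $\sigma_t(x,x')$. Each node weight in $\dt$ is redistributed over incident path edges, and \cref{ass:zero-leaf-weights} absorbs the boundary terms.

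\textbf{Trivial case.} If $\ell_t(x) = \ell_t(x')$, then $\sigma_t(x,x')$ is the single leaf, whose weight is zero by \cref{ass:zero-leaf-weights}, and $P_t(x,x') = \varnothing$. Both sides vanish.

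\textbf{Setup of the path.} Otherwise, write the simple path as a sequence of nodes $u_0 = \ell_t(x), u_1, \dots, u_L = \ell_t(x')$ with $L \ge 2$. By \cref{lem:path-decomposition}, this path climbs from $u_0$ to $u = \lca$ and then descends to $u_L$. Every edge $\{u_{k-1}, u_k\}$ is a parent--child edge, and it is indexed by its child endpoint $v$. The set of these child endpoints is exactly $P_t(x,x')$, namely the nodes strictly below $u$ on either branch, i.e.\ $\pi_t(x)\triangle\pi_t(x')$. For such an edge, $\edgew_t(v) = (\nodew_t(u_{k-1}) + \nodew_t(u_k))/2$ regardless of orientation, since \cref{eq:edge-weight} is symmetric in parent and child.

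\textbf{Main computation.} Summing over the $L$ edges gives
\[
  \sum_{v \in P_t(x,x')} \edgew_t(v) \;=\; \frac12 \sum_{k=1}^{L} \bigl(\nodew_t(u_{k-1}) + \nodew_t(u_k)\bigr).
\]
This is a telescoping-type count. Each interior node $u_1,\dots,u_{L-1}$, including $u$ itself, lies on exactly two path edges, so it contributes $\nodew_t(u_k)$ in full. The endpoints $u_0$ and $u_L$ lie on only one edge each, so they contribute $\tfrac12(\nodew_t(u_0) + \nodew_t(u_L))$. This half-term vanishes because both endpoints are leaves and \cref{ass:zero-leaf-weights} applies. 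The right-hand side is therefore $\sum_{k=0}^{L}\nodew_t(u_k) = \dt(x,x')$ by \cref{eq:tree-path-distance}.

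\textbf{Expected obstacle.} The only delicate point is bookkeeping: one must verify that the edges of $\sigma_t(x,x')$ correspond bijectively to $P_t(x,x')$ via their child endpoints. This holds because the root never appears as a child endpoint, and the lca $u$ is never a child endpoint on the path, since both path edges at $u$ point downward from it. One must also confirm that $u$ is indeed interior with degree two, which is where $\ell_t(x)\neq\ell_t(x')$ is used. I expect no further difficulty.
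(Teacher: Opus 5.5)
Your proof is correct and follows the paper's argument. Both enumerate the leaf-to-leaf path $u_0,\dots,u_L$, sum the edge weights so that each interior node is counted twice at weight one half, and remove the endpoint half-terms using \cref{ass:zero-leaf-weights}; your separate same-leaf case and the explicit bijection between path edges and $P_t(x,x')$ via child endpoints just make precise what the paper leaves implicit.
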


\begin{proof}
Let $u_0 = \ell_t(x), u_1, \dots, u_L = \ell_t(x')$ be the ordered
nodes on the simple path between the two leaves. The endpoints
$u_0$ and $u_L$ are leaves, so $\nodew_t(u_0) = \nodew_t(u_L) = 0$ by
\cref{ass:zero-leaf-weights}. Summing the edge weights along the
path gives
\begin{align}
  \sum_{j=1}^{L} \frac{\nodew_t(u_{j-1}) + \nodew_t(u_j)}{2}
  &= \frac{\nodew_t(u_0) + \nodew_t(u_L)}{2}
     + \sum_{j=1}^{L-1} \nodew_t(u_j) \\
  &= \sum_{j=1}^{L-1} \nodew_t(u_j).
\end{align}
Since the endpoints have zero weight, the last expression is exactly
$\sum_{v \in \sigma_t(x, x')} \nodew_t(v) = \dt(x, x')$.
\end{proof}

\subsection{Node-indexed embedding and squared-Euclidean isometry}
\label{sec:embedding-isometry}

\begin{definition}[Tree embedding]
\label{def:tree-embedding}
Define $\phit : \Xspace \to \R^{\abs{V_t}}$ coordinatewise by
\begin{align}
  \phi_{t, r_t}(x) &:= \sqrt{\frac{\nodew_t(r_t)}{2}}, \\
  \phi_{t, v}(x)   &:= \sqrt{\edgew_t(v)}\,\one\set{x \in S_v},
    \qquad v \ne r_t.
\end{align}
For a finite forest $T$, define the forest embedding by concatenation:
\begin{equation}
  \phiT(x) \;:=\; \bigl(\phit(x)\bigr)_{t \in T}.
  \label{eq:forest-embedding}
\end{equation}
\end{definition}

The root coordinate is constant. It is kept because it makes the
embedding directly comparable to the original KPP coordinate
formula, but it has no effect on distances.

Coordinates are kept for all non-root nodes $v \ne r_t$, including
leaves. Under \cref{ass:zero-leaf-weights} (zero leaf weights), the
edge weight $\edgew_t(v) = (\nodew_t(\parnode(v)) + \nodew_t(v))/2$
reduces to $\nodew_t(\parnode(v))/2$ for leaves, so the leaf
coordinate $\phi_{t,v}(x) = \sqrt{\edgew_t(v)}\,\one\set{x \in S_v}$
carries the parent edge weight; this is necessary for the
squared-Euclidean path-distance isometry of
\cref{thm:kpp-isometry-tree}.

\begin{theorem}[KPP isometry, tree level]
\label{thm:kpp-isometry-tree}
Under \cref{ass:zero-leaf-weights}, for every $x, x' \in \Xspace$,
\begin{equation}
  \norm{\phit(x) - \phit(x')}_2^2 \;=\; \dt(x, x').
\end{equation}
\end{theorem}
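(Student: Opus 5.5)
The plan is to expand the squared norm coordinatewise and reduce the statement to \cref{lem:node-path-edge-sum}. The root coordinate $\phi_{t,r_t}$ is constant in $x$, so it cancels in $\phit(x) - \phit(x')$ and contributes nothing. For every non-root node $v \ne r_t$, the coordinate difference is
\[
  \phi_{t,v}(x) - \phi_{t,v}(x') = \sqrt{\edgew_t(v)}\,\bigl(\one\set{x \in S_v} - \one\set{x' \in S_v}\bigr).
\]
Its square equals $\edgew_t(v)\,\one\set{x \in S_v}\,\triangle\,\one\set{x' \in S_v}$: the difference of indicators lies in $\set{-1,0,1}$, so squaring gives exactly the indicator that one of the two points is in $S_v$ and the other is not. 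Summing over $v$ gives
\[
  \norm{\phit(x) - \phit(x')}_2^2 = \sum_{v \ne r_t} \edgew_t(v)\,\one\set{\text{exactly one of } x, x' \in S_v}.
\]

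The key step is to identify the index set $\set{v \ne r_t : \text{exactly one of } x, x' \in S_v}$ with $P_t(x,x')$. By \cref{def:recursive-binary-partition}, the cells of siblings are disjoint and their union is the parent cell. An induction from the root then shows that $x \in S_v$ if and only if $v \in \pi_t(x)$, i.e.\ $v$ is an ancestor-or-self of the leaf $\ell_t(x)$: the root cell is $\Xspace$, and at each internal node $x$ belongs to exactly one child cell. Consequently, $x \in S_v$ exactly when $\ell_t(x)$ lies in the subtree rooted at $v$. So the condition ``exactly one of $x,x'$ is in $S_v$'' is the same as ``exactly one of the two leaves lies in the subtree of $v$''. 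That is precisely the characterisation of $P_t(x,x')$ given before \cref{lem:node-path-edge-sum}. Equivalently, the index set is $\pi_t(x) \triangle \pi_t(x')$, consistent with \cref{lem:path-decomposition}.

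With this identification, the sum equals $\sum_{v \in P_t(x,x')} \edgew_t(v)$. By \cref{lem:node-path-edge-sum}, which is where \cref{ass:zero-leaf-weights} enters, this equals $\dt(x,x')$. The degenerate case $\ell_t(x) = \ell_t(x')$ is consistent: both sides vanish, since $\sigma_t$ then reduces to the leaf itself, whose weight is zero.

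The main obstacle, though mild, is the membership identity $x \in S_v \iff v \in \pi_t(x)$. It needs the partition structure, and not merely the tree shape. I would prove it by induction on depth, using that sibling cells are disjoint and cover their parent. Everything else is bookkeeping.
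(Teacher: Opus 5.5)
Your proposal is correct and follows the paper's proof essentially step for step. The constant root coordinate cancels, each non-root coordinate contributes $\edgew_t(v)$ exactly when the edge $(\parnode(v),v)$ separates the two leaves, and \cref{lem:node-path-edge-sum} turns the resulting edge sum into $\dt(x,x')$; your explicit induction proving $x \in S_v \iff v \in \pi_t(x)$ fills in a membership step that the paper only asserts.
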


\begin{proof}
The root coordinate is constant and cancels in the difference. For
a non-root node $v$,
\begin{equation}
  \bigl(\phi_{t, v}(x) - \phi_{t, v}(x')\bigr)^2
  \;=\;
  \edgew_t(v)\,\one\set{\one\set{x \in S_v} \ne \one\set{x' \in S_v}}.
\end{equation}
The indicator is one precisely when the edge $(\parnode(v), v)$
separates the two leaves, i.e.\ when $v \in P_t(x, x')$. Summing
over all non-root nodes and applying \cref{lem:node-path-edge-sum}
proves the claim.
\end{proof}

\begin{corollary}[KPP isometry, forest level]
\label{cor:kpp-isometry-forest}
For every finite forest $T$ satisfying \cref{ass:zero-leaf-weights}
tree by tree,
\begin{equation}
  \norm{\phiT(x) - \phiT(x')}_2^2 \;=\; \dT(x, x').
\end{equation}
\end{corollary}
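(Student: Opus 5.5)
The plan is to reduce the forest statement to the tree-level \cref{thm:kpp-isometry-tree} by using the block structure of the concatenated embedding. By \cref{def:tree-embedding}, $\phiT(x) = (\phit(x))_{t \in T}$ is a concatenation over the finitely many trees $t \in T$, and the blocks occupy disjoint sets of coordinates. The squared Euclidean norm of a concatenated vector is the sum of the squared norms of its blocks. Hence
\begin{equation*}
  \norm{\phiT(x) - \phiT(x')}_2^2
  \;=\; \sum_{t \in T} \norm{\phit(x) - \phit(x')}_2^2 .
\end{equation*}
This step uses only that the forest embedding is a direct sum of the tree embeddings; no interaction terms between trees arise.

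Next I apply \cref{thm:kpp-isometry-tree} to each summand. This is legitimate because \cref{ass:zero-leaf-weights} is assumed tree by tree. It replaces each $\norm{\phit(x) - \phit(x')}_2^2$ by $\dt(x, x')$. Summing over $t \in T$ then gives exactly the additive forest distance $\dT(x, x')$ of \cref{eq:forest-path-distance}, which is the claim.

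I expect no real obstacle here. The only point to check is bookkeeping: node sets of different trees must be treated as disjoint index sets, even when trees are structurally identical (for example, two bootstrap replicates producing the same tree). Otherwise the concatenation would not be an orthogonal direct sum. Indexing the coordinates by pairs $(t, v)$ with $v \in V_t$ makes this explicit. The root coordinates are constant in each block and cancel, as in the tree-level proof.
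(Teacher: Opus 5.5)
Your proposal is correct and follows the paper's proof. The concatenation is an orthogonal direct sum over trees, so the squared norm splits into per-tree terms; each term equals $\dt(x,x')$ by \cref{thm:kpp-isometry-tree}, and these sum to $\dT(x,x')$. Your remark about indexing coordinates by pairs $(t,v)$, so that identical trees still occupy disjoint blocks, is sensible bookkeeping that the paper leaves implicit.
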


\begin{proof}
The forest embedding is a concatenation, hence an orthogonal direct
sum over trees:
\[
  \norm{\phiT(x) - \phiT(x')}_2^2
  \;=\; \sum_{t \in T} \norm{\phit(x) - \phit(x')}_2^2
  \;=\; \sum_{t \in T} \dt(x, x')
  \;=\; \dT(x, x').
\]
\end{proof}

\subsection{Forest normalization, Gram, and distance reconstruction}
\label{sec:gram-distance}

For a single tree, define its total node mass
\begin{equation}
  A_t \;:=\; \sum_{v \in V_t} \nodew_t(v).
  \label{eq:tree-mass}
\end{equation}
For a finite forest, define
\begin{equation}
  \ST \;:=\; \sum_{t \in T} A_t \;=\; \sum_{t \in T} \sum_{v \in V_t} \nodew_t(v).
  \label{eq:forest-mass}
\end{equation}
Assume $\ST > 0$. The globally normalized forest embedding is
\begin{equation}
  \Phinode(x) \;:=\; \frac{\phiT(x)}{\sqrt{\ST}}.
  \label{eq:forest-embedding-normalized}
\end{equation}
Then, by \cref{cor:kpp-isometry-forest},
\begin{equation}
  \norm{\Phinode(x) - \Phinode(x')}_2^2
  \;=\; \frac{\dT(x, x')}{\ST}
  \;=:\; \deltaT(x, x').
  \label{eq:normalized-isometry}
\end{equation}

\begin{proposition}[Norm bound]
\label{prop:norm-bound}
Under \cref{ass:zero-leaf-weights}, for every finite forest $T$ with
$\ST > 0$,
\begin{equation}
  \sup_{x \in \Xspace} \norm{\Phinode(x)}_2 \;\le\; 1.
\end{equation}
\end{proposition}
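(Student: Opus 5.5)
The plan is to compute $\norm{\phit(x)}_2^2$ exactly for each tree and point, show that it equals the total node weight along the root-to-leaf path $\pi_t(x)$, and then bound this by the tree mass $A_t$ of \cref{eq:tree-mass}. Summing over trees gives a bound by $\ST$, and the normalisation \cref{eq:forest-embedding-normalized} finishes the argument.

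\textbf{Step 1 (which coordinates are nonzero).} Fix a tree $t$ and a point $x$, and write $\pi_t(x) = (u_0 = r_t, u_1, \dots, u_L = \ell_t(x))$.
\begin{itemize}
  \item The root coordinate contributes $\nodew_t(r_t)/2$.
  \item A non-root node $v$ has $\one\set{x \in S_v} = 1$ exactly when $v$ lies on $\pi_t(x)$. This follows from the nested-partition property of \cref{def:recursive-binary-partition}: the sets $S_v$ at each depth are disjoint, and each one is contained in the set of its parent.
\end{itemize}
Hence
\[
  \norm{\phit(x)}_2^2 = \frac{\nodew_t(u_0)}{2} + \sum_{j=1}^{L} \frac{\nodew_t(u_{j-1}) + \nodew_t(u_j)}{2}.
\]

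\textbf{Step 2 (telescoping the edge weights).} This mirrors the proof of \cref{lem:node-path-edge-sum}. In the sum over edges, every interior node $u_1, \dots, u_{L-1}$ appears twice with weight $1/2$. The endpoints $u_0$ and $u_L$ each appear once with weight $1/2$. The extra root term $\nodew_t(u_0)/2$ completes the root's contribution to a full $\nodew_t(u_0)$. The leaf $u_L$ has $\nodew_t(u_L) = 0$ by \cref{ass:zero-leaf-weights}. Therefore
\[
  \norm{\phit(x)}_2^2 = \sum_{j=0}^{L-1} \nodew_t(u_j) = \sum_{v \in \pi_t(x)} \nodew_t(v) \le A_t,
\]
where the inequality uses nonnegativity of the node weights.

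\textbf{Step 3 (from trees to the forest).} By the concatenation structure \cref{eq:forest-embedding},
\[
  \norm{\phiT(x)}_2^2 = \sum_{t \in T} \norm{\phit(x)}_2^2 \le \sum_{t \in T} A_t = \ST.
\]
Dividing by $\ST > 0$ gives $\norm{\Phinode(x)}_2^2 \le 1$ for every $x \in \Xspace$, and taking the supremum over $x$ proves the claim.

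\textbf{Main obstacle.} The only delicate point is the bookkeeping in Step 2. One must check that the constant root coordinate $\sqrt{\nodew_t(r_t)/2}$ exactly supplies the missing half of the root weight, and that the zero-leaf convention removes the dangling half-weight at the leaf. The degenerate case $L = 0$, where the root is itself a leaf, also needs a check. There the root has zero weight by \cref{ass:zero-leaf-weights}, so $\norm{\phit(x)}_2^2 = 0$ and the bound holds trivially.
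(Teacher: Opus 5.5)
Your proposal is correct and follows essentially the same argument as the paper's proof in \cref{app:norm-bound}. Both identify the nonzero coordinates along the root-to-leaf path, telescope the edge weights using the constant root coordinate and the zero leaf weight to get $\sum_{v \in \pi_t(x)} \nodew_t(v) \le A_t$, then sum over trees and divide by $\ST$; your explicit check of the degenerate case $L = 0$ is a small addition the paper leaves implicit.
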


\begin{proof}
The proof is given in \cref{app:norm-bound}. It is a direct
telescoping calculation along the root-to-leaf path followed by $x$
in each tree.
\end{proof}

\begin{remark}[Two normalizations used later]
\label{rem:two-normalizations}
For a fixed trained forest, the global normalization above is the
natural one: it gives a single feature vector $\Phinode(x)$ for
learning. For concentration over randomized trees, it is often
cleaner to normalize each tree separately by $A_t$ and then average
its Gram contribution. \Cref{sec:gram-concentration} makes this
distinction explicit to avoid mixing a random global denominator
with treewise averages.
\end{remark}

\paragraph{Fixed-forest Gram and distance reconstruction.}
Given training inputs $x_1, \dots, x_n$, the globally normalized KPP
Gram matrix of a fixed forest $T$ is
\begin{equation}
  (\Knode)_{ij} \;:=\; \inner{\Phinode(x_i)}{\Phinode(x_j)},
  \qquad 1 \le i, j \le n.
  \label{eq:knode-def}
\end{equation}
The normalized KPP
distance on the training sample is reconstructed from $\Knode$ by
\begin{equation}
  \deltaT(x_i, x_j)
  \;=\; (\Knode)_{ii} + (\Knode)_{jj} - 2(\Knode)_{ij}.
  \label{eq:distance-from-knode}
\end{equation}
This identity is deterministic and follows by expanding the squared
norm $\norm{\Phinode(x_i) - \Phinode(x_j)}_2^2$.

\begin{corollary}[Gram positive semidefiniteness and conditional non-diagonality]
\label{cor:gram-psd-nondiagonal}
The fixed-forest Gram $\Knode$ of \cref{eq:knode-def} is positive
semidefinite: writing $\Phi \in \R^{n \times m}$ for the matrix with
rows $\Phinode(x_i)^\top$, one has $\Knode = \Phi \Phi^\top$ by
\cref{def:tree-embedding,cor:kpp-isometry-forest}, so $\Knode$ is a
Gram matrix. Moreover, expanding \cref{eq:knode-def} over the node
coordinates gives, for all $i, j$,
\begin{equation}
  (\Knode)_{ij}
  \;=\; \frac{1}{\ST} \sum_{t \in T}
    \biggl(
      \frac{\nodew_t(r_t)}{2}
      + \sum_{v \in V_t \setminus \set{r_t}}
        \edgew_t(v)\,
        \one\set{x_i \in S_v}\,\one\set{x_j \in S_v}
    \biggr),
  \label{eq:knode-offdiag}
\end{equation}
a sum of nonnegative terms in which the indicator product is one
exactly when $v$ lies on both root-to-leaf paths, i.e.\ when $v$ is a
node common to $x_i$ and $x_j$. Hence $(\Knode)_{ij} \ge 0$, with
$(\Knode)_{ij} > 0$ if and only if $x_i$ and $x_j$ share at least one
common node of positive weight. Consequently $\Knode$ is non-diagonal
as soon as two distinct training points share a common node of
positive weight --- in particular the root $r_t$ whenever
$\nodew_t(r_t) > 0$; the off-diagonal entry $(\Knode)_{ij}$ vanishes
only when every node shared by $x_i$ and $x_j$ carries zero weight.
\end{corollary}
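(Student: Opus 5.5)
The plan is to argue directly from \cref{def:tree-embedding} and the normalisation \cref{eq:forest-embedding-normalized}, in three steps: positive semidefiniteness, the coordinate expansion \cref{eq:knode-offdiag}, and then the sign and positivity analysis of that expansion.

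\textbf{Positive semidefiniteness.} Stack the vectors $\Phinode(x_i)^\top$ as the rows of $\Phi \in \R^{n \times m}$. Then \cref{eq:knode-def} reads $\Knode = \Phi\Phi^\top$. For any $c \in \R^n$ this gives $c^\top \Knode c = \norm{\Phi^\top c}_2^2 \ge 0$, so $\Knode$ is positive semidefinite. Nothing beyond the definition is needed here; the isometry of \cref{cor:kpp-isometry-forest} only explains why this Gram matrix carries the metric, through \cref{eq:distance-from-knode}.

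\textbf{Coordinate expansion.} I will write the inner product as a sum over coordinates. Because $\phiT$ is a concatenation over trees, the inner product splits into a sum over $t \in T$, with the common factor $1/\ST$ coming from the normalisation.
\begin{itemize}
\item Within a tree, the root coordinates multiply to $\nodew_t(r_t)/2$.
\item Each non-root $v$ contributes $\edgew_t(v)\,\one\set{x_i \in S_v}\one\set{x_j \in S_v}$.
\end{itemize}
This is exactly \cref{eq:knode-offdiag}. Next I will check that $x \in S_v$ holds iff $v \in \pi_t(x)$. This follows from the nested-partition property of \cref{def:recursive-binary-partition}: by induction from the root, $x$ lies in exactly one child set at each internal node. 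Hence the indicator product equals one iff $v$ lies on both root-to-leaf paths. Every summand is nonnegative because $\nodew_t \ge 0$ forces $\edgew_t \ge 0$, so $(\Knode)_{ij} \ge 0$.

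\textbf{The ``if and only if''.} This is the only step needing care, since the non-root coefficients are edge weights $\edgew_t(v)$, not node weights $\nodew_t(v)$.

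For ($\Leftarrow$), suppose $x_i, x_j$ share a node $v$ with $\nodew_t(v) > 0$.
\begin{itemize}
\item If $v = r_t$, the root term $\nodew_t(r_t)/2$ is positive.
\item Otherwise $\edgew_t(v) \ge \nodew_t(v)/2 > 0$, and $v$ is shared, so its summand is positive.
\end{itemize}

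For ($\Rightarrow$), suppose the sum is positive, so some summand is positive.
\begin{itemize}
\item If it is the root term, then $\nodew_t(r_t) > 0$, and the root is shared by all points.
\item If it is the summand of a shared non-root $v$ with $\edgew_t(v) > 0$, then $\nodew_t(v) > 0$ or $\nodew_t(\parnode(v)) > 0$. The parent $\parnode(v)$ is an ancestor of a shared node, so it lies on both root-to-leaf paths and is shared too. Either way a shared node of positive weight exists.
\end{itemize}

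\textbf{Consequences.} Non-diagonality for distinct $i \ne j$ is immediate from the equivalence. The root case is the special instance $\nodew_t(r_t) > 0$, since the root is common to every pair. The vanishing statement is the contrapositive of ($\Leftarrow$).
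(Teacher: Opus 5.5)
Your proposal is correct and follows the same route as the paper: the factorisation $\Knode = \Phi\Phi^\top$ gives positive semidefiniteness, the inner product is expanded coordinatewise, and the conclusions come from the signs of the summands. Your explicit handling of the ($\Rightarrow$) direction fills in what the paper compresses into ``read off the signs of the summands'': a positive edge weight $\edgew_t(v)$ on a shared node is traced back to a positive node weight at $v$ or at its parent, which is necessarily shared.
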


\begin{proof}
The factorisation $\Knode = \Phi \Phi^\top$ is immediate from
\cref{eq:knode-def}, and a Gram matrix is positive semidefinite.
\Cref{eq:knode-offdiag} follows by expanding the inner product
$\inner{\Phinode(x_i)}{\Phinode(x_j)}$ over the coordinates of
\cref{def:tree-embedding}, using
$\one\set{x_i \in S_v}\,\one\set{x_j \in S_v} = \one\set{x_i, x_j \in S_v}$;
the remaining claims read off the signs of the summands.
\end{proof}

Unlike a leaf-co-membership similarity, which records only
terminal-node agreement, $\Knode$ is an explicit node-level Gram.

\begin{remark}[Path-overlap granularity of the reconstructed distance]
\label{rem:path-overlap-distance}
The distance reconstructed in \cref{eq:distance-from-knode} resolves
how two inputs' root-to-leaf paths diverge. The constant root
coordinate cancels there, and by
\cref{thm:kpp-isometry-tree,cor:kpp-isometry-forest} the quantity
$\deltaT(x_i, x_j)$ is the normalized, edge-weighted length of the
internal-node paths separating $x_i$ and $x_j$ across the forest --- a
real-valued quantity determined by which internal nodes the two
trajectories share and by their edge weights. A leaf-co-membership
similarity is instead binary per tree, recording only whether two
inputs reach the same leaf; it therefore cannot distinguish inputs that
diverge high in a tree from inputs that diverge just above the leaves,
discarding the within-path structure that the edge-weighted
internal-node coordinates of \cref{def:tree-embedding} --- retained
alongside the leaf coordinates --- make explicit. Complementing the
Gram-side statement of \cref{cor:gram-psd-nondiagonal}, on the distance
side the forest aggregates these graded per-tree separations
(\cref{eq:forest-embedding-normalized}), whereas a leaf-co-membership
similarity would aggregate binary same-leaf indicators.
\end{remark}

\subsection{Finite-forest concentration}
\label{sec:gram-concentration}

For concentration over randomized trees, consider independent trees
$t_1, \dots, t_K$ generated conditionally on the training inputs.
For a single tree $t$, assume its total mass
$A_t = \sum_{v \in V_t} \nodew_t(v)$ is positive and define the
tree-normalized embedding
\begin{equation}
  \bar\phi_t(x) \;:=\; \frac{\phit(x)}{\sqrt{A_t}}.
  \label{eq:tree-normalized-embedding}
\end{equation}
Let $H_t \in \R^{n \times n}$ be the corresponding single-tree Gram,
\begin{equation}
  (H_t)_{ij} \;:=\; \inner{\bar\phi_t(x_i)}{\bar\phi_t(x_j)}.
  \label{eq:single-tree-gram}
\end{equation}
By \cref{prop:norm-bound} applied to a single tree,
$\norm{\bar\phi_t(x)}_2 \le 1$, hence $\abs{(H_t)_{ij}} \le 1$.

Define the averaged Gram and its infinite-randomization limit (conditional on the training inputs) by
\begin{equation}
  \GK \;:=\; \frac{1}{K} \sum_{k=1}^{K} H_{t_k},
  \qquad
  \Ginf \;:=\; \E\!\left[H_t \mid x_1, \dots, x_n\right].
  \label{eq:gk-ginf-def}
\end{equation}
The associated averaged normalized distance is
\begin{equation}
  \bar\delta_K(x, x')
  \;:=\; \frac{1}{K} \sum_{k=1}^{K} \frac{d_{t_k}(x, x')}{A_{t_k}}.
  \label{eq:averaged-distance}
\end{equation}
On the training sample, $\bar\delta_K$ is reconstructed from $\GK$
by the same formula:
\begin{equation}
  \bar\delta_K(x_i, x_j)
  \;=\; (\GK)_{ii} + (\GK)_{jj} - 2(\GK)_{ij}.
  \label{eq:averaged-distance-from-gk}
\end{equation}

\begin{remark}[Relation with the globally normalized forest Gram]
\label{rem:global-vs-averaged-gram}
The globally normalized Gram of the concatenated forest satisfies
\begin{equation}
  \Knode \;=\; \sum_{t \in T} \frac{A_t}{\ST}\, H_t.
  \label{eq:knode-as-weighted-average}
\end{equation}
Thus $\Knode$ is a mass-weighted average of the tree-normalized
Grams. If the tree masses $A_t$ are equal, then $\Knode = \GK$. In
general they are different but closely related objects. The
fixed-forest learning sections may use $\Knode$, whereas
concentration statements are cleaner for $\GK$.
\end{remark}

\paragraph{Entrywise concentration.}

\begin{proposition}[Entrywise finite-forest concentration]
\label{prop:entrywise-concentration}
Condition on the training inputs $x_1, \dots, x_n$. Assume
$t_1, \dots, t_K$ are independent under this conditioning, and
assume $\abs{(H_{t_k})_{ij}} \le 1$ almost surely for every
$i, j, k$. This independence holds when the trees are drawn independently under the
conditioning. Under label-independent randomization the tree draw does
not use the labels, so the trees are independent conditionally on the
inputs $x_1, \dots, x_n$ alone and $\Ginf$ coincides with
$\E[H_t \mid x_1, \dots, x_n]$. Under the honest or cross-fit boundaries
PB-02, PB-03 of \cref{sec:proof-boundary} the trees are grown on the
partition fold --- whose labels they do use --- but are independent of
the fit-fold labels conditionally on the partition-fold data, so $\Ginf$
is read as the corresponding partition-conditional infinite-randomization
Gram. For a label-dependent forest (PB-04) the same concentration holds
conditionally on the full training sample $(x_i, y_i)_{i=1}^n$, with
$\Ginf$ read as the corresponding infinite-randomization Gram; the
operator-norm diagnostic of \cref{rem:operator-norm-diagnostic} is read
under the same convention.
Then, for every fixed pair $(i, j)$ and every
$\epsilon > 0$,
\begin{equation}
  \Prob\!\left(\abs{(\GK)_{ij} - (\Ginf)_{ij}} \ge \epsilon\right)
  \;\le\; 2 \exp\!\left(-\frac{K \epsilon^2}{2}\right).
\end{equation}
Consequently, with probability at least $1 - \delta$,
\begin{equation}
  \max_{1 \le i, j \le n} \abs{(\GK)_{ij} - (\Ginf)_{ij}}
  \;\le\;
  \sqrt{\frac{2}{K} \log \frac{2 n^2}{\delta}}.
\end{equation}
\end{proposition}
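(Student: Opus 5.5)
The plan is Hoeffding's inequality applied entry by entry, followed by a union bound over the $n^2$ index pairs.

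\textbf{Step 1: identify the summands.} Fix a pair $(i,j)$ and work under the stated conditioning: on the inputs alone, on the partition-fold data, or on the full sample, according to the regime. Set $Z_k := (H_{t_k})_{ij}$ for $k = 1, \dots, K$. By assumption the $Z_k$ are conditionally independent and satisfy $Z_k \in [-1,1]$ almost surely. Since the trees are drawn from a common randomization law under the conditioning, each has conditional mean $\E[Z_k \mid \cdot] = (\Ginf)_{ij}$; this is exactly the definition in \cref{eq:gk-ginf-def}. Then $(\GK)_{ij} - (\Ginf)_{ij} = \frac{1}{K}\sum_k (Z_k - \E Z_k)$ is a centred average of independent bounded variables.

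\textbf{Step 2: Hoeffding.} Each $Z_k$ lies in an interval of length $2$, so Hoeffding's inequality gives
\[
\Prob\Bigl(\Bigl|\tfrac1K \textstyle\sum_k (Z_k - \E Z_k)\Bigr| \ge \epsilon\Bigr) \le 2\exp\bigl(-2K^2\epsilon^2 / (K \cdot 2^2)\bigr) = 2\exp(-K\epsilon^2/2).
\]
This is the first display. The bound holds conditionally, and therefore also unconditionally after integrating.

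\textbf{Step 3: union bound.} Apply the first display to all $n^2$ ordered pairs; symmetry of the Grams would allow $n(n+1)/2$ pairs, but $n^2$ suffices. This gives
\[
\Prob\bigl(\max_{i,j} |(\GK)_{ij} - (\Ginf)_{ij}| \ge \epsilon\bigr) \le 2n^2 \exp(-K\epsilon^2/2).
\]
Setting the right-hand side equal to $\delta$ and solving yields $\epsilon = \sqrt{(2/K)\log(2n^2/\delta)}$, which is the second display. On the complementary event the strict inequality holds, so the stated $\le$ follows.

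\textbf{Main obstacle.} The only delicate point is Step 1: the $Z_k$ must share the mean $(\Ginf)_{ij}$ under the same conditioning $\sigma$-field that makes them independent. Boundedness is not at issue, since it is assumed, and by \cref{prop:norm-bound} it holds automatically. I will make this explicit by treating the trees as i.i.d.\ draws given the conditioning data in each regime: the inputs in the label-independent case, the partition fold for PB-02 and PB-03, and the full sample for PB-04. In each case I read $\Ginf$ as that regime's conditional expectation, and Hoeffding applies verbatim.
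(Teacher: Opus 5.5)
Your proposal is correct and follows the paper's proof: Hoeffding's inequality applies to the conditionally independent entries $(H_{t_k})_{ij} \in [-1,1]$, which have range length $2$, and gives the first display; a union bound over the $n^2$ entries then gives the second. You also state explicitly that the trees share a common conditional law, so that each entry has mean $(G_\infty)_{ij}$; the paper leaves this implicit in its definition of $G_\infty$.
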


\begin{proof}
For fixed $(i, j)$, the random variables $(H_{t_k})_{ij}$ are
conditionally independent and lie in $[-1, 1]$. Hoeffding's
inequality for variables with range length $2$ gives the first
display. The second follows by a union bound over the $n^2$ matrix
entries.
\end{proof}

\begin{corollary}[Uniform distance concentration from the Gram]
On the event of \cref{prop:entrywise-concentration}, the averaged
normalized distances satisfy
\begin{equation}
  \max_{1 \le i, j \le n}
  \abs{\bar\delta_K(x_i, x_j) - \bar\delta_\infty(x_i, x_j)}
  \;\le\;
  4 \sqrt{\frac{2}{K} \log \frac{2 n^2}{\delta}},
\end{equation}
where $\bar\delta_\infty$ is reconstructed from $\Ginf$.
\end{corollary}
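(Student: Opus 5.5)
The plan is to write both $\bar\delta_K$ and $\bar\delta_\infty$ as the same linear functional of their Gram matrices, and then apply the triangle inequality on the event of \cref{prop:entrywise-concentration}. On the training sample, \cref{eq:averaged-distance-from-gk} gives
\[
  \bar\delta_K(x_i, x_j) \;=\; (\GK)_{ii} + (\GK)_{jj} - 2(\GK)_{ij}.
\]
By definition, the limit distance $\bar\delta_\infty$ is reconstructed from $\Ginf$ by the same formula, $\bar\delta_\infty(x_i,x_j) = (\Ginf)_{ii} + (\Ginf)_{jj} - 2(\Ginf)_{ij}$. I would also note the consistency check: this agrees with $\E[d_t(x_i,x_j)/A_t \mid x_{1:n}]$. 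The reason is that each $H_t$ reconstructs $d_t/A_t$ by \cref{thm:kpp-isometry-tree}, and the reconstruction is linear, so it commutes with the conditional expectation. The statement itself only needs the reconstruction definition.

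Subtracting the two identities gives
\[
  \bar\delta_K(x_i,x_j) - \bar\delta_\infty(x_i,x_j) \;=\; \Delta_{ii} + \Delta_{jj} - 2\Delta_{ij},
  \qquad \Delta := \GK - \Ginf .
\]
By the triangle inequality, the absolute value is at most $|\Delta_{ii}| + |\Delta_{jj}| + 2|\Delta_{ij}| \le 4\max_{k,l}|\Delta_{kl}|$.

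On the event of \cref{prop:entrywise-concentration}, which has probability at least $1-\delta$, we have $\max_{k,l}|\Delta_{kl}| \le \sqrt{(2/K)\log(2n^2/\delta)}$. This event controls all entries at once, diagonal and off-diagonal, so the bound holds simultaneously for every pair $(i,j)$. Taking the maximum over $(i,j)$ gives the claimed inequality with constant $4$.

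None of the steps is a real obstacle. The only point needing care is that the union bound in \cref{prop:entrywise-concentration} must cover the diagonal entries as well as the off-diagonal ones. It does, since the union is over all $n^2$ entries, so no additional probability cost arises.
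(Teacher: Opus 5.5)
Your proposal is correct and follows essentially the same argument as the paper: the reconstruction $G_{ii}+G_{jj}-2G_{ij}$ is linear in $G$, so an entrywise error of at most $\eta$ gives a distance error of at most $4\eta$. The event of \cref{prop:entrywise-concentration} then supplies $\eta=\sqrt{(2/K)\log(2n^2/\delta)}$ simultaneously for all $n^2$ entries, diagonal ones included.
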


\begin{proof}
For any Gram matrix $G$, the induced squared distance is
$G_{ii} + G_{jj} - 2 G_{ij}$. Therefore a uniform entrywise error at
most $\eta$ implies a distance error at most $4\eta$. Apply
\cref{prop:entrywise-concentration} with
$\eta = \sqrt{(2/K) \log(2 n^2 / \delta)}$.
\end{proof}

\begin{remark}[Operator-norm concentration and half-forest diagnostics]
\label{rem:operator-norm-diagnostic}
Matrix Bernstein inequalities \citep{Tropp2012MatrixBernstein} can provide operator-norm bounds for
$\norm{\GK - \Ginf}_{\op}$ under additional assumptions on the
per-tree Gram matrices. The half-forest discrepancy
\[
  \Deltak \;:=\; \frac{\norm{\GK^{(1)} - \GK^{(2)}}_{\op}}{\sqrt{2}}
\]
is useful as a measurable stability diagnostic. It should not be
substituted into a theorem as an upper bound on
$\norm{\GK - \Ginf}_{\op}$ unless a calibrated high-probability
comparison, with constants and probability level, has been proved
in the relevant section.
\end{remark}

\section{Exact additive attribution}
\label{sec:attribution}

Let $g(x) = b + \inner{w}{\Phinode(x)}$ be any linear model in the
normalized KPP representation, with $w \in \R^m$ and $b \in \R$.
Because the coordinates of $\Phinode$ are indexed by tree nodes, the
score $g$ admits an exact additive decomposition over nodes, and
hence over input variables.

\subsection{Node-level additive attribution}
\label{sec:attribution-node}

\begin{proposition}[Node-level additive attribution]
\label{prop:attribution-node}
For every $x \in \Xspace$,
\begin{equation}
  g(x) \;=\; b + \sum_{u} w_u \, \Phinode{}_{u}(x),
  \label{eq:attribution-node}
\end{equation}
where the sum runs over all coordinates $u$ of the forest embedding,
that is, over all tree nodes whose coordinates are retained by
\cref{def:tree-embedding}.
\end{proposition}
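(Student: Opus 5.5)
The identity in \cref{prop:attribution-node} is essentially the definition of the Euclidean inner product written out in the coordinate system of \cref{def:tree-embedding}, so I will prove it by unwinding definitions rather than by any estimate.

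\emph{Step 1 (coordinate indexing).} Make explicit the index set of $\Phinode$. By \cref{eq:forest-embedding} and \cref{eq:forest-embedding-normalized}, $\Phinode(x) = \phiT(x)/\sqrt{\ST}$, where $\phiT(x)$ is the concatenation over $t \in T$ of the vectors $\phit(x) \in \R^{\abs{V_t}}$. Hence the coordinates of $\Phinode$ are in bijection with the disjoint union $\bigsqcup_{t \in T} V_t$ of pairs $(t, v)$, which I abbreviate by $u$, and $m = \sum_{t} \abs{V_t}$. This includes the root coordinates and the leaf coordinates. The $u$-th coordinate is
\[
  \Phinode{}_{u}(x) = \frac{\phi_{t,v}(x)}{\sqrt{\ST}},
\]
and $w \in \R^m$ is indexed the same way.

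\emph{Step 2 (expansion).} Write $\inner{w}{\Phinode(x)} = \sum_{u} w_u \Phinode{}_{u}(x)$. This is the standard inner product on $\R^m$ in this coordinate ordering; it is invariant under the relabelling in Step 1 because a permutation of coordinates preserves the dot product. Adding $b$ gives \cref{eq:attribution-node} for every $x \in \Xspace$. Substituting \cref{def:tree-embedding} optionally yields the explicit form
\[
  g(x) = b + \frac{1}{\sqrt{\ST}} \sum_{t \in T} \Bigl( w_{t,r_t} \sqrt{\nodew_t(r_t)/2} + \sum_{v \ne r_t} w_{t,v} \sqrt{\edgew_t(v)}\, \one\set{x \in S_v} \Bigr),
\]
in which only nodes on the root-to-leaf paths $\pi_t(x)$ contribute non-constant terms.

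\emph{Main obstacle.} There is no analytic difficulty. The only care needed is bookkeeping: the sum must run over all retained coordinates, including the constant root coordinate and the leaf coordinates. Zero-weight nodes, whose coordinates vanish identically, may be included or dropped without changing the sum. The statement requires $\ST > 0$ so that $\Phinode$ is well defined, and I would record that assumption explicitly.
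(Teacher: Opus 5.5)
Your proposal is correct and takes the same approach as the paper, whose proof is the one-line observation that the identity is the coordinate expansion of $\inner{w}{\Phinode(x)}$ along the canonical basis of $\R^m$; your indexing of the coordinates and the explicit substituted form are accurate. One small correction to your aside: the coordinates that vanish identically, and so can be dropped, are those with $\edgew_t(v) = 0$ (or a root with $\nodew_t(r_t) = 0$), not all nodes with $\nodew_t(v) = 0$. Under \cref{ass:zero-leaf-weights} every leaf has zero node weight, yet its coordinate $\sqrt{\nodew_t(\parnode(v))/2}\,\one\set{x \in S_v}$ need not vanish.
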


\begin{proof}
The displayed identity is the coordinate expansion of the inner
product $\inner{w}{\Phinode(x)}$ along the canonical basis of
$\R^m$.
\end{proof}

\subsection{Variable-level additive decomposition by marginal split}
\label{sec:attribution-variable}

\begin{proposition}[Variable-level additive decomposition by marginal split]
\label{prop:attribution-variable}
Let $\operatorname{split}(u)$ denote the input variable used at the internal
node $u$. For a non-root node $v$, the membership indicator $\one\set{x \in S_v}$
is governed by the split at the parent edge $(\parnode(v), v)$; write
$\operatorname{split}(\parnode(v))$ for that variable. Define the marginal-split
contribution
\begin{equation}
  \Psi_j^{\mathrm{edge}}(x) \;:=\; \sum_{\substack{v \text{ not a root}\\ \operatorname{split}(\parnode(v)) = j}}
                    w_v \, \Phinode{}_{v}(x), \qquad j = 1, \dots, p,
  \label{eq:variable-contribution}
\end{equation}
the sum running over all non-root nodes of the forest, leaves included. Then,
for every $x \in \Xspace$,
\begin{equation}
  g(x) \;=\; c_0 + \sum_{j=1}^{p} \Psi_j^{\mathrm{edge}}(x), \qquad
  c_0 \;:=\; b + \sum_{v \text{ a root}} w_v \, \Phinode{}_{v}(x),
  \label{eq:attribution-variable}
\end{equation}
where $c_0$ collects the intercept and the constant root coordinates, the only
coordinates with no parent-edge split.
\end{proposition}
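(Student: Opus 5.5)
The identity is a regrouping of the finite sum in \cref{prop:attribution-node}, so I would begin from
\[
  g(x) = b + \sum_{u} w_u\,\Phinode{}_u(x),
\]
where $u$ ranges over every coordinate retained by \cref{def:tree-embedding}. By that definition, the coordinates of the concatenated embedding are indexed by pairs $(t,v)$ with $t \in T$ and $v \in V_t$. This index set splits disjointly into two parts: the roots $r_t$, one per tree, and the non-root nodes of all trees, leaves included. I would write the coordinate sum as the root part plus the non-root part. Adding $b$ to the root part gives exactly $c_0$ as defined in \cref{eq:attribution-variable}.

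The non-root part then has to be partitioned by variable. For each non-root $v$, its parent $\parnode(v)$ is an internal node, because it has a child. By the standing convention of \cref{sec:setup-forests}, every internal node carries an axis-aligned split, so $\operatorname{split}(\parnode(v)) \in \{1,\dots,p\}$ is well defined and unique. The map $v \mapsto \operatorname{split}(\parnode(v))$ therefore partitions the non-root coordinates into $p$ disjoint classes, some possibly empty, in which case $\Psi_j^{\mathrm{edge}} \equiv 0$. Since the sums are finite, I would regroup the non-root sum class by class. The $j$-th class sum is, by definition, $\Psi_j^{\mathrm{edge}}(x)$ from \cref{eq:variable-contribution}. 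Combining this with the previous paragraph yields \cref{eq:attribution-variable}.

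The only step needing care is well-definedness. First, the parent of every non-root node must be internal and carry exactly one split variable. Second, the root coordinates must be the only ones without a parent edge, and they are constant, namely $\sqrt{\nodew_t(r_t)/(2\ST)}$. This justifies calling $c_0$ the constant part, even though the statement writes it with an $x$ argument. Both points follow directly from \cref{def:recursive-binary-partition} and \cref{def:tree-embedding}, so I expect no real obstacle. The proof is a bookkeeping regrouping; it needs no isometry or weight assumption, and it holds for arbitrary $w$ and $b$.
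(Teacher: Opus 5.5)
Your proposal is correct and takes the same route as the paper. You start from the node-level identity of \cref{prop:attribution-node}, collect the constant root coordinates with $b$ into $c_0$, and partition the non-root coordinates by the parent-edge split variable $\operatorname{split}(\parnode(v))$; your checks that every parent is internal with a unique split, and that empty classes give $\Psi_j^{\mathrm{edge}} \equiv 0$, just spell out what the paper's proof takes for granted.
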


\begin{proof}
By \cref{prop:attribution-node}, $g(x) = b + \sum_{v} w_v \Phinode{}_{v}(x)$ over
all retained coordinates. Each non-root coordinate $\Phinode{}_{v}$ is governed
by the split at its parent edge $(\parnode(v), v)$, hence is associated with
exactly one variable $\operatorname{split}(\parnode(v))$; the sum over non-root
nodes therefore partitions into $\sum_{j=1}^{p} \Psi_j^{\mathrm{edge}}(x)$. The
root coordinates are constant and have no parent edge; they are collected with
the intercept into $c_0$.
\end{proof}

\paragraph{Centered variant.}
A centered variant of \cref{eq:attribution-node} uses the empirical
baseline $\bar\phi := n^{-1} \sum_{i=1}^{n} \Phinode(x_i)$:
\begin{equation}
  g(x) \;=\; b + \inner{w}{\bar\phi}
              + \sum_{u} w_u \bigl(\Phinode{}_{u}(x) - \bar\phi_u\bigr).
  \label{eq:attribution-centered}
\end{equation}
This is the exact linear-attribution form in the KPP feature space:
the intercept-like term $b + \inner{w}{\bar\phi}$ is constant in $x$,
and each node contributes its own
$w_u\bigl(\Phinode{}_u(x) - \bar\phi_u\bigr)$ to the deviation of
$g(x)$ from the dataset mean.

\begin{remark}[What the marginal-split decomposition requires]
\label{rem:attribution-split-coordinates}
The decomposition of \cref{prop:attribution-variable} is a functional of the
forest coordinates together with the parent-edge map $v \mapsto \operatorname{split}(\parnode(v))$
supplied by the tree structure. Every non-root coordinate, leaves included, is
assigned to the variable of its parent edge; only the constant root coordinates
carry no split and remain in $c_0$. The decomposition therefore needs more than
the leaf-membership pattern: it needs the tree structure that records which split
governs each node. A leaf-only encoding, which retains only the indicators
$\one\set{x \in S_\ell}$ of the terminal cells without the edge structure that
labels their ancestor splits, admits no analogous variable-level decomposition,
even though it can reproduce the predictor itself.
\end{remark}

\begin{remark}[Identifiability: the decomposition is parametrization-dependent]
\label{rem:attribution-identifiability}
The node coordinates are linearly dependent: at any internal node the two child
indicators sum to the parent indicator, $\one\set{x \in S_a} + \one\set{x \in S_b}
= \one\set{x \in S_{\parnode(a)}}$, and the root indicator is constant. The KPP
design is therefore overcomplete and the weight vector $w$ is not unique --- many
$w$ realise the same predictor $g$. Both the node-level identity of
\cref{prop:attribution-node} and the marginal-split decomposition of
\cref{prop:attribution-variable} are exact for any such $w$, so the predictor is
decomposed exactly regardless; but the particular contributions $w_v \Phinode{}_v$
and $\Psi_j^{\mathrm{edge}}$ are those of the minimum-norm solution selected by the
regularized objective, not an intrinsic property of the function. Reported
attributions should therefore be read as attributions of the regularized
parametrization.
\end{remark}

\begin{remark}[Marginal-split, not axiomatic, attribution]
\label{rem:attribution-not-axiomatic}
The decomposition is marginal in the terminal split: $\Psi_j^{\mathrm{edge}}$
collects the coordinates whose parent edge tests variable $j$, so it attributes
to the last split on the path to each node rather than allocating value over
coalitions of variables. Two consequences distinguish it from an axiomatic
attribution such as a Shapley value. First, membership $\one\set{x \in S_v}$
encodes the conjunction of all splits from the root to $v$, so the contribution
of a variable is conditional on the variables tested above it and carries no
symmetry or dummy guarantee. Second, the edge weight $\edgew_t(v)$ averages the
parent and child node weights, so the scale of a contribution is not intrinsic
to the variable. The decomposition is therefore an exact, path-dependent
accounting of the fitted predictor, not an axiomatic feature attribution, and
should not be read as a Shapley or SHAP decomposition.
\end{remark}

\section{Deterministic Lipschitz robust radius}
\label{sec:robust-radius}

The KPP metric provides a deterministic Lipschitz certificate for any
linear score in the normalized representation. The certificate is in
the embedding metric $\sqrt{\deltaT}$ (whose square is the path distance
$\deltaT$), not in the raw input metric on $\Xspace$;
it should not be confused with robustness to arbitrary $\ell_p$
perturbations of the input.

\begin{proposition}[Lipschitz control in the KPP metric]
\label{prop:lipschitz-kpp}
Let $g(x) = b + \inner{w}{\Phinode(x)}$ be a linear model in the
normalized KPP representation. For all $x, x' \in \Xspace$,
\begin{equation}
  \abs{g(x) - g(x')}
  \;\le\;
  \norm{w}_2 \sqrt{\deltaT(x, x')}.
  \label{eq:lipschitz-kpp}
\end{equation}
\end{proposition}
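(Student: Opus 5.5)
The plan is to reduce the statement to Cauchy--Schwarz in $\R^m$ combined with the normalized isometry of \cref{eq:normalized-isometry}.

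First, the intercept $b$ is common to $g(x)$ and $g(x')$, so it cancels in the difference. This gives
\[
  g(x) - g(x') \;=\; \inner{w}{\Phinode(x) - \Phinode(x')}.
\]
No assumption on $b$ or on the sign of the scores is needed.

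Second, apply the Cauchy--Schwarz inequality in $\R^m$:
\[
  \abs{g(x) - g(x')} \;\le\; \norm{w}_2\, \norm{\Phinode(x) - \Phinode(x')}_2 .
\]
Third, substitute the squared-Euclidean isometry. By \cref{cor:kpp-isometry-forest}, rescaled by $\ST$ as in \cref{eq:normalized-isometry}, we have $\norm{\Phinode(x) - \Phinode(x')}_2^2 = \deltaT(x, x')$. Taking the nonnegative square root gives $\norm{\Phinode(x) - \Phinode(x')}_2 = \sqrt{\deltaT(x, x')}$, which yields \cref{eq:lipschitz-kpp}.

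There is no real obstacle. The only thing to check is that the isometry is available under the paper's standing conventions. This means \cref{ass:zero-leaf-weights} must hold tree by tree and $\ST > 0$, so that $\Phinode$ and $\deltaT$ are well defined; these hypotheses are implicit in calling $\Phinode$ the normalized representation.

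It is also worth noting that $\deltaT$ is only a pseudo-distance. The bound correctly forces $g(x) = g(x')$ whenever $\deltaT(x,x') = 0$, since then the two embeddings coincide.
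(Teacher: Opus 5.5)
Your proposal is correct and follows the paper's own proof: the intercept cancels, Cauchy--Schwarz gives $\abs{g(x)-g(x')} \le \norm{w}_2 \norm{\Phinode(x)-\Phinode(x')}_2$, and the normalized isometry of \cref{eq:normalized-isometry} turns the last factor into $\sqrt{\deltaT(x,x')}$. Your note on the standing hypotheses (\cref{ass:zero-leaf-weights} tree by tree and $\ST > 0$) is an accurate reading of what that isometry needs.
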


\begin{proof}
By Cauchy--Schwarz and the normalized isometry of
\cref{eq:normalized-isometry},
\begin{equation*}
  \abs{g(x) - g(x')}
  \;=\; \abs{\inner{w}{\Phinode(x) - \Phinode(x')}}
  \;\le\; \norm{w}_2 \norm{\Phinode(x) - \Phinode(x')}_2
  \;=\; \norm{w}_2 \sqrt{\deltaT(x, x')}.
\end{equation*}
\end{proof}

\begin{remark}[The certificate is in the embedding metric, not its square]
\label{rem:deltaT-half-metric}
The induced metric is the embedding distance
$\sqrt{\deltaT(x, x')} = \norm{\Phinode(x) - \Phinode(x')}_2$, of which
$\deltaT$ is the square. Although $\deltaT$ is thus a squared Euclidean
distance, it is itself a pseudometric: by \cref{thm:kpp-isometry-tree}
it equals a non-negative weighted sum of cut pseudometrics, hence
satisfies the triangle inequality, and both $\deltaT$ and its square
root $\sqrt{\deltaT}$ are pseudometrics. The certificate
\cref{eq:lipschitz-kpp} is thus $\norm{w}_2$-Lipschitz in
$\sqrt{\deltaT}$, equivalently H\"older-$1/2$ in $\deltaT$. The robust
radius is correspondingly $m / \norm{w}_2$ in $\sqrt{\deltaT}$, i.e.\
$(m / \norm{w}_2)^2$ in $\deltaT$, consistent with the thresholds
$\deltaT(x, x') < (m / \norm{w}_2)^2$ used below.
\end{remark}

In binary classification with $y \in \set{-1, +1}$, if the geometric
margin $m(x) = y\, g(x)$ is positive, then any $x'$ satisfying
$\deltaT(x, x') < (m(x) / \norm{w}_2)^2$ preserves the predicted
sign --- this is formalised in
\cref{sec:robust-radius-classification}. In regression, the same
inequality gives output stability:
$\deltaT(x, x') < (\epsilon / \norm{w}_2)^2$ implies
$\abs{g(x) - g(x')} < \epsilon$ --- this underlies the
$\mathrm{RobMSE}$ certificate in
\cref{sec:robust-radius-regression}.

\subsection{Regression: robust MSE certificate}
\label{sec:robust-radius-regression}

Let $L := \norm{\widehat w_{\lreg}}_2$. By \cref{prop:lipschitz-kpp}
applied to the ridge predictor $\widehat f(x) = \inner{\widehat w_{\lreg}}{\Phinode(x)}$,
\begin{equation}
  \abs{\widehat f(x) - \widehat f(x')}
  \;\le\;
  L \sqrt{\deltaT(x, x')}.
  \label{eq:reg-lipschitz}
\end{equation}
Thus if $\deltaT(x, x') < (\epsilon / L)^2$, then
$\abs{\widehat f(x) - \widehat f(x')} < \epsilon$.
We take $L > 0$ throughout; if $L = 0$ the predictor $\widehat f$ is
constant (as is the classification score $g_w$ when $w = 0$, giving a
constant margin), every prediction is invariant across any
$\deltaT$-ball, and every radius is trivially certified, with
$u_{\max}$ read as $+\infty$.

For residuals $e_i := y_i - \widehat f(x_i)$ on a sample $\set{(x_i, y_i)}_{i=1}^n$, a pointwise
robust squared-error upper bound at metric radius $u \in [0, 1]$ is
\begin{equation}
  \bigl(y_i - \widehat f(x')\bigr)^2
  \;\le\;
  e_i^2 + 2 \abs{e_i} L \sqrt{u} + L^2 u,
  \qquad
  \deltaT(x_i, x') \le u.
  \label{eq:reg-pointwise-robust}
\end{equation}
Define the empirical squared and absolute errors
\begin{equation}
  \mathrm{MSE} := \frac{1}{n} \sum_{i=1}^{n} e_i^2,
  \qquad
  \mathrm{MAE} := \frac{1}{n} \sum_{i=1}^{n} \abs{e_i}.
  \label{eq:mse-mae}
\end{equation}
Define the worst-case fixed-label error
$\mathrm{RobMSE}(u) := \frac{1}{n} \sum_{i=1}^{n}
\sup_{\deltaT(x_i, x') \le u} \bigl(y_i - \widehat f(x')\bigr)^2$.
Averaging \cref{eq:reg-pointwise-robust} gives the computable upper bound
\begin{equation}
  \mathrm{RobMSE}(u)
  \;\le\;
  \mathrm{MSE} + 2\,\mathrm{MAE}\,L \sqrt{u} + L^2 u.
  \label{eq:robust-mse}
\end{equation}
This is a deterministic certificate in the KPP metric; it should not
be confused with robustness to arbitrary perturbations in the raw
input metric. This curve measures the stability of the prediction against the fixed observed label: it equals robustness of the risk only when the true response is unchanged across the $\deltaT$-ball, and otherwise quantifies output stability rather than error degradation.

\subsection{Classification: robust radius and margin}
\label{sec:robust-radius-classification}

\paragraph{Deterministic margin robustness.}
Let
\[
  g_w(x) = \inner{w}{\Phinode(x)}
  \quad \text{and} \quad
  m_w(x, y) := y\, g_w(x)
\]
denote the score and the (geometric) margin in the KPP feature space,
with binary labels $y \in \set{-1, +1}$. The Lipschitz inequality
\cref{eq:lipschitz-kpp} specialised to $g_w$ reads
\begin{equation}
  \abs{g_w(x) - g_w(x')}
  \;\le\;
  \norm{w}_2 \sqrt{\deltaT(x, x')}.
  \label{eq:classification-kpp-lipschitz}
\end{equation}

\begin{proposition}[Certified KPP robust radius]
\label{prop:classification-robust-radius}
If $m_w(x, y) > 0$, then every $x' \in \Xspace$ satisfying
\begin{equation}
  \deltaT(x, x')
  \;<\;
  \left(\frac{m_w(x, y)}{\norm{w}_2}\right)^{2}
  \label{eq:classification-robust-radius}
\end{equation}
preserves the predicted sign: $\sign(g_w(x')) = \sign(g_w(x))$.
\end{proposition}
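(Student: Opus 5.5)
The plan is to reduce the sign-preservation claim to the Lipschitz inequality \cref{eq:classification-kpp-lipschitz}, which is \cref{prop:lipschitz-kpp} with $b=0$. First I dispose of the degenerate case. If $w = 0$ then $g_w \equiv 0$, so $m_w(x,y) = 0$, which contradicts the hypothesis $m_w(x,y) > 0$. Hence $\norm{w}_2 > 0$ and the threshold in \cref{eq:classification-robust-radius} is well defined.

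Next, since $y \in \set{-1,+1}$ and $y\,g_w(x) = m_w(x,y) > 0$, the score satisfies $g_w(x) \neq 0$, $\sign(g_w(x)) = y$ and $\abs{g_w(x)} = m_w(x,y)$. Take any $x'$ satisfying \cref{eq:classification-robust-radius}. Because both sides are nonnegative, taking square roots gives $\sqrt{\deltaT(x,x')} < m_w(x,y)/\norm{w}_2$. Combining this with \cref{eq:classification-kpp-lipschitz} yields
\[
  \abs{g_w(x') - g_w(x)} \;\le\; \norm{w}_2 \sqrt{\deltaT(x,x')} \;<\; m_w(x,y).
\]

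Finally I multiply by $y$ and use $\abs{y}=1$:
\[
  y\,g_w(x') \;\ge\; y\,g_w(x) - \abs{g_w(x') - g_w(x)} \;>\; m_w(x,y) - m_w(x,y) \;=\; 0 .
\]
So $g_w(x')$ is nonzero and has sign $y = \sign(g_w(x))$, which is the claim.

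No step is a real obstacle. The only points needing care are the strictness of the inequality, which excludes $g_w(x') = 0$ where $\sign$ would be ambiguous, and the exclusion of $w = 0$. Both are handled above.
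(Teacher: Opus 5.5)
Your proposal is correct and follows the paper's proof essentially step for step: the Lipschitz bound \cref{eq:classification-kpp-lipschitz}, the strict inequality $\abs{g_w(x') - g_w(x)} < m_w(x,y)$ from the hypothesis, and the lower bound $y\,g_w(x') \ge y\,g_w(x) - \abs{g_w(x') - g_w(x)} > 0$. Your explicit exclusion of $w = 0$ is a small addition the paper leaves implicit; it is forced by $m_w(x,y) > 0$ because the score is homogeneous.
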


\begin{proof}
By \cref{eq:classification-kpp-lipschitz},
$\abs{g_w(x') - g_w(x)} \le \norm{w}_2 \sqrt{\deltaT(x, x')}$. The
hypothesis $\deltaT(x, x') < (m_w(x, y) / \norm{w}_2)^2$ implies
$\abs{g_w(x') - g_w(x)} < m_w(x, y) = y\, g_w(x)$. Therefore
\begin{equation*}
  y\, g_w(x')
  \;=\; y\, g_w(x) + y\bigl(g_w(x') - g_w(x)\bigr)
  \;\ge\; y\, g_w(x) - \abs{g_w(x') - g_w(x)}
  \;>\; 0,
\end{equation*}
which gives $\sign(g_w(x')) = y = \sign(g_w(x))$.
\end{proof}

The reference implementation evaluates the affine score $g_w(x) + b$
with a constant intercept $b$ (\cref{sec:setup-risk}). Because $b$ is
constant it cancels in $g_w(x) - g_w(x')$, so the Lipschitz constant
$\norm{w}_2$ and hence the radius formula are unchanged; the same
certificate holds for the deployed affine classifier with the margin
taken as $y\,(g_w(x) + b)$. The reported robust-accuracy curves use
this affine margin.

The empirical robust-accuracy curve at KPP radius $u \in [0, 1]$ is
\begin{equation}
  \mathrm{RobAcc}(u)
  \;:=\;
  \frac{1}{n} \sum_{i=1}^{n}
    \one\set{m_i > \norm{\widehat w}_2 \sqrt{u}},
  \label{eq:robust-accuracy-curve}
\end{equation}
where $m_i := m_{\widehat w}(x_i, y_i) = y_i\, g_{\widehat w}(x_i)$.
This is a deterministic certificate in the KPP metric. It is not a
statement about arbitrary perturbations in the raw input metric.
This curve certifies invariance of the predicted sign within the
$\deltaT$-ball: it equals robustness of the classification accuracy
only when the true label is unchanged across the ball, and otherwise
quantifies prediction stability rather than robust correctness.

\paragraph{What the KPP metric does not imply.}
The Lipschitz inequality \cref{eq:classification-kpp-lipschitz} gives
an upper bound on score variation. It implies that points close to a
high-margin point preserve the prediction. It does not imply that a
point with small score must be close to the Bayes boundary in
$\deltaT$. Therefore a geometric mass condition near the Bayes
boundary does not by itself imply a tail bound for
$\abs{g_w(X)}$. The invalid implication is
\[
  \Prob\bigl(\deltaT(X, \mathcal B) \le r\bigr) \le C r^\nu
  \quad \nRightarrow \quad
  \Prob\bigl(\abs{g_w(X)} \le \gamma\bigr) \le C' \gamma^\alpha.
\]
A score can be nearly zero far from the Bayes boundary unless a
lower-margin condition is added separately.

\paragraph{Safe score-margin assumption.}
A safe condition for fast-rate discussions is the direct score-margin
assumption.

\begin{assumption}[Direct score-margin tail]
\label{ass:direct-score-margin}
For a comparator score
$g_\star(x) = \inner{w_\star}{\Phinode(x)}$, there exist constants
$C > 0$ and $\alpha > 0$ such that, for all $\gamma > 0$,
\begin{equation}
  \Prob\bigl(\abs{g_\star(X)} \le \gamma\bigr)
  \;\le\;
  C \gamma^\alpha.
  \label{eq:direct-score-margin}
\end{equation}
\end{assumption}

Under \cref{ass:direct-score-margin}, one may import standard
localized-complexity or margin-based fast-rate results for logistic
or hinge classification, provided the representation is fixed or
honest and all approximation and regularisation terms are specified.
This paper does not present such a result as a proved KPP theorem;
\cref{ass:direct-score-margin} is recorded so that any future
fast-rate theorem in this representation has a valid starting point.
Such a result would additionally require a Bernstein/Tsybakov-type
noise condition linking the comparator to $\eta$ (for instance
$g_\star = \log(\eta/(1-\eta))$); the score-margin tail of
\cref{ass:direct-score-margin} alone does not control the conditional
risk curvature near the Bayes boundary.

\section{Rademacher bounds and partition oracles}
\label{sec:rademacher}

\subsection{Asymmetry of regression and classification under unconstrained ERM (radical honesty)}
\label{sec:radical-honesty}
The trace Rademacher bound of \cref{prop:trace-rademacher} and the
Lipschitz contraction step that turns it into a uniform loss bound
apply identically to the squared loss and to the logistic surrogate:
in both cases the score complexity is controlled by
$B\sqrt{\Tr(\Knode)}/n$, and the loss class is Lipschitz in the
score, with constant $2(M + B)$ for the squared loss on
$[-(M + B), M + B]$ and constant $1$ for the logistic surrogate
together with the bounded-differences range $C_B = \log(1 + e^B)$.
The asymmetry between
\cref{sec:rademacher-regression,sec:rademacher-classification} is
therefore not in the proof but in what the resulting uniform bound
delivers. For regression, $\Risk(g_w)$ is the squared error itself,
so \cref{prop:uniform-squared-loss-bound} directly controls the prediction risk in terms of the empirical risk. For classification, $\Risk(g_w)$ is the logistic
risk, and converting it into a guarantee on the $0/1$ error requires
the pointwise calibration of \cref{eq:logistic-calibration}; sharper
fast-rate refinements require, in addition, a tail condition on
$\abs{g_w(X)}$ that does not follow from the KPP geometry. We
present
\cref{prop:uniform-squared-loss-bound,thm:uniform-logistic-bound} as
a single methodological frame rather than as two separate
KPP-specific theorems, and isolate in the present subsection two
obstructions to going further on the classification side under
unconstrained ERM. We refer to them as M-C-01 (margin) and O-C-02
(oracle decomposition).

\paragraph{Geometric margin does not imply score margin (M-C-01).}
For a binary classifier of the form $x \mapsto \sign(g_w(x))$,
fast-rate excess-risk statements rest on a tail condition on the
score itself, $\Prob(\abs{g_w(X)} \le \gamma) \le C' \gamma^\alpha$,
not on a geometric-margin condition
$\Prob(\deltaT(X, \mathcal B) \le r) \le C r^\nu$ in the KPP path
metric. As recorded in \cref{sec:robust-radius-classification} and
made concrete by the worked-out one-dimensional counterexample of
\cref{app:counterexample-margin}, a $\sqrt{\deltaT}$-Lipschitz score can be
small far from its zero set, so the first implication does not
follow from the second. The safe replacement is
\cref{ass:direct-score-margin}, the direct score-margin tail
condition. This is the obstruction we anchor as M-C-01: the KPP
geometry alone is not sufficient to upgrade the uniform logistic
bound of \cref{thm:uniform-logistic-bound} into a fast-rate
classification statement.

\paragraph{Raw forest sums are not a forest-level oracle (O-C-02).}
At the tree level, \cref{sec:partition-oracles} expresses the
leaf-wise oracle risk as a root quantity minus a telescope of local
refinement gains, with the same identity for variance (regression)
and entropy (classification). At the forest level, the natural
invariants are the tree-wise averages $IG_{\mathrm{var}}(T)$ and
$IG_h(T)$. As recorded in \cref{rem:raw-forest-sum-pitfall}, a raw
sum of per-tree refinement gains over all internal nodes of all
trees is not the refinement gain of any single forest partition and
scales with the number of trees: duplicating identical trees
multiplies it by $K$, while the partition itself does not change.
We anchor this as O-C-02. Under the honesty separation of
\cref{sec:proof-boundary}, the empirical analogues
$\widehat{IG}_{\mathrm{var}}(T)$ and $\widehat{IG}_h(T)$ are
partition-quality diagnostics whose finite-sample concentration is
not proved in this paper.

\paragraph{What remains open: FR-R-01 and FR-C-01.}
Two fast-rate statements would be natural continuations of the
present work. FR-R-01 is a fast-rate regression statement under a
Bernstein-type or strong-convexity condition specific to the KPP
design, which would sharpen the $1/\sqrt n$ uniform risk bound of
\cref{sec:rademacher-regression}. FR-C-01 is the classification
analogue under \cref{ass:direct-score-margin}, which would combine
the direct score-margin tail with localized-complexity machinery to
obtain a fast classification rate. Both require additional
assumptions on the data distribution and additional technical
apparatus, none of which is specific to KPP; they are not proved in
this paper. They are listed in \cref{sec:discussion} as future
work.

\subsection{Trace-based Rademacher bound and effective dimension (regression)}
\label{sec:rademacher-regression}

The results of this subsection are conditional on the KPP
representation in the following sense. The feature map $\Phinode$ is
frozen once the forest has been fit, so the representation is held
fixed as a map and is decoupled from the labels used for the final
estimator (the fixed or honest regime, PB-01 and PB-02). The
probability is taken over the i.i.d.\ draw of the fit sample on which
the empirical risk and the empirical Rademacher complexity are
evaluated, and the data-dependent complexity $\Tr(\Knode)$ of that
realized sample is what enters the bound. In particular the realized
training features $\Phinode(x_i)$ are not conditioned on as fixed
constants, so the guarantee is a sampling statement over the fit
sample rather than a fixed-design statement conditional on
$x_1,\dots,x_n$, the latter being the setting of the fixed-design
bias--variance decomposition treated separately. The same statements apply to honest and cross-fit designs, with
different conditioning. Under an honest design (PB-02), conditionally on
the fold that built the representation the map is fixed and the
statements apply verbatim to the i.i.d.\ fit fold. Under a cross-fit
design (PB-03) the per-fold representations are fold-dependent, so the
statements hold fold by fold, conditionally on the out-of-fold data,
with a union bound over the $Q$ folds distributing $\delta$; a single
predictor refit on the stacked out-of-fold embeddings is not covered by
these statements as they stand.

For the trace-based Rademacher bound \citep{BartlettMendelson2002Rademacher} it is convenient to work with the
homogeneous linear sub-class of \cref{eq:hcal-B},
\begin{equation}
  \Fcal_B \;:=\; \set{x \mapsto \inner{w}{\Phinode(x)} : \norm{w}_2 \le B}
                 \;\subset\; \Hcal_B,
  \label{eq:fcal-B-linear}
\end{equation}
that is, $\Hcal_B$ with the bias term set to $b = 0$.

\begin{proposition}[Trace Rademacher bound]
\label{prop:trace-rademacher}
Conditionally on the KPP representation, the empirical Rademacher
complexity of $\Fcal_B$ on the training inputs satisfies
\begin{equation}
  \widehat{\Rcal}_n(\Fcal_B)
  \;\le\;
  \frac{B}{n}\sqrt{\Tr(\Knode)}
  \;\le\;
  \frac{B}{\sqrt n}.
  \label{eq:trace-rademacher}
\end{equation}
\end{proposition}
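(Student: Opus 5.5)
The argument is the standard one for norm-bounded linear classes \citep{BartlettMendelson2002Rademacher}, with \cref{prop:norm-bound} supplying the final trace control. Conditionally on the representation, the features $\Phinode(x_1),\dots,\Phinode(x_n)$ are fixed vectors in $\R^m$. For i.i.d.\ Rademacher signs $\varepsilon_1,\dots,\varepsilon_n$, the first step writes the empirical complexity as
\[
  \widehat{\Rcal}_n(\Fcal_B)
  = \E_\varepsilon\!\left[\sup_{\norm{w}_2\le B} \frac1n \sum_{i=1}^n \varepsilon_i \inner{w}{\Phinode(x_i)}\right]
  = \frac1n\,\E_\varepsilon\!\left[\sup_{\norm{w}_2\le B} \inner{w}{\textstyle\sum_i \varepsilon_i \Phinode(x_i)}\right].
\]
By Cauchy--Schwarz, attained at $w$ aligned with the sum, the supremum equals $B\,\norm{\sum_i \varepsilon_i \Phinode(x_i)}_2$.

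The second step applies Jensen's inequality, using concavity of the square root:
\[
  \E_\varepsilon \norm{\textstyle\sum_i \varepsilon_i \Phinode(x_i)}_2
  \le \Bigl(\E_\varepsilon \norm{\textstyle\sum_i \varepsilon_i \Phinode(x_i)}_2^2\Bigr)^{1/2}.
\]
Expanding the square gives $\sum_{i,j}\E[\varepsilon_i\varepsilon_j](\Knode)_{ij}$. Since $\E[\varepsilon_i\varepsilon_j]=\one\set{i=j}$, only the diagonal survives, leaving $\sum_i(\Knode)_{ii}=\Tr(\Knode)$ by \cref{eq:knode-def}. This yields the first inequality $\widehat{\Rcal}_n(\Fcal_B)\le (B/n)\sqrt{\Tr(\Knode)}$.

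For the second inequality, note that each diagonal entry is $(\Knode)_{ii}=\norm{\Phinode(x_i)}_2^2\le 1$ by \cref{prop:norm-bound}, which requires \cref{ass:zero-leaf-weights} and $\ST>0$. Hence $\Tr(\Knode)\le n$, and $(B/n)\sqrt n = B/\sqrt n$.

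No step is a real obstacle. The only point needing care is the conditioning: the expectation is over the signs alone, with the features treated as fixed, which is exactly what ``conditionally on the KPP representation'' licenses under PB-01 and PB-02, and fold-wise under PB-03. The norm bound of \cref{prop:norm-bound} is the sole KPP-specific input, and it is assumed.
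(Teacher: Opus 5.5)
Your proof is correct and follows the paper's argument step for step. Both use the Cauchy--Schwarz/dual-norm identity to turn the supremum into $B\norm{\sum_i \varepsilon_i \Phinode(x_i)}_2$, then Jensen's inequality, then $\E[\varepsilon_i\varepsilon_j]=\one\set{i=j}$ to reduce to $\Tr(\Knode)$, and finally \cref{prop:norm-bound} to get $\Tr(\Knode)\le n$.
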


\begin{proof}
Let $z_i = \Phinode(x_i)$. Then
\begin{align}
  \widehat{\Rcal}_n(\Fcal_B)
  &= \frac{1}{n}\,\E_\sigma \sup_{\norm{w}_2 \le B}
     \sum_{i=1}^{n} \sigma_i \inner{w}{z_i} \\
  &= \frac{B}{n}\,\E_\sigma \norm{\sum_{i=1}^{n} \sigma_i z_i}_{2} \\
  &\le \frac{B}{n}\sqrt{\E_\sigma \norm{\sum_{i=1}^{n} \sigma_i z_i}_{2}^{2}} \\
  &= \frac{B}{n}\sqrt{\sum_{i=1}^{n} \norm{z_i}_2^2}
   \;=\; \frac{B}{n}\sqrt{\Tr(\Knode)}.
\end{align}
The final inequality $\Tr(\Knode) \le n$ uses
$\norm{\Phinode(x_i)}_2 \le 1$ from \cref{prop:norm-bound}.
\end{proof}

If the response is bounded, the trace bound translates into a uniform
squared-loss bound for regression.

\begin{proposition}[Uniform squared-loss bound]
\label{prop:uniform-squared-loss-bound}
Assume $\abs{Y} \le M$ almost surely and condition on the KPP
representation in the sense of \cref{sec:proof-boundary}. For any $B > 0$ and any $\delta \in (0, 1)$, with
probability at least $1 - \delta$, simultaneously for all $w$ with
$\norm{w}_2 \le B$,
\begin{equation}
  \Risk(g_w)
  \;\le\;
  \Riskhat(g_w)
  + 4 (M + B) \frac{B}{n}\sqrt{\Tr(\Knode)}
  + 3 (M + B)^2 \sqrt{\frac{2 \log(2/\delta)}{n}},
  \label{eq:uniform-squared-loss-bound}
\end{equation}
where $g_w(x) = \inner{w}{\Phinode(x)}$ and $\Risk, \Riskhat$ are the
population and empirical squared-loss risks of \cref{eq:risk-emp-risk}.
\end{proposition}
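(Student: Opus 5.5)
The plan is the standard symmetrisation-plus-contraction argument, applied to the squared-loss class on $\Fcal_B$ with the representation held fixed as a map (PB-01/PB-02), so that the fit sample is i.i.d.\ and $\Fcal_B$ is a deterministic function class.

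\emph{Boundedness.} By \cref{prop:norm-bound}, $\norm{\Phinode(x)}_2 \le 1$. Cauchy--Schwarz then gives $\abs{g_w(x)} \le B$ for every $w$ with $\norm{w}_2 \le B$. Together with $\abs{Y} \le M$, this yields $\abs{Y - g_w(X)} \le M + B$. Hence each loss value $\ell(g_w(X),Y)$ lies in $[0,(M+B)^2]$. On the interval $[-B,B]$, the map $s \mapsto (y-s)^2$ is Lipschitz with constant $\sup\abs{2(y-s)} \le 2(M+B)$.

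\emph{Uniform deviation bound.} I would invoke the standard Rademacher uniform deviation bound for a class $\mathcal L$ with values in $[0,c]$, where $c=(M+B)^2$ \citep{BartlettMendelson2002Rademacher}. It states that, with probability at least $1-\delta$, for all $f\in\mathcal L$,
\[
\E f \le \widehat\E f + 2\widehat{\Rcal}_n(\mathcal L) + 3c\sqrt{\log(2/\delta)/(2n)}.
\]
Its proof is McDiarmid applied to the supremum deviation (bounded differences $c/n$), symmetrisation, and a second McDiarmid step that replaces the expected Rademacher complexity by the empirical one. The factor $2$ in $\log(2/\delta)$ comes from splitting $\delta$ over these two McDiarmid steps. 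Since $3c\sqrt{\log(2/\delta)/(2n)} \le 3c\sqrt{2\log(2/\delta)/n}$, the stated confidence term follows with room to spare.

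\emph{Complexity of the loss class.} By the Ledoux--Talagrand contraction (Talagrand's lemma), applied with the per-sample $2(M+B)$-Lipschitz maps $s\mapsto(y_i-s)^2$ (the $y_i$-dependence is harmless), we get
\[
\widehat{\Rcal}_n(\ell\circ\Fcal_B) \le 2(M+B)\,\widehat{\Rcal}_n(\Fcal_B).
\]
\Cref{prop:trace-rademacher} then gives $\widehat{\Rcal}_n(\Fcal_B) \le B\sqrt{\Tr(\Knode)}/n$. Multiplying by the factor $2$ from symmetrisation produces the middle term $4(M+B)\,B\sqrt{\Tr(\Knode)}/n$.

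The main obstacle is bookkeeping rather than mathematics: using the data-dependent $\Tr(\Knode)$ requires the empirical Rademacher version of the bound, hence the second McDiarmid step. This is valid only because the representation does not depend on the fit-sample labels or inputs, which is exactly what the conditioning in \cref{sec:proof-boundary} provides. A second subtlety is making sure the Lipschitz constant is computed on the actual score range $[-B,B]$, which \cref{prop:norm-bound} guarantees.
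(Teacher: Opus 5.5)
Your proposal is correct and follows the same route as the paper's proof: symmetrization, the bound $\abs{g_w} \le B$ from \cref{prop:norm-bound}, the $2(M+B)$-Lipschitz contraction of the squared loss, \cref{prop:trace-rademacher}, and a bounded-differences step with range $(M+B)^2$. Your more careful bookkeeping of the two McDiarmid steps gives the confidence term $3(M+B)^2\sqrt{\log(2/\delta)/(2n)}$, which is a factor $2$ smaller than the stated one, so the proposition follows as you say.
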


\begin{proof}
On $\norm{w}_2 \le B$, predictions satisfy
$\abs{\inner{w}{\Phinode(x)}} \le B$, so the squared loss
$(y - g)^2$ is $2(M + B)$-Lipschitz in $g$ on the range
$[-(M + B), M + B]$. Symmetrization, the Lipschitz contraction
inequality, \cref{prop:trace-rademacher}, and a bounded-differences
step with range $(M + B)^2$ give the claim.
\end{proof}

\paragraph{Effective dimension and ridge smoother.}
Let $\mueig_1, \dots, \mueig_n$ be the eigenvalues of $\Knode$. For
$\lreg > 0$, define the ridge effective dimension
\begin{equation}
  \deff(\lreg)
  \;:=\;
  \Tr\!\bigl(\Knode (\Knode + \lreg I_n)^{-1}\bigr)
  \;=\;
  \sum_{j=1}^{n} \frac{\mueig_j}{\mueig_j + \lreg},
  \label{eq:effective-dimension}
\end{equation}
the degrees of freedom of the ridge smoother
$S_{\lreg} := \Knode (\Knode + \lreg I_n)^{-1}$.

\begin{proposition}[Fixed-design ridge bias--variance decomposition]
\label{prop:ridge-bias-variance}
Condition on $x_1, \dots, x_n$ and on the KPP representation, and
suppose
\begin{equation}
  y = f_\star + \varepsilon,
  \qquad
  \E[\varepsilon \mid x_1^n, \Phinode] = 0,
  \qquad
  \E[\varepsilon \varepsilon^\top \mid x_1^n, \Phinode] = \sigma^2 I_n.
\end{equation}
Then the fitted vector $\widehat y = S_{\lreg} y$ (see \cref{app:ridge-primal-dual}) satisfies
\begin{equation}
  \E\!\left[\frac{1}{n} \norm{\widehat y - f_\star}_2^2 \given x_1^n, \Phinode\right]
  \;=\;
  \frac{1}{n} \norm{(I_n - S_{\lreg}) f_\star}_2^2
  + \frac{\sigma^2}{n}\,\Tr(S_{\lreg}^2),
  \label{eq:ridge-bias-variance}
\end{equation}
and
\begin{equation}
  \Tr(S_{\lreg}^2) \;\le\; \Tr(S_{\lreg}) \;=\; \deff(\lreg).
  \label{eq:variance-deff}
\end{equation}
\end{proposition}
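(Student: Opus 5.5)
Conditionally on $x_1^n$ and $\Phinode$, the smoother $S_{\lreg} = \Knode(\Knode + \lreg I_n)^{-1}$ is a deterministic symmetric matrix. This is the whole plan: everything reduces to fixed-matrix algebra plus two conditional moments of $\varepsilon$. I will take from \cref{app:ridge-primal-dual} the identity $\widehat y = S_{\lreg} y$, which is the standard primal--dual ridge identity $\Phi(\Phi^\top\Phi + \lreg I_m)^{-1}\Phi^\top = \Phi\Phi^\top(\Phi\Phi^\top + \lreg I_n)^{-1}$ with $\Knode = \Phi\Phi^\top$ from \cref{cor:gram-psd-nondiagonal}.

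\textbf{Step 1 (bias--variance split).} Substituting $y = f_\star + \varepsilon$ gives
\[
\widehat y - f_\star = -(I_n - S_{\lreg}) f_\star + S_{\lreg}\varepsilon .
\]
Expanding the squared norm produces three terms. The first is the deterministic bias $\norm{(I_n - S_{\lreg}) f_\star}_2^2$. The second is a cross term linear in $\varepsilon$, which has zero conditional expectation because $\E[\varepsilon \mid x_1^n, \Phinode] = 0$ and $S_{\lreg}$, $f_\star$ are fixed under the conditioning. The third is the quadratic form $\varepsilon^\top S_{\lreg}^\top S_{\lreg} \varepsilon$. By the trace trick its conditional expectation is $\Tr(S_{\lreg}^\top S_{\lreg}\, \sigma^2 I_n) = \sigma^2 \Tr(S_{\lreg}^2)$, using the symmetry of $S_{\lreg}$. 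Dividing by $n$ gives \cref{eq:ridge-bias-variance}.

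\textbf{Step 2 (spectral comparison).} $\Knode$ is positive semidefinite by \cref{cor:gram-psd-nondiagonal}, so write $\Knode = U \operatorname{diag}(\mueig_j) U^\top$ with $\mueig_j \ge 0$. Then $\Knode$ and $(\Knode + \lreg I_n)^{-1}$ commute, and $S_{\lreg} = U \operatorname{diag}(s_j) U^\top$ with $s_j = \mueig_j/(\mueig_j + \lreg) \in [0,1)$ since $\lreg > 0$. It follows that
\[
\Tr(S_{\lreg}^2) = \sum_j s_j^2 \le \sum_j s_j = \Tr(S_{\lreg}) = \deff(\lreg),
\]
which is \cref{eq:variance-deff} together with the eigenvalue formula in \cref{eq:effective-dimension}.

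The main point needing care is symmetry and commutation, which Step 1 uses to write $S_{\lreg}^\top S_{\lreg} = S_{\lreg}^2$ and Step 2 uses for simultaneous diagonalisation. Both follow from the shared eigenbasis. Apart from that, the only subtlety is making sure the conditioning makes $f_\star$ and $S_{\lreg}$ measurable. This is exactly what the fixed-design clause of the statement provides.
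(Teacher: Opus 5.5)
Your proof is correct and follows the same route as the paper. You split $\widehat y - f_\star = (S_{\lreg} - I_n) f_\star + S_{\lreg}\varepsilon$, use the conditional moments to kill the cross term and obtain $\sigma^2 \Tr(S_{\lreg}^2)$, then compare eigenvalues via $h_j^2 \le h_j$ for $h_j = \mueig_j/(\mueig_j + \lreg)$. Your explicit check that $S_{\lreg}$ is symmetric, so that $\Tr(S_{\lreg}^\top S_{\lreg}) = \Tr(S_{\lreg}^2)$, makes precise a step the paper leaves implicit.
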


\begin{proof}
Write
$\widehat y - f_\star = (S_{\lreg} - I_n) f_\star + S_{\lreg} \varepsilon$
and expand the conditional squared norm using
$\E[\varepsilon \mid x_1^n, \Phinode] = 0$ and
$\E[\varepsilon \varepsilon^\top \mid x_1^n, \Phinode] = \sigma^2 I_n$, which
gives \cref{eq:ridge-bias-variance}. The eigenvalues of $S_{\lreg}$
are $h_j = \mueig_j / (\mueig_j + \lreg) \in [0, 1]$, hence
$h_j^2 \le h_j$, which proves \cref{eq:variance-deff}.
\end{proof}

\begin{remark}[What the effective dimension does and does not say]
\label{rem:deff-scope}
The effective dimension is a ridge-specific quantity. It is not a
drop-in replacement for $\Tr(\Knode)$ in the Rademacher complexity of
the entire ball $\norm{w}_2 \le B$: it arises naturally only for the
ridge algorithm through the smoother $S_{\lreg}$, bias--variance
decompositions, and localized analyses.
\end{remark}

\begin{remark}[Regime for the bias--variance decomposition]
\label{rem:bias-variance-regime}
The conditional-noise hypothesis $\E[\varepsilon \mid x_1^n, \Phinode] = 0$ holds under the honest and cross-fit proof boundaries PB-02 and PB-03 of \cref{sec:proof-boundary}, where the representation does not use the labels of the final fit. Under a label-dependent representation, where the same labels are used both to choose the splits or node weights and to fit the final model, conditioning on $\Phinode$ conditions on a function of those labels, so the hypothesis generally fails. In that regime \cref{eq:ridge-bias-variance} becomes a diagnostic for the fitted representation rather than a population bias--variance decomposition. See \cref{app:honesty-matters} for the parallel statement on the uniform bounds.
\end{remark}

\subsection{Logistic-risk bound and calibration (classification)}
\label{sec:rademacher-classification}

The trace Rademacher bound of \cref{prop:trace-rademacher} applies
verbatim to the homogeneous linear class $\Fcal_B$ of
\cref{eq:fcal-B-linear}: its statement and proof use only
$\norm{\Phinode(x_i)}_2 \le 1$ and are independent of the task-specific
loss. Specialising to the logistic surrogate
$\ell_{\log}(g, y) = \log(1 + \exp(-y g))$, which is 1-Lipschitz in
$g$ and bounded on $\norm{w}_2 \le B$ by
\begin{equation}
  C_B \;:=\; \log(1 + e^{B}),
  \label{eq:logistic-bounding-constant}
\end{equation}
the Lipschitz contraction inequality yields the following uniform
logistic-risk bound.

The empirical KPP-Logistic objective at regularisation $\lreg > 0$ is
\begin{equation}
  \widehat w^{\log}_{\lreg}
  := \arg\min_{w \in \R^m}
     \frac{1}{n}\sum_{i=1}^{n}
     \ell_{\log}\!\bigl(g_w(X_i), Y_i\bigr)
     + \frac{\lreg}{2}\,\norm{w}_2^2
  \;=\;
  \arg\min_{w \in \R^m}
     \Riskhat(g_w) + \frac{\lreg}{2}\,\norm{w}_2^2,
  \label{eq:kpp-logistic-objective}
\end{equation}
with score $g_w(x) = \inner{w}{\Phinode(x)}$ and predicted label
$\sign\bigl(g_{\widehat w^{\log}_{\lreg}}(x)\bigr)$. The objective is
homogeneous ($b = 0$), consistent with the class on which the
classification bounds are stated; when the root weight is positive
($\nodew_t(r_t) > 0$) the constant root coordinate of
\cref{def:tree-embedding} supplies an intercept-like term inside $w$.

\begin{theorem}[Uniform logistic bound]
\label{thm:uniform-logistic-bound}
Condition on the KPP representation in the sense of
\cref{sec:proof-boundary}. For any $B > 0$ and any $\delta \in (0, 1)$,
with probability at least $1 - \delta$, simultaneously for all $w$
with $\norm{w}_2 \le B$,
\begin{equation}
  \Risk(g_w)
  \;\le\;
  \Riskhat(g_w)
  + \frac{2 B}{n}\sqrt{\Tr(\Knode)}
  + 3 C_B \sqrt{\frac{\log(2/\delta)}{2 n}},
  \label{eq:uniform-logistic-bound}
\end{equation}
where $g_w(x) = \inner{w}{\Phinode(x)}$ and $\Risk, \Riskhat$ are the
population and empirical logistic-loss risks of
\cref{eq:risk-emp-risk}.
\end{theorem}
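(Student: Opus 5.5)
The plan follows the standard symmetrization--contraction--concentration route, applied to the loss class
\[
  \mathcal L_B := \set{(x, y) \mapsto \ell_{\log}(g_w(x), y) : \norm{w}_2 \le B}.
\]
We condition on the KPP representation as in \cref{sec:proof-boundary}, so that $\Phinode$ is a fixed map and the fit sample $(X_i, Y_i)_{i=1}^n$ is i.i.d. Under this conditioning, $\mathcal L_B$ is a fixed function class and the usual uniform-convergence machinery applies. The only KPP-specific inputs are \cref{prop:norm-bound} and \cref{prop:trace-rademacher}.

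\textbf{Boundedness.} We first establish the range of the loss. For $\norm{w}_2 \le B$, Cauchy--Schwarz and \cref{prop:norm-bound} give $\abs{g_w(x)} \le B$. Since $y \in \set{-1, +1}$, we have $\abs{y\,g_w(x)} \le B$. The map $s \mapsto \log(1 + e^{-s})$ is decreasing, so every element of $\mathcal L_B$ takes values in $[\log(1 + e^{-B}), \log(1 + e^{B})] \subset [0, C_B]$.

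\textbf{Uniform deviation via two-sided Rademacher bound.} We use the version of the standard Rademacher bound that involves the empirical Rademacher complexity (as in \citet{BartlettMendelson2002Rademacher}). For a class with values in $[0, C_B]$, with probability at least $1 - \delta$,
\[
  \sup_{f \in \mathcal L_B}\bigl(\E f - \widehat\E_n f\bigr) \le 2\widehat{\Rcal}_n(\mathcal L_B) + 3 C_B \sqrt{\log(2/\delta)/(2n)}.
\]
This is obtained in three steps:
\begin{itemize}
  \item McDiarmid applied to the supremum, with per-coordinate differences $C_B/n$, at level $\delta/2$.
  \item Symmetrization, giving the expected Rademacher complexity $2\Rcal_n$.
  \item A second McDiarmid step relating $\Rcal_n$ to $\widehat{\Rcal}_n$, again at level $\delta/2$ with differences $C_B/n$.
\end{itemize}
The two deviation terms combine into the constant $3$. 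This is the source of the third term in \cref{eq:uniform-logistic-bound}.

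\textbf{Contraction.} For each fixed label $y_i \in \set{-1, +1}$, the map $g \mapsto \log(1 + e^{-y_i g})$ is $1$-Lipschitz, since its derivative is bounded by $1$ in absolute value. Talagrand's contraction lemma, in its form allowing a separate Lipschitz function per coordinate, then gives
\[
  \widehat{\Rcal}_n(\mathcal L_B) \le \widehat{\Rcal}_n(\Fcal_B).
\]
By \cref{prop:trace-rademacher}, $\widehat{\Rcal}_n(\Fcal_B) \le (B/n)\sqrt{\Tr(\Knode)}$. Multiplying by $2$ produces the middle term $2B\sqrt{\Tr(\Knode)}/n$. Writing $\E f = \Risk(g_w)$ and $\widehat\E_n f = \Riskhat(g_w)$ yields the claim simultaneously for all $\norm{w}_2 \le B$.

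\textbf{Main obstacle.} The step that needs care is bookkeeping, not mathematics. We must check that the contraction lemma is used in the version without a factor of $2$ and without requiring $\phi(0) = 0$. The factor of $2$ is avoided by using the Ledoux--Talagrand form for the supremum without absolute value. The condition $\phi(0) = 0$ is unnecessary because subtracting the constant $\log 2$ leaves the Rademacher average unchanged. We must also confirm that the deviation constants match $3 C_B\sqrt{\log(2/\delta)/(2n)}$ exactly. Finally, we must check that $\Tr(\Knode)$, which is a function of the fit sample, may legitimately appear: this holds because the bound is stated in terms of the empirical complexity, which the second McDiarmid step accounts for.
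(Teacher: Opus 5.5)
Your proposal is correct and follows the paper's own route: symmetrization to twice the empirical Rademacher complexity of the loss class, $1$-Lipschitz contraction down to $\Fcal_B$, the trace bound of \cref{prop:trace-rademacher}, and a bounded-differences step with range $C_B$. What you add is explicit bookkeeping that the paper leaves implicit, namely the two McDiarmid steps that produce the constant $3$ and the shift-invariance argument that removes the $\phi(0)=0$ requirement in the contraction lemma.
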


\begin{proof}
Symmetrization bounds the uniform deviation by twice the empirical
Rademacher complexity of the loss class. The Lipschitz contraction
inequality applies because $\ell_{\log}(\cdot, y)$ is $1$-Lipschitz in
the score for every $y \in \set{-1, +1}$, and reduces the loss-class
complexity to the score complexity $\widehat{\Rcal}_n(\Fcal_B)$.
\Cref{prop:trace-rademacher} controls the latter by
$B\sqrt{\Tr(\Knode)}/n$. A bounded-differences step with range $C_B$
gives the confidence term.
\end{proof}

\paragraph{Calibration to classification error.}
For $y \in \set{-1, +1}$ and any score $g$, the $0/1$ loss satisfies
the pointwise inequality (proved in \cref{app:logistic-calibration-proof})
\begin{equation}
  \one\set{y g \le 0}
  \;\le\;
  \frac{\ell_{\log}(g, y)}{\log 2}.
  \label{eq:logistic-pointwise-calibration}
\end{equation}
Therefore, for the classifier $x \mapsto \sign(g_w(x))$, the
population classification error obeys
\begin{equation}
  \Risk_{0/1}(g_w)
  \;:=\;
  \Prob\!\bigl(Y\, g_w(X) \le 0\bigr)
  \;\le\;
  \frac{1}{\log 2}\,\Risk(g_w).
  \label{eq:logistic-calibration}
\end{equation}
Combining \cref{eq:logistic-calibration} with
\cref{thm:uniform-logistic-bound} gives, under the same conditioning
and with probability at least $1 - \delta$, a uniform upper bound on
the classification error in terms of the empirical logistic risk and
the trace term $2B\sqrt{\Tr(\Knode)}/n$.

Sharper excess-risk statements --- in particular fast-rate
classification bounds under low-noise or margin conditions --- can be
imported from standard classification-calibration arguments
\citep{BartlettJordanMcAuliffe2006Calibration,MammenTsybakov1999SmoothDiscrimination};
they are not specific to KPP and require their own assumptions. As
discussed in \cref{sec:robust-radius-classification}, the safe entry
point in this representation is the direct score-margin condition
\cref{ass:direct-score-margin}: a geometric-margin tail condition in
the KPP metric $\deltaT$ does not by itself imply a score-margin tail
condition on $\abs{g_w(X)}$.

\subsection{Partition oracles for one honest tree and forest-level averages}
\label{sec:partition-oracles}

For a single recursive binary partition, the leaf-wise oracle risk
decomposes as a root quantity minus a telescoping sum of local
refinement gains. The argument is partition-theoretic and does not
depend on whether the local quantity is a conditional variance
(regression) or a conditional entropy (classification): in both
cases the same telescope arises from the identity
$p(v) = p(v_L) + p(v_R)$ at every internal node. We isolate the
common skeleton in \cref{lem:partition-telescoping} and instantiate
it in \cref{cor:variance-oracle,cor:entropy-oracle}.

\paragraph{Notation.}
Fix a recursive binary tree $t$ as in
\cref{def:recursive-binary-partition} with regions
$(S_v)_{v \in V_t}$, and write $p(v) := \Prob(X \in S_v)$. Throughout,
$p(v) > 0$ for every node considered; functionals at null-mass nodes are
defined by the convention $p(v)\,\Delta(v) = 0$. For an
internal node $v$ with children $v_L, v_R$, write
\begin{equation}
  \pi_L(v) := \Prob(X \in S_{v_L} \mid X \in S_v),
  \qquad
  \pi_R(v) := \Prob(X \in S_{v_R} \mid X \in S_v),
  \label{eq:child-conditional-probs}
\end{equation}
so $p(v_L) = p(v) \pi_L(v)$ and $p(v_R) = p(v) \pi_R(v)$.

\begin{lemma}[Partition telescoping for a node-level functional]
\label{lem:partition-telescoping}
Let $F : V_t \to \R$ be any node-level functional, and define the
local refinement gain at an internal node $v$ by
\begin{equation}
  \Delta F(v)
  \;:=\;
  F(v) - \pi_L(v) F(v_L) - \pi_R(v) F(v_R).
  \label{eq:partition-gain}
\end{equation}
Then
\begin{equation}
  F(r_t)
  \;=\;
  \sum_{\ell \in V_t^{\mathrm{leaf}}} p(\ell)\, F(\ell)
  \;+\;
  \sum_{v \in V_t^{\mathrm{int}}} p(v)\, \Delta F(v).
  \label{eq:partition-telescoping}
\end{equation}
\end{lemma}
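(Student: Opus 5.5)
The plan is to rewrite each weighted gain $p(v)\,\Delta F(v)$ as a difference of mass-weighted node values and then telescope over the internal nodes. By \cref{eq:child-conditional-probs}, $p(v)\pi_L(v) = p(v_L)$ and $p(v)\pi_R(v) = p(v_R)$. Multiplying \cref{eq:partition-gain} by $p(v)$ therefore gives, for every internal node $v$,
\begin{equation*}
  p(v)\,\Delta F(v) \;=\; p(v)F(v) - p(v_L)F(v_L) - p(v_R)F(v_R).
\end{equation*}
At null-mass nodes the conditional probabilities are undefined. There I would invoke the stated convention $p(v)\Delta F(v) = 0$ and check that the right-hand side also vanishes, since $p(v_L), p(v_R) \le p(v) = 0$.

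Write $G(v) := p(v)F(v)$ and sum the displayed identity over $v \in V_t^{\mathrm{int}}$. This gives $\sum_{v \in V_t^{\mathrm{int}}} G(v) - \sum_{v \in V_t^{\mathrm{int}}} \bigl(G(v_L) + G(v_R)\bigr)$. In a rooted binary tree, every node other than the root is the child of exactly one internal node, and every internal node has exactly two children. So the second sum equals $\sum_{u \ne r_t} G(u)$, which splits as $\sum_{u \in V_t^{\mathrm{int}} \setminus \{r_t\}} G(u) + \sum_{\ell \in V_t^{\mathrm{leaf}}} G(\ell)$. Subtracting, every non-root internal node cancels, leaving
\begin{equation*}
  \sum_{v \in V_t^{\mathrm{int}}} p(v)\,\Delta F(v) \;=\; p(r_t)F(r_t) - \sum_{\ell \in V_t^{\mathrm{leaf}}} p(\ell)F(\ell).
\end{equation*}
Since $S_{r_t} = \Xspace$, we have $p(r_t) = 1$, and rearranging yields \cref{eq:partition-telescoping}. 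The degenerate case of a single-node tree is immediate: the root is a leaf, the internal sum is empty, and $p(r_t) = 1$.

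The only delicate point is the bookkeeping of the reindexing: I must make sure each child appears exactly once. This follows from the uniqueness of parents in \cref{def:recursive-binary-partition}. As a fallback, one could instead argue by induction on the number of internal nodes, splitting the tree at one deepest internal node whose two children are leaves.
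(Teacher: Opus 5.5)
Your proposal is correct and follows essentially the same route as the paper: multiply each gain by $p(v)$ using $p(v)\pi_L(v)=p(v_L)$ and $p(v)\pi_R(v)=p(v_R)$, sum over the internal nodes, and cancel every non-root node against its unique appearance as a child, leaving $p(r_t)F(r_t)=F(r_t)$ and the leaf terms. Your checks of the null-mass convention and of the single-node tree go slightly beyond the paper's proof, which leaves these cases implicit, and both are consistent with its setup.
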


\begin{proof}
For every internal node $v$, multiply \cref{eq:partition-gain} by
$p(v)$ and use $p(v) \pi_L(v) = p(v_L)$, $p(v) \pi_R(v) = p(v_R)$ to
obtain
\begin{equation*}
  p(v)\, F(v)
  \;=\;
  p(v_L)\, F(v_L) + p(v_R)\, F(v_R) + p(v)\, \Delta F(v).
\end{equation*}
Sum this identity over all internal nodes $v \in V_t^{\mathrm{int}}$.
Every non-root node appears exactly once as a child of its parent, so
the internal-node contributions on the right cancel the corresponding
non-root terms on the left. The surviving terms are the root term
$p(r_t)\, F(r_t) = F(r_t)$ and the leaf terms
$\sum_{\ell \in V_t^{\mathrm{leaf}}} p(\ell)\, F(\ell)$.
\end{proof}

\paragraph{Regression: variance oracle.}
Let $\mu(v) := \E[Y \mid X \in S_v]$ and
$Q(v) := \Var(Y \mid X \in S_v)$. By the law of total variance applied
to the binary refinement $S_v = S_{v_L} \cup S_{v_R}$, the variance
decrease at an internal node $v$,
\begin{equation}
  \Delta Q(v)
  \;:=\;
  Q(v) - \pi_L(v) Q(v_L) - \pi_R(v) Q(v_R),
  \label{eq:delta-q}
\end{equation}
is nonnegative. The leaf-wise oracle predictor of the tree is
$f_t^{\mathrm{leaf}}(x) := \mu(\ell_t(x))$, with squared-error risk
\begin{equation}
  R_{\mathrm{leaf}}(t)
  \;:=\;
  \E\bigl[(Y - f_t^{\mathrm{leaf}}(X))^2\bigr]
  \;=\;
  \sum_{\ell \in V_t^{\mathrm{leaf}}} p(\ell)\, Q(\ell).
  \label{eq:leaf-mse}
\end{equation}

\begin{corollary}[Variance oracle for one tree]
\label{cor:variance-oracle}
For any recursive binary tree partition with $S_{r_t} = \Xspace$,
\begin{equation}
  \Var(Y)
  \;=\;
  R_{\mathrm{leaf}}(t) + IG_{\mathrm{var}}(t),
  \qquad
  IG_{\mathrm{var}}(t)
  \;:=\;
  \sum_{v \in V_t^{\mathrm{int}}} p(v)\, \Delta Q(v).
  \label{eq:variance-oracle-tree}
\end{equation}
\end{corollary}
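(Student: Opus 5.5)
The plan is to instantiate \cref{lem:partition-telescoping} with the node-level functional $F := Q$, where $Q(v) = \Var(Y \mid X \in S_v)$. With this choice the local refinement gain $\Delta F(v)$ of \cref{eq:partition-gain} is exactly the variance decrease $\Delta Q(v)$ of \cref{eq:delta-q}. The telescoping identity \cref{eq:partition-telescoping} then reads
\[
  Q(r_t) \;=\; \sum_{\ell \in V_t^{\mathrm{leaf}}} p(\ell)\, Q(\ell) \;+\; \sum_{v \in V_t^{\mathrm{int}}} p(v)\, \Delta Q(v).
\]
The second sum is by definition $IG_{\mathrm{var}}(t)$.

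Two identifications then finish the argument. First, since $S_{r_t} = \Xspace$, the conditioning event $\{X \in S_{r_t}\}$ has probability one. Hence $Q(r_t) = \Var(Y \mid X \in \Xspace) = \Var(Y)$ and $p(r_t) = 1$; the latter is already used in the proof of the lemma. Second, the leaf sum $\sum_{\ell} p(\ell) Q(\ell)$ equals $R_{\mathrm{leaf}}(t)$ by \cref{eq:leaf-mse}. For completeness I would check this equality. The leaves partition $\Xspace$, because the recursive construction of \cref{def:recursive-binary-partition} splits each internal region into disjoint children, and induction from the root gives that the leaf regions are disjoint and cover $\Xspace$. Therefore
\[
  \E\bigl[(Y - \mu(\ell_t(X)))^2\bigr] \;=\; \sum_{\ell} p(\ell)\, \E\bigl[(Y-\mu(\ell))^2 \mid X \in S_\ell\bigr] \;=\; \sum_{\ell} p(\ell)\, Q(\ell).
\]
Substituting both identifications into the telescoping identity gives \cref{eq:variance-oracle-tree}.

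The only step needing care is well-definedness. Square integrability, $\E[Y^2]<\infty$ from \cref{ass:data}, makes every $\mu(v)$ and $Q(v)$ finite whenever $p(v) > 0$. Nodes of zero mass are handled by the stated convention $p(v)\Delta(v) = 0$, together with the analogous convention $p(\ell)Q(\ell) = 0$ for null leaves. These null leaves contribute nothing to $R_{\mathrm{leaf}}(t)$ because they carry no probability. Under these conventions the per-node identity $p(v)Q(v) = p(v_L)Q(v_L) + p(v_R)Q(v_R) + p(v)\Delta Q(v)$ used in the lemma holds trivially at null nodes, so the telescoping goes through unchanged.

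Nonnegativity of $\Delta Q(v)$, by the law of total variance, is not needed for the identity itself; it only ensures that $IG_{\mathrm{var}}(t) \ge 0$, so that $R_{\mathrm{leaf}}(t) \le \Var(Y)$.
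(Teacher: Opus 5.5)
Your proposal is correct and follows essentially the same route as the paper: instantiate \cref{lem:partition-telescoping} with $F = Q$, identify the root term as $\Var(Y)$ via $S_{r_t} = \Xspace$, and read the leaf sum as $R_{\mathrm{leaf}}(t)$ and the internal-node sum as $IG_{\mathrm{var}}(t)$. Your extra checks are sound but go beyond what the paper's proof uses: that the leaves partition $\Xspace$, so the leaf sum really equals the leaf-oracle risk, and the handling of null-mass nodes.
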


\begin{proof}
Apply \cref{lem:partition-telescoping} with $F(v) = Q(v)$. The root
satisfies $F(r_t) = Q(r_t) = \Var(Y)$ because $S_{r_t} = \Xspace$.
The leaf sum gives $R_{\mathrm{leaf}}(t)$ and the internal sum gives
$IG_{\mathrm{var}}(t)$.
\end{proof}

\paragraph{Classification: entropy oracle.}
Under the convention $y \in \set{-1, +1}$ from \cref{ass:data}, set
\begin{equation}
  \eta(v) \;:=\; \Prob(Y = +1 \mid X \in S_v) \;\in\; [0, 1].
\end{equation}
The conditional distribution of $Y$ given $X \in S_v$ is fully
captured by $\eta(v)$: with the relabelling
$\one\set{Y = +1} \in \set{0, 1}$, one has
$\Prob(\one\set{Y = +1} = 1 \mid X \in S_v) = \eta(v)$, so the entropy
and logistic quantities below depend on $\eta(v)$ only. Let
\begin{equation}
  h(p) \;:=\; - p \log p - (1 - p) \log(1 - p),
  \qquad 0 \le p \le 1,
\end{equation}
with the convention $0 \log 0 = 0$. The entropy decrease at an
internal node,
\begin{equation}
  \Delta h(v)
  \;:=\;
  h(\eta(v)) - \pi_L(v)\, h(\eta(v_L)) - \pi_R(v)\, h(\eta(v_R)),
  \label{eq:delta-h}
\end{equation}
is nonneg by concavity of $h$. The leaf-wise logistic oracle of the
tree is
\begin{equation}
  f_t^{\mathrm{leaf}}(x)
  \;:=\;
  \log \frac{\eta(\ell_t(x))}{1 - \eta(\ell_t(x))},
\end{equation}
with the extended-logit convention when the leaf probability is $0$
or $1$. For $Y \in \set{-1, +1}$ with $\Prob(Y = +1 \mid X \in S_\ell)
= \eta(\ell)$, the conditional logistic loss
$\E[\log(1 + e^{- Y g}) \mid X \in S_\ell]$ is minimized at the
logit value above, with minimum value $h(\eta(\ell))$. Hence the
leaf-wise logistic risk is
\begin{equation}
  L_{\log}^{\mathrm{leaf}}(t)
  \;=\;
  \sum_{\ell \in V_t^{\mathrm{leaf}}} p(\ell)\, h(\eta(\ell)).
  \label{eq:leaf-logistic-risk}
\end{equation}

\begin{corollary}[Entropy oracle for one tree]
\label{cor:entropy-oracle}
For any recursive binary tree partition,
\begin{equation}
  h(\eta(r_t))
  \;=\;
  L_{\log}^{\mathrm{leaf}}(t) + IG_h(t),
  \qquad
  IG_h(t)
  \;:=\;
  \sum_{v \in V_t^{\mathrm{int}}} p(v)\, \Delta h(v).
  \label{eq:entropy-oracle-tree}
\end{equation}
\end{corollary}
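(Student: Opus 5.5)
The plan is to mirror the proof of \cref{cor:variance-oracle} exactly, instantiating \cref{lem:partition-telescoping} with the node-level functional $F(v) := h(\eta(v))$. With this choice, the local refinement gain \cref{eq:partition-gain} is by definition the entropy decrease $\Delta h(v)$ of \cref{eq:delta-h}. The lemma then gives
\[
  h(\eta(r_t)) \;=\; \sum_{\ell \in V_t^{\mathrm{leaf}}} p(\ell)\, h(\eta(\ell)) \;+\; \sum_{v \in V_t^{\mathrm{int}}} p(v)\, \Delta h(v).
\]
The second sum is $IG_h(t)$ by definition. It remains to identify the first sum with $L_{\log}^{\mathrm{leaf}}(t)$, which is \cref{eq:leaf-logistic-risk}.

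For completeness I will justify \cref{eq:leaf-logistic-risk} rather than just cite it. Conditionally on $X \in S_\ell$, the loss $\log(1+e^{-Yg})$ with constant score $g$ has expectation
\[
  \eta(\ell)\log(1+e^{-g}) + (1-\eta(\ell))\log(1+e^{g}).
\]
This is convex in $g$, and its derivative vanishes at $g = \log(\eta/(1-\eta))$. Substituting gives $\sigma(g) = \eta$, and the value is
\[
  -\eta\log\eta - (1-\eta)\log(1-\eta) \;=\; h(\eta(\ell)).
\]
Since $f_t^{\mathrm{leaf}}$ is constant on each leaf, the tower property over the leaf partition gives $\E[\ell_{\log}(f_t^{\mathrm{leaf}}(X),Y)] = \sum_\ell p(\ell)\,h(\eta(\ell))$.

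The only delicate step is the degenerate leaves with $\eta(\ell)\in\{0,1\}$. There the extended logit is $\pm\infty$, and the loss on the almost-sure label is $\log(1+e^{-\infty}) = 0 = h(0) = h(1)$, using the convention $0\log 0 = 0$. Null-mass nodes are handled by the convention $p(v)\,\Delta(v)=0$, which \cref{lem:partition-telescoping} tolerates because the telescoping identity is purely algebraic.

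Unlike the variance case, the root term $h(\eta(r_t))$ needs no identification with $h(\Prob(Y=+1))$: the statement is already phrased in terms of $\eta(r_t)$, although the two coincide since $S_{r_t} = \Xspace$. Nonnegativity of the gains is not needed for the identity.
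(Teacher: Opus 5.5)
Your proposal is correct and matches the paper's proof, which likewise applies \cref{lem:partition-telescoping} with $F(v) = h(\eta(v))$ and identifies the leaf sum with $L_{\log}^{\mathrm{leaf}}(t)$ and the internal sum with $IG_h(t)$. Your explicit check of \cref{eq:leaf-logistic-risk}, which the paper only asserts in the text before the corollary, is a useful addition, and it includes the minimisation at the logit and the degenerate leaves with $\eta(\ell) \in \{0,1\}$.
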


\begin{proof}
Apply \cref{lem:partition-telescoping} with $F(v) = h(\eta(v))$. The
leaf sum is $L_{\log}^{\mathrm{leaf}}(t)$ and the internal sum is
$IG_h(t)$. A direct derivation is also given in
\cref{app:entropy-telescoping-proof}.
\end{proof}

\paragraph{Forest-level averages.}
For a finite forest $T$, define the tree-wise averaged oracles
\begin{equation}
  R_{\mathrm{leaf}}(T)
  := \frac{1}{\abs{T}} \sum_{t \in T} R_{\mathrm{leaf}}(t),
  \qquad
  IG_{\mathrm{var}}(T)
  := \frac{1}{\abs{T}} \sum_{t \in T} IG_{\mathrm{var}}(t),
  \label{eq:forest-average-reg}
\end{equation}
\begin{equation}
  L_{\log}^{\mathrm{leaf}}(T)
  := \frac{1}{\abs{T}} \sum_{t \in T} L_{\log}^{\mathrm{leaf}}(t),
  \qquad
  IG_h(T)
  := \frac{1}{\abs{T}} \sum_{t \in T} IG_h(t).
  \label{eq:forest-average-clf}
\end{equation}
The tree-level identities of
\cref{cor:variance-oracle,cor:entropy-oracle} average to
\begin{equation}
  R_{\mathrm{leaf}}(T) = \Var(Y) - IG_{\mathrm{var}}(T),
  \qquad
  L_{\log}^{\mathrm{leaf}}(T) = h(\eta(r)) - IG_h(T),
  \label{eq:forest-average-oracle-identity}
\end{equation}
where $r$ denotes the common root event $\Xspace$. These are averages
over tree-specific leaf oracles. They are not the oracle on the
common refinement of all tree leaves, and they are stable under
duplicating identical trees.

\begin{remark}[Why raw forest sums are wrong]
\label{rem:raw-forest-sum-pitfall}
A raw sum
$\sum_{t \in T} \sum_{v \in V_t^{\mathrm{int}}} p(v)\, \Delta F(v)$
(with $F = Q$ for variance, or $F = h \circ \eta$ for entropy) is
\emph{not} the refinement gain of a single forest partition unless an
explicit common-refinement partition and its telescoping tree have
been constructed. Duplicating the same tree $K$ times multiplies the
raw sum by $K$, while the leaf-wise risk of the (unchanged) partition
does not change. The tree-wise averages
$IG_{\mathrm{var}}(T)$ and $IG_h(T)$ in
\cref{eq:forest-average-oracle-identity} are the correct
forest-level invariants.
\end{remark}

\paragraph{Empirical estimation and honesty.}
In practice, $p(v)$, $\mu(v)$, $Q(v)$, $\eta(v)$, and the local gains
$\Delta Q(v), \Delta h(v)$ are estimated from data. The honest
construction of \cref{sec:proof-boundary} separates two tasks: a
partition fold builds the trees and estimates the local gains, and a
disjoint fit fold builds the KPP design $\Phinode$ and fits the final
ERM analysed in
\cref{sec:rademacher-regression,sec:rademacher-classification}. Under
this separation, the empirical refinement gains $\widehat{IG}_{\mathrm{var}}(T)$
and $\widehat{IG}_h(T)$ are partition-quality diagnostics that are
not directly contaminated by the labels used to fit $\widehat w$.
They remain estimates of population quantities; finite-sample
concentration for $\widehat{IG}_{\mathrm{var}}(T)$ and
$\widehat{IG}_h(T)$ should be added separately if it is needed as a
theorem.

\section{Experiments}
\label{sec:experiments}

\subsection{Benchmark on five tabular datasets}
\label{sec:bench-five-datasets}

The first empirical objective is not to claim state-of-the-art
performance, but to test whether the KPP representation and its
companion dashboard (\cref{sec:dashboard-conditional}) make the
predictive behaviour of the method legible alongside conventional
tree-ensemble baselines. We report predictive metrics on five
tabular datasets that span the regression / classification axis and
a range of $(n, p)$ regimes.

\paragraph{Datasets.}
\Cref{tab:bench-datasets} lists the five datasets used in the
benchmark, with sample sizes and feature counts. The
$80\%/20\%$ train/test split is the same across all methods, and
each dataset is run with five independent seeds for the train/test
shuffle. Class balance and noise levels are taken as given by the
source repositories; no class re-balancing or feature transformation
is applied beyond standard scaling for the raw-feature baselines.

\begin{table}[t]
  \centering
  \small
  \caption{Datasets used in the benchmark of
    \cref{sec:bench-five-datasets}. Primary metric is RMSE for
    regression and the test error rate $1 - \mathrm{accuracy}$ for
    classification, both lower-is-better. Five seeds per dataset.}
  \label{tab:bench-datasets}
  \begin{tabular}{lllrrr}
    \toprule
    Dataset & Task & Source & $n_{\mathrm{train}}$ & $n_{\mathrm{test}}$ & $p$ \\
    \midrule
    \texttt{breast\_cancer} & classification & sklearn & 455 & 114 & 30 \\
    \texttt{california\_housing\_subsample\_2000} & regression & sklearn & 1\,600 & 400 & 8 \\
    \texttt{concrete} & regression & OpenML & 824 & 206 & 8 \\
    \texttt{spambase\_full} & classification & OpenML & 3\,680 & 921 & 57 \\
    \texttt{wine\_quality\_red} & regression & OpenML & 1\,279 & 320 & 11 \\
    \bottomrule
  \end{tabular}
\end{table}

\paragraph{Baselines.}
Under identical splits, the KPP estimator
($\widehat f$ in \cref{sec:setup-risk}, either KPP-Ridge for
regression or KPP-Logistic for classification) is compared against
six baselines that cover the standard tabular landscape:
\begin{itemize}
  \item Random Forest regression / classification
    \citep{Breiman2001RandomForests};
  \item Gradient Boosting trees as implemented in scikit-learn
    \citep{Friedman2001GBM};
  \item XGBoost and LightGBM
    \citep{ChenGuestrin2016XGBoost,KeEtAl2017LightGBM};
  \item ridge regression (resp.\ logistic regression) on the raw
    features.
\end{itemize}
A more recent gradient-boosted baseline, CatBoost
\citep{Prokhorenkova2018CatBoost}, is reported factually in the
per-dataset tables without a methodological discussion: it is
included to anchor KPP relative to the current production stack,
not to certify a comparison.

\paragraph{Leaf-only ablation, and baselines still deferred.}
The leaf-only variant of the KPP representation --- the same forest and
fit pipeline, but with the internal-node path coordinates removed so that
only the tree-leaf indicators remain --- is no longer deferred: we run it
as an internal ablation against the full KPP estimator (the \texttt{KPP}
row of the per-dataset tables, which reproduces the benchmark KPP metric
bit for bit) on every dataset of the 40-dataset suite of
\cref{sec:bench-demsar}, reusing each seed's tuned $\lreg$ so that the two
arms differ only in the representation. Averaging the primary metric over
seeds per dataset and comparing the two arms head to head, the
internal-node path coordinates help on a majority of the suite ---
$23$ of the $40$ datasets favour the full estimator, $15$ favour
leaf-only, and on $2$ the two arms are indistinguishable up to
numerical precision --- but the advantage is \emph{not}
uniform: leaf-only is competitive with, or better than, the full estimator
on a substantial minority, most markedly on \texttt{auto\_price}, where it
lowers RMSE relative to the full estimator by about $35.8$. We read this as
evidence that the full-path and leaf-only representations are two members
of one family whose better arm is dataset-dependent, rather than as a
blanket superiority of either; it is an internal ablation of the KPP
representation and so sits on a different footing from the cross-method
ranking of \cref{sec:bench-demsar}. The five per-dataset tables of the
\hyperref[suppl:bench-detailed]{Supplementary Material} report the leaf-only metric as a dedicated
\texttt{KPP\_leaf-only} row. A precise characterisation of \emph{which}
structural features make the internal-node path coordinates pay off is
left to future work, as is a comparison against an Extra-Trees ensemble
\citep{GeurtsErnstWehenkel2006ExtraTrees}.

\paragraph{Ablations.}
The minimal ablation grid varies the node weights (variance decrease,
raw variance, or uniform), the tree depth and minimum leaf size, the
number of trees $K$, the regularisation parameter $\lreg$, and
whether the representation is honestly split from the fit fold (when
the dataset size permits). The grid is the same across the two
tasks; only the ERM loss differs. Of these axes, the node-weight choice
is the one we evaluate empirically here: \cref{tab:node-weight} in
\cref{app:variance-gain-choice} compares the three weightings head to head
across the 40-dataset suite, finding the variance-gain default robust but
not universally optimal.

\paragraph{Reporting recommendation.}
As a practitioner recommendation for the KPP dashboard, we suggest that
an application report, for each dataset and configuration, the predictive
metric together with the diagnostics. For regression:
RMSE / MAE / $R^2$, $\Tr(\Knode)/n$, $\deff(\lreg)/n$ from
\cref{eq:effective-dimension}, $\mueig_{\min}(\Knode)$, the
half-forest discrepancy $\Deltak$ of
\cref{rem:operator-norm-diagnostic}, the empirical variance gain
$\widehat{IG}_{\mathrm{var}}(T)$ from \cref{cor:variance-oracle},
and $\norm{\widehat w_{\lreg}}_2$. For classification: error rate /
balanced accuracy / AUC where appropriate, calibration metrics, and
the same dashboard quantities with $\widehat{IG}_{\mathrm{var}}(T)$
replaced by the entropy gain $\widehat{IG}_h(T)$ from
\cref{cor:entropy-oracle}. The empirical question such a protocol
targets is whether the diagnostics predict failure modes: high
capacity, poor conditioning, unstable forests, weak partitions, or
overly large Lipschitz constants. The present paper does not run this
full protocol on every dataset: on the five benchmark datasets it
reports the primary metric, mean fit time, and seed count
(\cref{tab:bench-rank} and
\cref{tab:bench-breast-cancer,tab:bench-california,tab:bench-concrete,tab:bench-spambase,tab:bench-wine})
together with the
dashboard observables of \cref{tab:dashboard-empirical} ($\lreg$,
$\norm{\widehat w}_2$, $\Tr(\Knode)/n$, $\deff(\lreg)/n$,
$\mueig_{\min}+\lreg$, $\Deltak$, and $\widehat{IG}_\star$) and the
uniform bounds of \cref{tab:dashboard-bounds}. The dashboard analysis
itself is deferred to \cref{sec:dashboard-conditional}.

\paragraph{Bench results.}
\Cref{tab:bench-rank} reports the primary metric of the KPP estimator
on each dataset, together with the best baseline metric and the
resulting rank of KPP among the six baselines plus KPP itself. KPP
attains rank $1$ on \texttt{spambase\_full} and \texttt{wine\_quality\_red}.
On \texttt{concrete} it sits at rank $2$, within one standard
deviation of LightGBM. On \texttt{california\_housing\_subsample\_2000}
it sits at rank $4$, roughly $5\%$ above CatBoost in RMSE. On
\texttt{breast\_cancer} it sits at rank $3$, behind logistic
regression on the raw features and CatBoost, on a dataset where a
linear baseline on 30 features is already strong. Per-dataset
detailed tables (all six baselines, fit-time and per-seed
variability) are deferred to the
\hyperref[suppl:bench-detailed]{Supplementary Material}.

\begin{table}[t]
  \centering
  \small
  \caption{Primary metric of KPP and best baseline per dataset on the
    benchmark of \cref{sec:bench-five-datasets}. Regression metric is
    RMSE; classification metric is the test error rate
    $1 - \mathrm{accuracy}$, both lower-is-better. Five seeds, mean
    $\pm$ std. Rank counts KPP among six baselines plus itself; bold
    entries are rank $1$. The KPP entries are taken from the post-fit
    estimator $\widehat f$ under the label-dependent proof-boundary
    regime PB-04 (the KPP representation is built and the final model
    is fit on the same training labels; see \cref{sec:proof-boundary}).}
  \label{tab:bench-rank}
  \begin{tabular}{llrlrr}
    \toprule
    Dataset & Task & KPP & Best baseline & Best metric & KPP rank \\
    \midrule
    \texttt{breast\_cancer} & class. & $0.0281 \pm 0.0039$ & LogReg\_raw & $0.0193 \pm 0.0073$ & $3 / 7$ \\
    \texttt{california\_housing\_sub.\_2000} & reg. & $0.5469 \pm 0.0413$ & CatBoost & $0.5183 \pm 0.0379$ & $4 / 7$ \\
    \texttt{concrete} & reg. & $4.3024 \pm 0.2299$ & LightGBM & $4.2472 \pm 0.2464$ & $2 / 7$ \\
    \texttt{spambase\_full} & class. & $\mathbf{0.0452 \pm 0.0033}$ & LightGBM & $0.0456 \pm 0.0051$ & $\mathbf{1 / 7}$ \\
    \texttt{wine\_quality\_red} & reg. & $\mathbf{0.5506 \pm 0.0272}$ & RandomForest & $0.5577 \pm 0.0279$ & $\mathbf{1 / 7}$ \\
    \bottomrule
  \end{tabular}
\end{table}

The KPP estimator is competitive in predictive metrics on this
five-dataset slice without dominating it. Its differentiating value
relative to the gradient-boosted baselines lies in the structural
guarantees of the preceding sections: exact additive node-level
attribution and, by marginal split, an exact variable-level decomposition
(\cref{prop:attribution-node,prop:attribution-variable}), the
deterministic Lipschitz robust-radius certificate
(\cref{prop:lipschitz-kpp,prop:classification-robust-radius}), the
trace-based Rademacher and effective-dimension diagnostics
(\cref{prop:trace-rademacher,eq:effective-dimension}), and the
partition oracles of
\cref{cor:variance-oracle,cor:entropy-oracle}. None of these are
available out of the box from the boosted baselines.
Per-dataset breakdown tables with detailed results are provided in
\cref{tab:bench-breast-cancer,tab:bench-california,tab:bench-concrete,tab:bench-spambase,tab:bench-wine}.

\subsection{Aggregate ranking across a 40-dataset suite (Friedman--Nemenyi)}
\label{sec:bench-demsar}

The five-dataset slice of \cref{sec:bench-five-datasets} is too small to
support a statistical statement about how KPP ranks against the baselines
in aggregate. We therefore extend the comparison to a broader suite of
$40$ datasets --- $20$ regression and $20$ classification tasks drawn from
the OpenML and scikit-learn collections --- run under the same protocol
($80\%/20\%$ split, five seeds, standard scaling for the raw-feature
baselines) for the same seven methods: KPP and the six baselines of
\cref{sec:bench-five-datasets} (Random Forest, Gradient Boosting, XGBoost,
LightGBM, CatBoost, and ridge / logistic regression on the raw features).

\paragraph{Protocol.}
We follow the standard methodology for comparing multiple methods over
multiple datasets \citep{demsar2006}: a Friedman omnibus test on the
per-dataset ranks and, when the omnibus null is rejected, the Nemenyi
post-hoc test, summarised by a critical-difference (CD) diagram. The two
tasks are analysed \emph{separately} and never pooled: the regression
metric (RMSE) and the classification metric (the error rate
$1 - \mathrm{accuracy}$) are not commensurable and cannot be co-ranked.
Within each task both metrics are lower-is-better, so on a given dataset
rank $1$ is the method with the lowest metric.

\paragraph{Aggregate results.}
On both tasks the tie-corrected Friedman test (as implemented in
\texttt{scipy.stats.friedmanchisquare}) rejects the null of equal mean
ranks: for regression $\chi^2 = 37.95$ ($p = 1.15 \times 10^{-6}$, $N = 20$),
and for classification $\chi^2 = 21.22$ ($p = 1.68 \times 10^{-3}$, $N = 20$),
each over $k = 7$ methods; the regression ranks contain no ties, so the
tie correction is the identity there. The Nemenyi critical difference at $\alpha = 0.05$ is
$\mathrm{CD} = 2.014$ ($q_{0.05} = 2.948$). \Cref{tab:bench-demsar} lists
the mean ranks per method and the post-hoc status relative to KPP, and
\cref{fig:cd-regression,fig:cd-classification} show the corresponding CD
diagrams. KPP attains the best mean rank in regression ($2.30$) and the
second-best in classification ($3.075$, behind CatBoost at $2.675$). Under
the Nemenyi post-hoc test, KPP is in statistical parity with the strongest
baselines on both tasks: in regression it is indistinguishable from
CatBoost, Random Forest, and Gradient Boosting, and separated only from
LightGBM, XGBoost, and the raw-feature ridge baseline; in classification it
is indistinguishable from every baseline except Gradient Boosting. We read
these as parity results, not as evidence of dominance: where the mean-rank
gap exceeds the critical difference it happens to favour KPP, but the
headline of the aggregate comparison is that KPP sits among the top
tree-ensemble methods with no statistically resolvable separation from the
best of them.

\begin{table}[t]
  \centering
  \small
  \caption{Mean ranks of KPP and the six baselines across the 40-dataset
    suite (\cref{sec:bench-demsar}), computed per task and never pooled:
    regression over $N = 20$ datasets (RMSE) and classification over
    $N = 20$ datasets (error rate $1 - \mathrm{accuracy}$), both
    lower-is-better, so rank $1$ is the lowest metric. ``Nemenyi vs.\ KPP''
    is the post-hoc status relative to KPP at $\alpha = 0.05$ (critical
    difference $\mathrm{CD} = 2.014$, $q_{0.05} = 2.948$, $k = 7$):
    \emph{parity} when the mean-rank gap to KPP does not exceed the CD,
    \emph{separated} otherwise; in every separated case here KPP holds the
    lower rank. Methods are ordered by mean rank within each panel.}
  \label{tab:bench-demsar}
  \begin{tabular}{lcc}
    \toprule
    \multicolumn{3}{l}{\emph{(a) Regression --- $N = 20$ datasets (RMSE)}} \\
    Method & Mean rank & Nemenyi vs.\ KPP \\
    \midrule
    KPP              & $2.30$ & ---       \\
    CatBoost         & $2.75$ & parity    \\
    RandomForest     & $3.80$ & parity    \\
    GradientBoosting & $3.90$ & parity    \\
    LightGBM         & $4.55$ & separated \\
    XGBoost          & $4.90$ & separated \\
    Ridge\_raw       & $5.80$ & separated \\
    \midrule
    \multicolumn{3}{l}{\emph{(b) Classification --- $N = 20$ datasets ($1 - \mathrm{accuracy}$)}} \\
    Method & Mean rank & Nemenyi vs.\ KPP \\
    \midrule
    CatBoost         & $2.675$ & parity    \\
    KPP              & $3.075$ & ---       \\
    RandomForest     & $3.700$ & parity    \\
    LogReg\_raw      & $4.225$ & parity    \\
    LightGBM         & $4.425$ & parity    \\
    XGBoost          & $4.675$ & parity    \\
    GradientBoosting & $5.225$ & separated \\
    \bottomrule
  \end{tabular}
\end{table}

\begin{figure}[t]
  \centering
  \includegraphics[width=\textwidth]{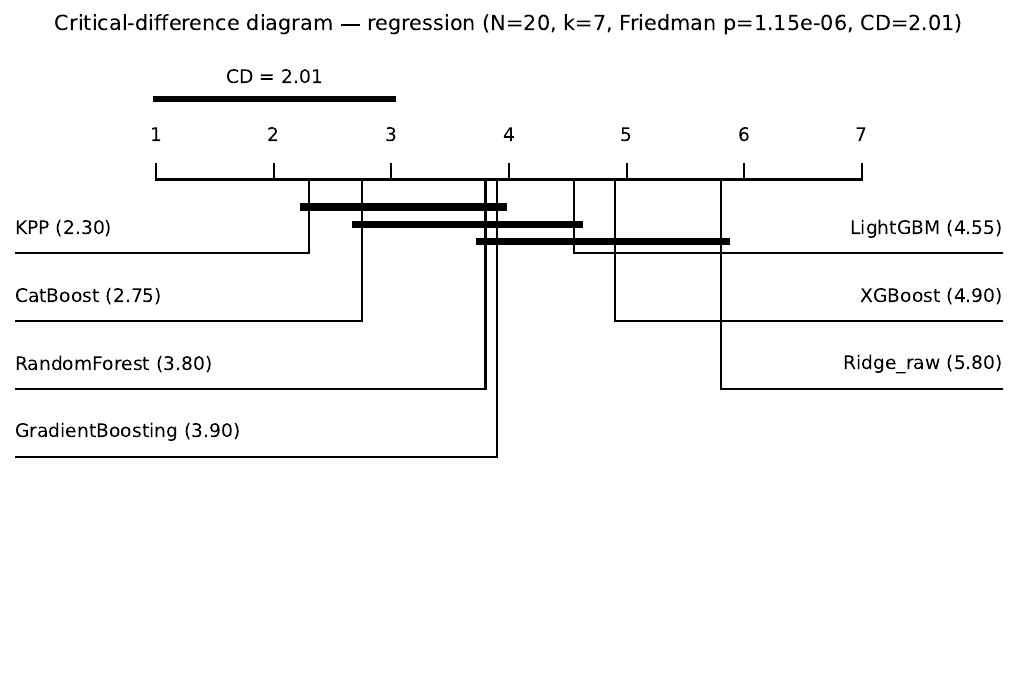}
  \caption{Critical-difference (CD) diagram for the regression task of the
    40-dataset suite (\cref{sec:bench-demsar}): each of the seven methods is
    placed at its mean rank over $N = 20$ datasets, and methods whose
    mean-rank gap does not exceed the Nemenyi critical difference
    ($\mathrm{CD} = 2.014$, $\alpha = 0.05$) are joined by a bar. Rank $1$
    is the best (lowest-metric) method.}
  \label{fig:cd-regression}
\end{figure}

\begin{figure}[t]
  \centering
  \includegraphics[width=\textwidth]{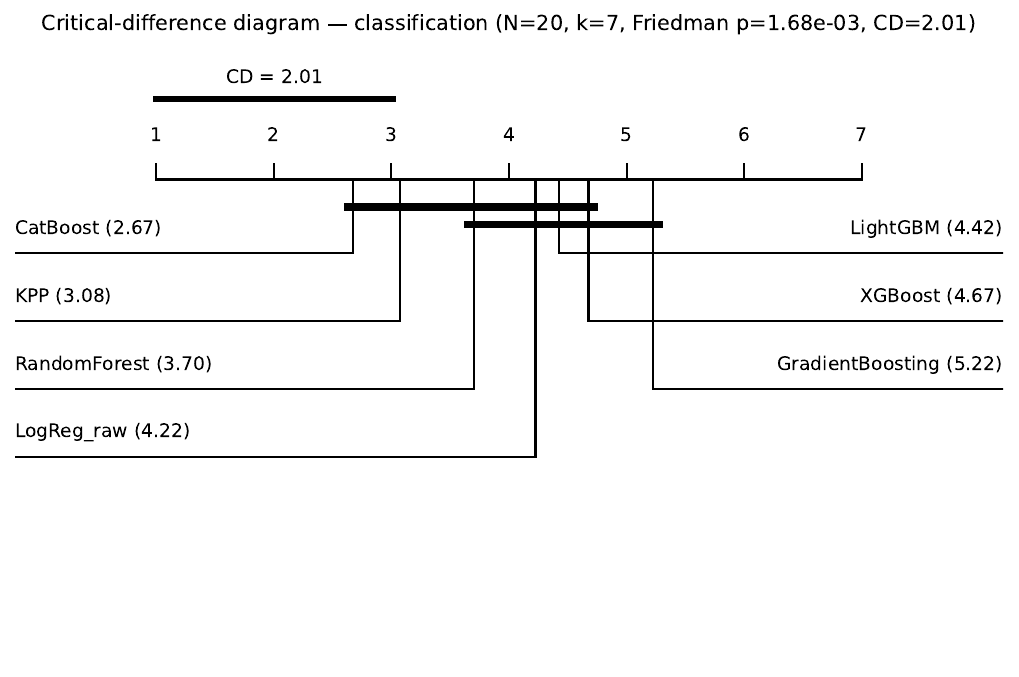}
  \caption{Critical-difference (CD) diagram for the classification task of
    the 40-dataset suite (\cref{sec:bench-demsar}): each of the seven
    methods is placed at its mean rank over $N = 20$ datasets, and methods
    whose mean-rank gap does not exceed the Nemenyi critical difference
    ($\mathrm{CD} = 2.014$, $\alpha = 0.05$) are joined by a bar. Rank $1$
    is the best (lowest-metric) method.}
  \label{fig:cd-classification}
\end{figure}

This aggregate picture is complementary to, not in tension with, the
per-dataset reading of \cref{sec:bench-five-datasets}: attaining rank $1$
on \texttt{spambase\_full} and \texttt{wine\_quality\_red} is a statement
about two individual datasets, whereas the parity above is a statement
about mean ranks, computed separately within each task over its $20$ datasets. The two operate at different granularities and
are reported as such.

\subsection{Conditional dashboard and robust radius}
\label{sec:dashboard-conditional}

The KPP dashboard reports five families of observables that should
be read together rather than in isolation, because they monitor
distinct failure modes of the estimator: capacity, conditioning,
finite-forest stability, partition quality, and margin / robustness.
Each family is theorem-backed by an earlier section under the proof
boundary of \cref{sec:proof-boundary}. Empirical values on the
five-dataset benchmark of \cref{sec:bench-five-datasets} are reported
in
\cref{tab:dashboard-empirical,tab:dashboard-bounds,tab:robust-radius}.

\paragraph{Capacity.}
The normalised trace $\Tr(\Knode) / n$ is the average squared norm
of the KPP feature vectors and the quantity that enters the trace
Rademacher bound of \cref{prop:trace-rademacher}. For ridge
regression, a more relevant algorithm-specific quantity is the
normalised effective dimension $\deff(\lreg) / n$ of
\cref{eq:effective-dimension}; under the ridge bias--variance
decomposition of \cref{prop:ridge-bias-variance}, $\deff(\lreg)$
controls the variance term. If $\deff(\lreg) / n$ is large and
validation error is unstable, increasing $\lreg$, reducing tree
depth, clipping extreme node weights, or simplifying the forest are
the natural actions.

\paragraph{Conditioning.}
The smallest eigenvalue $\mueig_{\min}(\Knode)$ is a warning signal
for collinearity but should not be over-interpreted: in
high-dimensional or sparse regimes $\Knode$ may be singular and
ridge stabilises the relevant system through $\Knode + \lreg I_n$. A
safer conditioning proxy is therefore the shifted quantity
$\mueig_{\min}(\Knode) + \lreg$. When the half-forest diagnostic is
also reported, practitioners sometimes inspect
$(\mueig_{\min}(\Knode) - \Deltak)_+ + \lreg$; this is reasonable as
a guardrail but should be described as a diagnostic unless the
comparison between $\Deltak$ and $\norm{\Knode - \Ginf}_\op$ has
been calibrated, as discussed in
\cref{rem:operator-norm-diagnostic}.

\paragraph{Finite-forest stability.}
Building two independent half-forests and computing
$\Deltak = \norm{\GK^{(1)} - \GK^{(2)}}_\op / \sqrt{2}$ provides a
practical stopping signal: a decreasing curve $K \mapsto \Deltak$
suggests that additional trees are still stabilising the
representation, and a plateau suggests that adding trees is unlikely
to change the geometry much. This is a stopping diagnostic, not a
generalisation theorem; cf.\ \cref{rem:operator-norm-diagnostic}.

\paragraph{Partition quality.}
The empirical variance gain $\widehat{IG}_{\mathrm{var}}(T)$ of
\cref{cor:variance-oracle} and the empirical entropy gain
$\widehat{IG}_h(T)$ of \cref{cor:entropy-oracle} are partition-
quality diagnostics under the honest split of
\cref{sec:proof-boundary}. If they remain small even for deeper or
more diverse forests, the bottleneck is the partition mechanism
rather than the final linear layer.

\paragraph{Margins (classification).}
For classification, the empirical signed margins
$m_i := y_i\, g_{\widehat w}(x_i)$ are reported together with the
empirical CDFs of $m_i$ and $\abs{g_{\widehat w}(x_i)}$ near zero,
and with the robust-accuracy curve of
\cref{eq:robust-accuracy-curve}. Large mass near zero indicates
weak margin and weak local robustness, even when test accuracy is
high.

\paragraph{Dashboard reading guide.}
\Cref{tab:dashboard-summary} compiles the five families into a
unified reading guide: each observable, the symptom it warns about,
and the typical mitigating action.

\begin{table}[t]
  \centering
  \small
  \caption{Dashboard reading guide for the KPP estimator. Common
    rows apply to both tasks; task-specific rows are flagged in the
    Observable column. The qualifiers "large" and "small" are
    qualitative; calibrated thresholds depend on the dataset and on
    the comparator baseline.}
  \label{tab:dashboard-summary}
  \begin{tabular}{p{0.30\textwidth}p{0.33\textwidth}p{0.30\textwidth}}
    \toprule
    Observable & Symptom & Typical action \\
    \midrule
    $\Tr(\Knode) / n$ & Large raw capacity & Reduce depth, clip node weights \\
    $\deff(\lreg) / n$ (reg.) & Many active ridge degrees of freedom & Increase $\lreg$ or simplify the forest \\
    $\mueig_{\min}(\Knode) + \lreg$ & Poor conditioning if small & Increase $\lreg$ \\
    $\Deltak$ & Geometry still unstable in $K$ & Add trees if validation improves; otherwise stop \\
    $\widehat{IG}_{\mathrm{var}}(T)$ (reg.) & Weak partition gain & Revisit splits, depth, or features \\
    $\widehat{IG}_h(T)$ (class.) & Weak partition gain & Revisit splits, depth, or features \\
    Margin CDF near zero (class.) & Weak separation & Increase $\lreg$, tune representation, inspect labels \\
    $\norm{\widehat w}_2$ & Weak robustness if large & Increase $\lreg$, simplify representation \\
    \bottomrule
  \end{tabular}
\end{table}

\paragraph{Empirical dashboard observables.}
\Cref{tab:dashboard-empirical} reports the dashboard observables on
the five benchmark datasets. The entries are five-seed mean $\pm$
standard deviation; entries marked "---" do not apply (the
effective dimension is regression-specific). The empirical
partition-quality gain is $\widehat{IG}_{\mathrm{var}}(T)$ for
regression and $\widehat{IG}_h(T)$ for classification, reported in
the same column with the natural-units convention of each gain
(units of the response variance vs.\ entropy in nats).

\begin{table}[t]
  \centering
  \scriptsize
  \caption{Empirical dashboard observables for the KPP estimator on
    the five benchmark datasets. Five seeds, mean $\pm$ std.
    Conditionally on the learned KPP representation $\Phinode$ (the
    forest is fixed after the training-fold fit, in the sense of
    PB-01 in \cref{sec:proof-boundary}). $\widehat{IG}_\star$ denotes
    $\widehat{IG}_{\mathrm{var}}(T)$ for regression and
    $\widehat{IG}_h(T)$ for classification; end-to-end this full-sample
    regime is the label-dependent regime PB-04
    (\cref{sec:diabetes-honest-crossfit}), so the tabulated quantities
    are conditional diagnostics, not end-to-end guarantees.}
  \label{tab:dashboard-empirical}
  \resizebox{\textwidth}{!}{%
  \begin{tabular}{lrrrrrrr}
    \toprule
    Dataset & $\lreg$ & $\norm{\widehat w}_2$ & $\Tr(\Knode)/n$ & $\deff(\lreg)/n$ & $\mueig_{\min}+\lreg$ & $\Deltak$ & $\widehat{IG}_\star$ \\
    \midrule
    \texttt{breast\_cancer} & $1.00\mathrm{e}{-10}$ & $397.92 \pm 15.54$ & $0.0542 \pm 0.0033$ & --- & $8.02\mathrm{e}{-4} \pm 1.15\mathrm{e}{-4}$ & $0.149 \pm 0.043$ & $0.582 \pm 0.024$ \\
    \texttt{california\_h.\_2000} & $2.34\mathrm{e}{-3}$ & $193.42 \pm 3.38$ & $0.0125 \pm 0.0004$ & $0.625 \pm 0.009$ & $2.72\mathrm{e}{-3} \pm 8.4\mathrm{e}{-5}$ & $0.0565 \pm 0.0111$ & $1.245 \pm 0.058$ \\
    \texttt{concrete} & $2.07\mathrm{e}{-4}$ & $1437.73 \pm 10.17$ & $0.0406 \pm 0.0013$ & $0.933 \pm 0.011$ & $2.07\mathrm{e}{-4} \pm 0$ & $0.112 \pm 0.023$ & $266.72 \pm 21.05$ \\
    \texttt{spambase\_full} & $1.44\mathrm{e}{-7}$ & $759.30 \pm 4.12$ & $0.0057 \pm 0.0001$ & --- & $1.44\mathrm{e}{-7} \pm 0$ & $0.0917 \pm 0.0283$ & $0.575 \pm 0.002$ \\
    \texttt{wine\_quality\_red} & $7.85\mathrm{e}{-3}$ & $118.11 \pm 0.95$ & $0.0101 \pm 0.0002$ & $0.425 \pm 0.004$ & $7.85\mathrm{e}{-3} \pm 0$ & $0.0246 \pm 0.0063$ & $0.478 \pm 0.032$ \\
    \bottomrule
  \end{tabular}%
  }
\end{table}

\paragraph{Bounds RG-02 / RG-03 (regression) and CL-02 / CL-03 (classification).}
RG-02 and CL-02 denote the regression and classification trace
Rademacher bounds obtained from \cref{prop:trace-rademacher}, the
classification version entering through the Lipschitz contraction
step in the proof of \cref{thm:uniform-logistic-bound}, while RG-03
is the uniform squared-loss bound of
\cref{prop:uniform-squared-loss-bound}
(\cref{eq:uniform-squared-loss-bound}) and CL-03 the uniform logistic
bound of \cref{thm:uniform-logistic-bound}
(\cref{eq:uniform-logistic-bound}). The trace Rademacher bound and the
uniform squared-loss / logistic bounds specialised at
$B = \norm{\widehat w}_2$ produce four numerical diagnostics per
dataset. These are conditional bounds evaluated at the realised fit: the radius $B$ is taken from the fitted estimator
$\widehat w$, not over a pre-specified hypothesis class. They
should not be confused with uniform generalisation bounds over the
ball $\set{g : \norm{w}_2 \le B}$, for which $B$ would be fixed
\emph{before} the fit. A bound is reported as \emph{trivial} when it
exceeds the loss of the trivial constant predictor (the null
baseline) it is meant to improve on:
$M^2 = (\sup_i \abs{y_i})^2$ for regression (an upper bound on the
squared loss of the null predictor $\widehat y \equiv 0$), and $\log 2$
for classification (the logistic loss of the constant-$1/2$ predictor); the count of seeds for which the bound is
trivial is reported in
\cref{tab:dashboard-bounds}. Here $\sup_i \abs{y_i}$ is the in-sample plug-in for the a-priori almost-sure envelope $M$ of \cref{prop:uniform-squared-loss-bound}, not itself an almost-sure bound on $Y$; any bound or threshold evaluated with it is therefore an empirical surrogate, in the same sense as the post-hoc radius $B = \norm{\widehat w}_2$.

\begin{table}[t]
  \centering
  \small
  \caption{Conditional bounds for the KPP estimator at
    $B = \norm{\widehat w}_2$. Five seeds, mean $\pm$ std.
    "Trace" denotes the symmetrised trace term
    $2 B \sqrt{\Tr(\Knode)} / n$ (twice the empirical Rademacher
    complexity of \cref{prop:trace-rademacher}, from the symmetrisation
    step in \cref{eq:uniform-logistic-bound}). The logistic bound
    \cref{eq:uniform-logistic-bound} uses this term directly, whereas the
    regression bound \cref{eq:uniform-squared-loss-bound} scales it by the
    loss factor $2(M+B)$; the column reports the loss-independent
    symmetrised Rademacher term common to both;
    "Uniform" denotes the full uniform bound RG-03
    (regression, \cref{eq:uniform-squared-loss-bound} at
    $\delta = 0.05$) or CL-03 (classification,
    \cref{eq:uniform-logistic-bound} at $\delta = 0.05$). The
    "Trivial" column reports the number of seeds (out of five) for
    which the uniform bound exceeds the trivial-constant-predictor loss ($M^2$ for
    regression, $\log 2 \approx 0.693$ for classification); end-to-end
    this full-sample regime is the label-dependent regime PB-04
    (\cref{sec:diabetes-honest-crossfit}), so the tabulated quantities
    are conditional diagnostics, not end-to-end guarantees. The uniform
    bounds RG-03 / CL-03 are stated for the homogeneous class $\Fcal_B$,
    whereas the reference implementation fits a separate unpenalised
    intercept outside $\Fcal_B$ (\cref{sec:setup-risk}); the tabulated
    bounds accordingly pertain to the homogeneous predictor, a
    conditional status assumed here.}
  \label{tab:dashboard-bounds}
  \begin{tabular}{lrrr}
    \toprule
    Dataset & Trace & Uniform & Trivial (of 5) \\
    \midrule
    \texttt{breast\_cancer} & $8.66 \pm 0.13$ & $169.17 \pm 6.13$ & $5$ \\
    \texttt{california\_h.\_2000} & $1.0797 \pm 0.0081$ & $16902.29 \pm 560.99$ & $5$ \\
    \texttt{concrete} & $20.18 \pm 0.27$ & $1\,434\,731 \pm 18\,532$ & $5$ \\
    \texttt{spambase\_full} & $1.889 \pm 0.010$ & $105.72 \pm 0.56$ & $5$ \\
    \texttt{wine\_quality\_red} & $0.663 \pm 0.003$ & $7579.19 \pm 111.05$ & $5$ \\
    \bottomrule
  \end{tabular}
\end{table}

\Cref{tab:dashboard-bounds} confirms that on this benchmark the
uniform bounds, evaluated at the realised $B = \norm{\widehat w}_2$,
are trivial in every seed: the radius $B$ produced by the
unconstrained ridge / logistic fit is large enough that the
trace term times $B$ saturates the trivial-constant-predictor loss. The trace
term itself --- which scales as $B \sqrt{\Tr(\Knode)} / n$ ---
remains a useful comparative diagnostic across datasets and
configurations, but should not be reported as a non-vacuous uniform
generalisation bound at the fitted $B$. This is the
methodological asymmetry that Subsection~7.1 will frame radically.

\paragraph{Robust radius --- empirical certificates.}
\Cref{tab:robust-radius} reports the deterministic robust-radius
certificates of \cref{eq:robust-mse,eq:robust-accuracy-curve} on the
same five datasets. For regression, the certified observable is
$\mathrm{RobMSE}(u)$ from \cref{eq:robust-mse} at three metric radii
$u \in \set{0, u_{\max}/10, u_{\max}}$, where
$u_{\max} := (M / L)^2$ with $L = \norm{\widehat w_{\lreg}}_2$ and
$M = \sup_i \abs{y_i}$. For classification, the certified observable
is $\mathrm{RobAcc}(u)$ from \cref{eq:robust-accuracy-curve} at the
analogous radii with $u_{\max} := (\max_i \abs{m_i} / L)^2$. The
column $u_{\mathrm{half}}$ records the smallest radius at which the
certified metric crosses $50\%$ of its baseline: $\mathrm{RobAcc}$
crossing $50\%$ for classification, $\mathrm{RobMSE}$ doubling for
regression. These are certificates in the KPP path metric
$\deltaT$, not in the raw input metric.

\begin{table}[t]
  \centering
  \scriptsize
  \caption{Empirical robust-radius certificates for the KPP
    estimator. For classification, the certified observable is
    $\mathrm{RobAcc}(u)$; for regression, it is $\mathrm{RobMSE}(u)$.
    Five seeds, mean $\pm$ std. The $u_{\max}$ column reports the
    largest radius plotted, $u_{\max} := (M/L)^2$ with $M = \sup_i \abs{y_i}$
    for regression and $u_{\max} := (\max_i \abs{m_i}/L)^2$ for
    classification, at which the worst-case margin budget is exhausted (for
    regression the bound continues to grow beyond it); $u_{\mathrm{half}}$ is the
    smallest radius at which $\mathrm{RobAcc}$ crosses $50\%$ of its baseline
    (classification) or $\mathrm{RobMSE}$ doubles (regression)
    (cf.\ \cref{eq:robust-accuracy-curve,eq:robust-mse}).
    Certificates are deterministic in $\deltaT$, not in any raw
    input-space metric. All reported robust-radius curves are evaluated on the held-out test set.}
  \label{tab:robust-radius}
  \resizebox{\textwidth}{!}{%
  \begin{tabular}{llrrrrrr}
    \toprule
    Dataset & Task & $L = \norm{\widehat w}_2$ & $u_{\max}$ & RobMetric$(0)$ & RobMetric$(u_{\max}/10)$ & RobMetric$(u_{\max})$ & $u_{\mathrm{half}}$ \\
    \midrule
    \texttt{breast\_cancer} & class.\ (Acc) & $397.92 \pm 15.54$ & $2.44\mathrm{e}{-3} \pm 2.88\mathrm{e}{-4}$ & $0.972 \pm 0.004$ & $0.925 \pm 0.012$ & $0.000 \pm 0.000$ & $1.74\mathrm{e}{-3} \pm 2.0\mathrm{e}{-4}$ \\
    \texttt{california\_h.\_2000} & reg.\ (MSE) & $193.42 \pm 3.38$ & $6.69\mathrm{e}{-4} \pm 2.4\mathrm{e}{-5}$ & $0.300 \pm 0.041$ & $4.137 \pm 0.116$ & $29.06 \pm 0.27$ & $2.33\mathrm{e}{-6} \pm 4.2\mathrm{e}{-7}$ \\
    \texttt{concrete} & reg.\ (MSE) & $1437.73 \pm 10.17$ & $3.29\mathrm{e}{-3} \pm 4.0\mathrm{e}{-5}$ & $18.45 \pm 1.75$ & $870.60 \pm 8.75$ & $7245.62 \pm 46.70$ & $3.16\mathrm{e}{-6} \pm 3.8\mathrm{e}{-7}$ \\
    \texttt{spambase\_full} & class.\ (Acc) & $759.30 \pm 4.12$ & $8.26\mathrm{e}{-5} \pm 6.7\mathrm{e}{-6}$ & $0.954 \pm 0.003$ & $0.846 \pm 0.016$ & $0.001 \pm 0.001$ & $3.35\mathrm{e}{-5} \pm 2.7\mathrm{e}{-6}$ \\
    \texttt{wine\_quality\_red} & reg.\ (MSE) & $118.11 \pm 0.95$ & $4.59\mathrm{e}{-3} \pm 7.3\mathrm{e}{-5}$ & $0.305 \pm 0.027$ & $9.109 \pm 0.130$ & $70.78 \pm 0.35$ & $6.16\mathrm{e}{-6} \pm 5.1\mathrm{e}{-7}$ \\
    \bottomrule
  \end{tabular}%
  }
\end{table}

\Cref{fig:robust-radius-panel} visualises the certificate curves on
the five datasets jointly. The classification panels show the
expected $\mathrm{RobAcc}(u)$ decay from the baseline test accuracy
at $u = 0$ down to $0$ at $u = u_{\max}$; the regression panels show
the $\mathrm{RobMSE}(u)$ growth from the baseline test MSE at
$u = 0$ up to the predicted ceiling at $u = u_{\max}$ (the
quadratic upper bound from \cref{eq:robust-mse}). The classification
$\mathrm{RobAcc}(u_{\max})$ is exactly zero by construction; the small
nonzero entries in \cref{tab:robust-radius} (of order $10^{-3}$, e.g.\
\texttt{spambase\_full}) are floating-point boundary artifacts,
corresponding to at most $k \le 2$ test points for which the strict
margin comparison $m_i > \norm{\widehat w}_2 \sqrt{u_{\max}}$ of
\cref{eq:robust-accuracy-curve} evaluates as true under floating
point, although in exact arithmetic no margin exceeds
$\norm{\widehat w}_2 \sqrt{u_{\max}}$.

\begin{figure}[t]
  \centering
  \includegraphics[width=\textwidth]{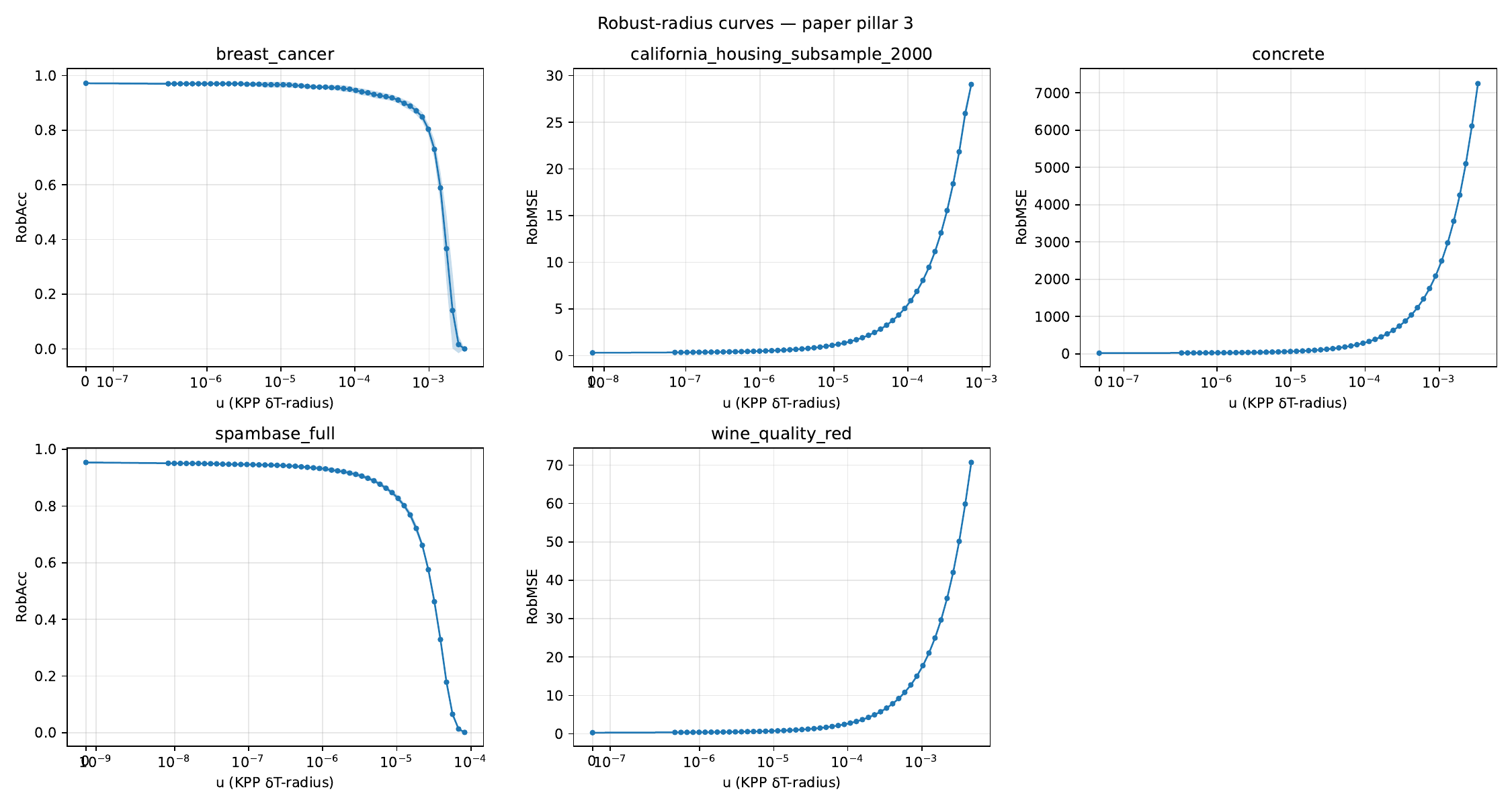}
  \caption{Robust-radius certificate curves on the five benchmark
    datasets, one panel per dataset arranged row-major in a
    $2 \times 3$ grid. Top row: \texttt{breast\_cancer}
    (classification, $\mathrm{RobAcc}(u)$ from
    \cref{eq:robust-accuracy-curve}),
    \texttt{california\_housing\_subsample\_2000} (regression,
    $\mathrm{RobMSE}(u)$ from \cref{eq:robust-mse}), and
    \texttt{concrete} (regression, $\mathrm{RobMSE}(u)$). Bottom row:
    \texttt{spambase\_full} (classification, $\mathrm{RobAcc}(u)$) and
    \texttt{wine\_quality\_red} (regression, $\mathrm{RobMSE}(u)$); the
    sixth cell is empty. Five-seed bands (shaded) and means (solid).
    Certificates are deterministic in the KPP path metric $\deltaT$ and
    should not be confused with robustness to arbitrary perturbations
    in the raw input metric.}
  \label{fig:robust-radius-panel}
\end{figure}

\subsection{Honest and cross-fit KPP on the sklearn Diabetes dataset}
\label{sec:diabetes-honest-crossfit}
The benchmark of \cref{sec:bench-five-datasets} reports KPP in the
full-sample regime only, its conditional diagnostics read in the
fixed-representation (PB-01) sense after the fitted representation is
frozen. The Conditional bounds RG-02 /
RG-03 apply equally well
under PB-02 (honest) and PB-03 (cross-fit) once the conditioning on
the KPP representation in the sense of \cref{sec:proof-boundary} is
enforced through a fold separation. Because the same labels build the representation and fit the final model, this full-sample setting is the label-dependent regime PB-04 of \cref{sec:proof-boundary}: the conditional bounds hold given the representation but do not by themselves constitute end-to-end guarantees for the full pipeline (cf.\ \cref{app:honesty-matters}). This subsection illustrates the
three regimes side by side on a single regression dataset, in order
to make visible (i) what is invariant across regimes, (ii) what
shifts, and (iii) the empirical cost of moving from PB-01 to PB-02
and PB-03. Diabetes is used as an illustration of the proof-boundary
conventions, not as a sixth row of the benchmark.

\paragraph{Setup.}
We use the \texttt{sklearn} Diabetes dataset ($n = 442$, $p = 10$,
regression). The forest is a $30$-tree Random Forest regressor with
\texttt{max\_depth = 6} and a fixed seed. The ridge penalty
$\lreg = 0.4329$ is selected once on the full sample by $5$-fold
cross-validation and reused across the three regimes; this is a
fair-comparison choice and is not a fully honest nested CV (under
PB-03, a strictly honest pipeline would re-select $\lreg$ on each
fold's fit data). Under PB-02 the partition fold and the fit fold
share each $50\%$ of the sample; under PB-03 we use $K = 5$
independent fold pairs, and the canonical training-set prediction
map is \texttt{predict\_train\_loo}, the leave-one-fold-out
predictor. We report single-seed values; this subsection
illustrates the proof-boundary conventions, not statistical
comparison (the multi-seed protocol is the role of
\cref{sec:bench-five-datasets}). The implementation is the
\texttt{honest\_pipeline} of the companion
\texttt{examples/honest\_demo.py} script.

\paragraph{Observed values.}
\Cref{tab:diabetes-honest} reports the train RMSE and four dashboard
observables for the three regimes. Capacity quantities
($\Tr(\Knode)/n$, $\deff(\lreg)/n$ of \cref{eq:effective-dimension},
$\mueig_{\min}(\Knode)$) are stable across regimes: the forest
configuration is the same and only the data splitting changes. The
post-fit norm $B = \norm{\widehat w}_2$ drops by roughly a third
between PB-01 and PB-02, consistent with the absence of label
leakage from the partition fold to the fit fold under honesty: the
ridge estimator no longer has to over-regularise against features
whose labels were used to grow the partitions. Under PB-03, the
five fold-specific norms $\norm{\widehat w_k}_2$ are tightly
concentrated (mean $662.7$, standard deviation $4.6$ across the five
folds), which is a stability diagnostic on the cross-fit estimator
that has no counterpart under PB-01 or PB-02.

\begin{table}[t]
  \centering
  \small
  \caption{Diabetes ($n=442$, $p=10$, regression): train RMSE and
    dashboard observables under the three proof-boundary regimes of
    \cref{sec:proof-boundary}. PB-01 fits forest and ridge on the
    full sample; PB-02 uses a $50\%/50\%$ partition/fit split;
    PB-03 uses $K=5$ independent fold pairs and reports the
    leave-one-fold-out predictor. The ridge penalty $\lreg = 0.4329$
    is shared across regimes for comparability and is not a fully
    honest nested CV. Single seed.}
  \label{tab:diabetes-honest}
  \begin{tabular}{lrrrrr}
    \toprule
    Regime & RMSE & $\Tr(\Knode)/n$ & $\deff(\lreg)/n$ & $\mueig_{\min}(\Knode)$ & $\norm{\widehat w}_2$ \\
    \midrule
    PB-01 (fixed)      & $45.40$ & $0.143$ & $0.073$ & $3.7\mathrm{e}{-5}$ & $727.4$ \\
    PB-02 (honest)     & $53.26$ & $0.195$ & $0.080$ & $4.8\mathrm{e}{-5}$ & $505.7$ \\
    PB-03 (cross-fit)  & $57.33$ & $0.161$ & $0.080$ & --- & $662.7 \pm 4.6$ \\
    \bottomrule
  \end{tabular}
\end{table}

\paragraph{Reading.}
The train RMSE increases monotonically from PB-01 to PB-02 to
PB-03 (leave-one-fold-out), from $45.4$ to $53.3$ to $57.3$. The three rows are not evaluated under a common protocol. PB-01
reports the resubstitution error of a pipeline whose forest and ridge
both saw every label, PB-02 evaluates on the full sample a ridge fit
on the disjoint fit fold only, a mixed in-sample and out-of-sample
evaluation, and PB-03 reports the leave-one-fold-out prediction map,
which is entirely out of fold. The monotone increase therefore
conflates two effects acting in the same direction, the removal of
in-sample optimism and the smaller label budget available to each
ridge fit, and should be read as an illustration of the three
pipeline conventions of \cref{sec:proof-boundary} rather than as a
like-for-like measurement of the cost of conditioning alone. The
forest-level variance-decrease diagnostic $IG_{\mathrm{var}}(T)$ of
\cref{cor:variance-oracle} drops by roughly $16\%$ between PB-01
and PB-02 ($3609 \to 3029$) for the same reason: the partition fold
under PB-02 contains half the labelled samples available under
PB-01. Under PB-03, the leave-one-fold-out predictor
\texttt{predict\_train\_loo} is the canonical training-sample
prediction map on which the Conditional bounds of
\cref{sec:proof-boundary} can be read directly; averaging the $K$
fold-specific predictors gives a distinct ensemble form whose
train RMSE on this dataset ($46.7$) is close to PB-01 but is not
the quantity bounded by RG-02 / RG-03 and should not be reported
as such. The asymmetry between regression and classification noted
in \cref{sec:radical-honesty} does not appear here because Diabetes
is a regression task; PB-02 and PB-03 admit analogous Conditional
bounds under any of the three regimes for both tasks, with the
same proof.

\section{Discussion}
\label{sec:discussion}

We close by enumerating the open problems and limitations that are
left explicit by the proof boundaries of the present paper, and by
naming the directions in which the KPP construction admits natural
extensions. None of what follows is claimed as work in progress;
each item is recorded so that the scope of the paper's results is
unambiguous.

\paragraph{Fast-rate excess-risk statements (FR-R-01, FR-C-01).}
The uniform Rademacher bounds of
\cref{prop:uniform-squared-loss-bound,thm:uniform-logistic-bound}
deliver $1/\sqrt n$ uniform risk control, conditionally on the KPP
representation and for uniformly bounded norm and response envelopes
$B, M$. Two fast-rate refinements are conjectured but not
proved in this paper. FR-R-01 is a regression fast-rate statement
under a Bernstein-type variance condition or a strong-convexity
condition specific to the KPP design, which would sharpen the
trace-based bound into a faster rate. FR-C-01 is the classification
analogue under \cref{ass:direct-score-margin}, combining the direct
score-margin tail with localised-complexity machinery. Both
statements require additional distributional assumptions and
additional technical apparatus, neither of which is specific to KPP;
they are stated here as open problems and are not promised as
follow-up work.

\paragraph{Open status of the pitfalls M-C-01 and O-C-02.}
The two obstructions named in \cref{sec:radical-honesty} are open
problems within the KPP representation, not artefacts of the present
analysis. For M-C-01 (\cref{app:counterexample-margin}), the
geometric-margin condition in the KPP path metric $\deltaT$ does not
imply a score-margin tail; closing this gap would require either a
distinct margin condition on the score or a structural assumption
on the trees that ties $\deltaT$-mass to $\abs{g_w}$-mass. For
O-C-02 (\cref{rem:raw-forest-sum-pitfall}), the raw forest sum of
per-tree refinement gains is not the refinement gain of a single
forest partition; constructing the correct common-refinement
oracle, together with its finite-sample concentration, is a
separate problem. We do not treat either as part of a planned
revision; they are recorded as the natural next questions a fully
KPP-specific theory would have to answer.

\paragraph{Proof boundaries and conditional guarantees.}
All probabilistic guarantees in this paper are stated conditionally
on the KPP representation, in the sense of \cref{sec:proof-boundary}:
the trees, the node-indexed feature map $\Phinode$, and the Gram
$\Knode$ are fixed before the final ridge estimator is analysed.
The honest and cross-fit constructions of
\cref{sec:proof-boundary,sec:diabetes-honest-crossfit} are the
mechanisms by which this conditioning is enforced in practice. A
fully unconditional analysis of a forest-and-ridge pipeline, in
which tree growth and ridge fitting are coupled through a single
training sample, is outside the present scope; it would require
either a stability analysis of the representation-building step or
a uniform complexity bound on the class of possible forests,
neither of which we attempt here (cf.\ \cref{app:honesty-matters}).
The robust-radius certificates of \cref{sec:robust-radius} are
deterministic in the KPP metric $\deltaT$ and do not certify the
same model against arbitrary perturbations in the raw input metric;
this is a property of the representation, not a defect to be
removed.

\paragraph{Extensions and natural directions.}
The construction extends in several directions that are not pursued
here. Multiclass classification with a one-vs-rest or multinomial
logistic surrogate would fit the same trace-based framework with
minor constant changes. Non-axis-aligned splits, oblique cuts, or
linear leaf models change the per-node geometry but leave the
squared-Euclidean path-isometry structure intact at the
representation level. Other regularised loss functions (Huber,
quantile) admit analogous uniform bounds under their standard
Lipschitz constants. The multi-ensemble setting --- in which $E$
ensembles, possibly differing in subsampling, feature selection, or
splitting protocol, are aggregated into a block-structured KPP Gram
--- preserves the squared-Euclidean path-isometric structure at the
representation level and is conjectured to admit analogous
trace-based Rademacher bounds under appropriate normalisation, with
the degenerate case of one tree per ensemble as a natural
intermediate. Each of these directions inherits the same
proof-boundary discipline as the present paper: they are stated as
extensions a future paper might pursue, not as claims of the
present one.

\bibliography{references}

\begin{thebibliography}{38}
\providecommand{\natexlab}[1]{#1}
\providecommand{\url}[1]{\texttt{#1}}
\expandafter\ifx\csname urlstyle\endcsname\relax
  \providecommand{\doi}[1]{doi: #1}\else
  \providecommand{\doi}{doi: \begingroup \urlstyle{rm}\Url}\fi

\bibitem[Agarwal et~al.(2022)Agarwal, Tan, Ronen, Singh, and
  Yu]{Agarwal2022HierarchicalShrinkage}
Abhineet Agarwal, Yan~Shuo Tan, Omer Ronen, Chandan Singh, and Bin Yu.
\newblock Hierarchical shrinkage: improving the accuracy and interpretability
  of tree-based methods.
\newblock In \emph{International Conference on Machine Learning}, 2022.

\bibitem[Agarwal et~al.(2025)Agarwal, Kenney, Tan, Tang, and
  Yu]{Agarwal2025MDIPlus}
Abhineet Agarwal, Ana~M. Kenney, Yan~Shuo Tan, Tiffany~M. Tang, and Bin Yu.
\newblock Integrating random forests and generalized linear models for improved
  accuracy and interpretability, 2025.
\newblock arXiv preprint; first version titled ``MDI+: A Flexible Random
  Forest-Based Feature Importance Framework''.

\bibitem[Andriushchenko \& Hein(2019)Andriushchenko and
  Hein]{AndriushchenkoHein2019ProvablyRobust}
Maksym Andriushchenko and Matthias Hein.
\newblock Provably robust boosted decision stumps and trees against adversarial
  attacks.
\newblock In \emph{Advances in Neural Information Processing Systems}, 2019.

\bibitem[Athey et~al.(2019)Athey, Tibshirani, and
  Wager]{AtheyTibshiraniWager2019GRF}
Susan Athey, Julie Tibshirani, and Stefan Wager.
\newblock Generalized random forests.
\newblock \emph{The Annals of Statistics}, 47\penalty0 (2):\penalty0
  1148--1178, 2019.

\bibitem[Bartlett \& Mendelson(2002)Bartlett and
  Mendelson]{BartlettMendelson2002Rademacher}
Peter~L. Bartlett and Shahar Mendelson.
\newblock Rademacher and gaussian complexities: Risk bounds and structural
  results.
\newblock \emph{Journal of Machine Learning Research}, 3:\penalty0 463--482,
  2002.

\bibitem[Bartlett et~al.(2006)Bartlett, Jordan, and
  McAuliffe]{BartlettJordanMcAuliffe2006Calibration}
Peter~L. Bartlett, Michael~I. Jordan, and Jon~D. McAuliffe.
\newblock Convexity, classification, and risk bounds.
\newblock \emph{Journal of the American Statistical Association}, 101\penalty0
  (473):\penalty0 138--156, 2006.

\bibitem[B{\'e}nard et~al.(2024)B{\'e}nard, N{\"a}f, and
  Josse]{BenardNafJosse2024MMD}
Cl{\'e}ment B{\'e}nard, Jeffrey N{\"a}f, and Julie Josse.
\newblock {MMD}-based variable importance for distributional random forests.
\newblock In \emph{International Conference on Artificial Intelligence and
  Statistics}, 2024.

\bibitem[Biau \& Scornet(2016)Biau and Scornet]{BiauScornet2016GuidedTour}
G{\'e}rard Biau and Erwan Scornet.
\newblock A random forest guided tour.
\newblock \emph{TEST}, 25\penalty0 (2):\penalty0 197--227, 2016.

\bibitem[Breiman(2001)]{Breiman2001RandomForests}
Leo Breiman.
\newblock Random forests.
\newblock \emph{Machine Learning}, 45\penalty0 (1):\penalty0 5--32, 2001.

\bibitem[Capitaine et~al.(2024)Capitaine, Bigot, Thiébaut, and
  Genuer]{Capitaine2024FrechetRF}
Louis Capitaine, Jérémie Bigot, Rodolphe Thiébaut, and Robin Genuer.
\newblock Fréchet random forests for metric space valued regression with
  non-{E}uclidean predictors.
\newblock \emph{Journal of Machine Learning Research}, 25:\penalty0 1--41,
  2024.

\bibitem[Chen et~al.(2019)Chen, Zhang, Boning, and
  Hsieh]{Chen2019RobustnessVerification}
Hongge Chen, Huan Zhang, Duane~S. Boning, and Cho-Jui Hsieh.
\newblock Robustness verification of tree-based models.
\newblock In \emph{Advances in Neural Information Processing Systems}, 2019.

\bibitem[Chen \& Guestrin(2016)Chen and Guestrin]{ChenGuestrin2016XGBoost}
Tianqi Chen and Carlos Guestrin.
\newblock Xgboost: A scalable tree boosting system.
\newblock In \emph{Proceedings of the 22nd ACM SIGKDD International Conference
  on Knowledge Discovery and Data Mining}, pp.\  785--794, 2016.

\bibitem[Davies \& Ghahramani(2014)Davies and
  Ghahramani]{DaviesGhahramani2014RandomPartitionKernels}
Alex Davies and Zoubin Ghahramani.
\newblock The random forest kernel and other kernels for big data from random
  partitions.
\newblock In \emph{Advances in Neural Information Processing Systems}, 2014.

\bibitem[Dem{\v{s}}ar(2006)]{demsar2006}
Janez Dem{\v{s}}ar.
\newblock Statistical comparisons of classifiers over multiple data sets.
\newblock \emph{Journal of Machine Learning Research}, 7:\penalty0 1--30, 2006.

\bibitem[Feng \& Zhou(2017)Feng and Zhou]{FengZhou2017AutoEncoderForest}
Ji~Feng and Zhi-Hua Zhou.
\newblock {AutoEncoder} by forest.
\newblock \emph{arXiv preprint arXiv:1709.09018}, 2017.

\bibitem[Friedman(2001)]{Friedman2001GBM}
Jerome~H. Friedman.
\newblock Greedy function approximation: A gradient boosting machine.
\newblock \emph{The Annals of Statistics}, 29\penalty0 (5):\penalty0
  1189--1232, 2001.

\bibitem[Friedman \& Popescu(2008)Friedman and
  Popescu]{FriedmanPopescu2008RuleEnsembles}
Jerome~H. Friedman and Bogdan~E. Popescu.
\newblock Predictive learning via rule ensembles.
\newblock \emph{Annals of Applied Statistics}, 2\penalty0 (3):\penalty0
  916--954, 2008.
\newblock \doi{10.1214/07-AOAS148}.

\bibitem[Geurts et~al.(2006)Geurts, Ernst, and
  Wehenkel]{GeurtsErnstWehenkel2006ExtraTrees}
Pierre Geurts, Damien Ernst, and Louis Wehenkel.
\newblock Extremely randomized trees.
\newblock \emph{Machine Learning}, 63\penalty0 (1):\penalty0 3--42, 2006.

\bibitem[Haddouchi \& Berrado(2024)Haddouchi and
  Berrado]{HaddouchiBerrado2024Survey}
Maissae Haddouchi and Abdelaziz Berrado.
\newblock A survey and taxonomy of methods interpreting random forest models.
\newblock \emph{arXiv preprint arXiv:2407.12759}, 2024.

\bibitem[Ke et~al.(2017)Ke, Meng, Finley, Wang, Chen, Ma, Ye, and
  Liu]{KeEtAl2017LightGBM}
Guolin Ke, Qi~Meng, Thomas Finley, Taifeng Wang, Wei Chen, Weidong Ma, Qiwei
  Ye, and Tie-Yan Liu.
\newblock Lightgbm: A highly efficient gradient boosting decision tree.
\newblock In \emph{Advances in Neural Information Processing Systems}, 2017.

\bibitem[Klusowski \& Tian(2024)Klusowski and
  Tian]{KlusowskiTian2024LargeScale}
Jason~M. Klusowski and Peter~M. Tian.
\newblock Large scale prediction with decision trees.
\newblock \emph{Journal of the American Statistical Association}, 119\penalty0
  (545):\penalty0 525--537, 2024.

\bibitem[Liang et~al.(2025)Liang, Rewolinski, Agarwal, Tang, and
  Yu]{Liang2025LMDIPlus}
Zachary Liang, Aaron Rewolinski, Abhineet Agarwal, Tiffany~M. Tang, and Bin Yu.
\newblock {LMDI+}: Local feature importances for tree-based models.
\newblock \emph{arXiv preprint arXiv:2506.08928}, 2025.

\bibitem[Linial et~al.(1995)Linial, London, and
  Rabinovich]{LinialLondonRabinovich1995Geometry}
Nathan Linial, Eran London, and Yuri Rabinovich.
\newblock The geometry of graphs and some of its algorithmic applications.
\newblock \emph{Combinatorica}, 15\penalty0 (2):\penalty0 215--245, 1995.

\bibitem[Lundberg \& Lee(2017)Lundberg and Lee]{LundbergLee2017SHAP}
Scott~M. Lundberg and Su-In Lee.
\newblock A unified approach to interpreting model predictions.
\newblock In \emph{Advances in Neural Information Processing Systems}, 2017.

\bibitem[Lundberg et~al.(2020)Lundberg, Erion, Chen, DeGrave, Prutkin, Nair,
  Katz, Himmelfarb, Bansal, and Lee]{LundbergErionChen2020TreeSHAP}
Scott~M. Lundberg, Gabriel~G. Erion, Hugh Chen, Alex DeGrave, Jordan~M.
  Prutkin, Bala Nair, Ronit Katz, Jonathan Himmelfarb, Nisha Bansal, and Su-In
  Lee.
\newblock From local explanations to global understanding with explainable ai
  for trees.
\newblock \emph{Nature Machine Intelligence}, 2\penalty0 (1):\penalty0 56--67,
  2020.

\bibitem[Mammen \& Tsybakov(1999)Mammen and
  Tsybakov]{MammenTsybakov1999SmoothDiscrimination}
Enno Mammen and Alexandre~B. Tsybakov.
\newblock Smooth discrimination analysis.
\newblock \emph{The Annals of Statistics}, 27\penalty0 (6):\penalty0
  1808--1829, 1999.

\bibitem[Mourtada et~al.(2020)Mourtada, Ga{\"\i}ffas, and
  Scornet]{MourtadaGaiffasScornet2020Mondrian}
Jaouad Mourtada, St{\'e}phane Ga{\"\i}ffas, and Erwan Scornet.
\newblock Minimax optimal rates for {Mondrian} trees and forests.
\newblock \emph{The Annals of Statistics}, 48\penalty0 (4):\penalty0
  2253--2276, 2020.
\newblock \doi{10.1214/19-AOS1886}.

\bibitem[Panda et~al.(2018)Panda, Shen, and Vogelstein]{Panda2018KMERF}
Sambit Panda, Cencheng Shen, and Joshua~T. Vogelstein.
\newblock Learning interpretable characteristic kernels via decision forests.
\newblock \emph{arXiv preprint arXiv:1812.00029}, 2018.

\bibitem[Prokhorenkova et~al.(2018)Prokhorenkova, Gusev, Vorobev, Dorogush, and
  Gulin]{Prokhorenkova2018CatBoost}
Liudmila Prokhorenkova, Gleb Gusev, Aleksandr Vorobev, Anna~Veronika Dorogush,
  and Andrey Gulin.
\newblock {CatBoost}: Unbiased boosting with categorical features.
\newblock In \emph{Advances in Neural Information Processing Systems}, 2018.

\bibitem[Raymaekers et~al.(2025)Raymaekers, Rousseeuw, Servotte, Verdonck, and
  Yao]{Raymaekers2025RaFFLE}
Jakob Raymaekers, Peter~J. Rousseeuw, Thomas Servotte, Tim Verdonck, and
  Ruicong Yao.
\newblock A powerful random forest featuring linear extensions ({RaFFLE}).
\newblock \emph{arXiv preprint arXiv:2502.10185}, 2025.

\bibitem[Rhodes et~al.(2023)Rhodes, Cutler, and Moon]{Rhodes2023RFGAP}
Jake~S. Rhodes, Adele Cutler, and Kevin~R. Moon.
\newblock Geometry- and accuracy-preserving random forest proximities.
\newblock \emph{arXiv preprint arXiv:2201.12682}, 2023.

\bibitem[Scornet(2016)]{Scornet2016RFKernel}
Erwan Scornet.
\newblock Random forests and kernel methods.
\newblock \emph{IEEE Transactions on Information Theory}, 62\penalty0
  (3):\penalty0 1485--1500, 2016.

\bibitem[Scornet \& Hooker(2025)Scornet and
  Hooker]{ScornetHooker2025TheoryReview}
Erwan Scornet and Giles Hooker.
\newblock Theory of random forests: A review.
\newblock \emph{HAL preprint hal-05006431}, 2025.

\bibitem[Shi \& Horvath(2006)Shi and Horvath]{ShiHorvath2006UnsupervisedRF}
Tao Shi and Steve Horvath.
\newblock Unsupervised learning with random forest predictors.
\newblock \emph{Journal of Computational and Graphical Statistics}, 15\penalty0
  (1):\penalty0 118--138, 2006.

\bibitem[Sutera et~al.(2021)Sutera, Louppe, Huynh-Thu, Wehenkel, and
  Geurts]{Sutera2021LocalMDI}
Antonio Sutera, Gilles Louppe, V{\^a}n~Anh Huynh-Thu, Louis Wehenkel, and
  Pierre Geurts.
\newblock From global to local {MDI} variable importances for random forests
  and when they are {Shapley} values.
\newblock In \emph{Advances in Neural Information Processing Systems}, 2021.

\bibitem[Tropp(2012)]{Tropp2012MatrixBernstein}
Joel~A. Tropp.
\newblock User-friendly tail bounds for sums of random matrices.
\newblock \emph{Foundations of Computational Mathematics}, 12\penalty0
  (4):\penalty0 389--434, 2012.

\bibitem[Vu et~al.(2025)Vu, Kapar, Wright, and
  Watson]{VuKaparWrightWatson2025RFAE}
Binh~Duc Vu, Jan Kapar, Marvin~N. Wright, and David~S. Watson.
\newblock Autoencoding random forests.
\newblock \emph{arXiv preprint arXiv:2505.21441}, 2025.

\bibitem[Wager \& Athey(2018)Wager and Athey]{WagerAthey2018HonestForests}
Stefan Wager and Susan Athey.
\newblock Estimation and inference of heterogeneous treatment effects using
  random forests.
\newblock \emph{Journal of the American Statistical Association}, 113\penalty0
  (523):\penalty0 1228--1242, 2018.

\end{thebibliography}
\bibliographystyle{tmlr}

\appendix

\section{Norm bound for the normalized embedding}
\label{app:norm-bound}

We prove \cref{prop:norm-bound}. Fix a tree $t$, and let
\[
  r_t = u_0, u_1, \dots, u_L = \ell_t(x)
\]
be the root-to-leaf path followed by $x$. The nonzero coordinates of
$\phit(x)$ are the root coordinate and the coordinates of the
non-root nodes on this path. Hence
\begin{align}
  \norm{\phit(x)}_2^2
  &= \frac{\nodew_t(r_t)}{2}
     + \sum_{j=1}^{L} \frac{\nodew_t(u_{j-1}) + \nodew_t(u_j)}{2} \\
  &= \nodew_t(r_t) + \sum_{j=1}^{L-1} \nodew_t(u_j),
\end{align}
where we used $\nodew_t(u_L) = 0$ because $u_L$ is a leaf and
\cref{ass:zero-leaf-weights} sets leaf weights to zero. Therefore
\begin{equation}
  \norm{\phit(x)}_2^2
  \;\le\; \sum_{v \in V_t} \nodew_t(v) \;=\; A_t.
\end{equation}
Summing over trees gives
\begin{equation}
  \norm{\phiT(x)}_2^2
  \;=\; \sum_{t \in T} \norm{\phit(x)}_2^2
  \;\le\; \sum_{t \in T} A_t
  \;=\; \ST.
\end{equation}
Dividing by $\ST$ yields $\norm{\Phinode(x)}_2 \le 1$.

\section{Nonzero leaf weights: what can and cannot be embedded}
\label{app:nonzero-leaf-weights}

The main text imposes \cref{ass:zero-leaf-weights}. This appendix
records the precise caveat behind that convention.

\paragraph{Why the raw node-path formula cannot include arbitrary leaf weights.}
With the path convention of \cref{lem:path-decomposition}, if two
inputs reach the same leaf $\ell$, then
$\sigma_t(x, x') = \set{\ell}$. Hence the raw formula
\begin{equation}
  d_t^{\mathrm{raw}}(x, x')
  \;:=\;
  \sum_{v \in \sigma_t(x, x')} \nodew_t(v)
\end{equation}
would give $d_t^{\mathrm{raw}}(x, x') = \nodew_t(\ell)$, even when
$x = x'$. Therefore no squared Euclidean embedding can satisfy
\begin{equation}
  \norm{\psi_t(x) - \psi_t(x')}_2^2
  \;=\;
  d_t^{\mathrm{raw}}(x, x')
\end{equation}
for all pairs if some leaf has positive weight: squared Euclidean
distances always vanish on the diagonal.

\paragraph{A zero-diagonal alternative with endpoint leaf contributions.}
If nonzero leaf weights are needed, the distance itself must be
adjusted so that points in the same leaf have distance zero. A
natural choice is the endpoint-leaf separation distance
\begin{equation}
  d_t^{\mathrm{sep}}(x, x')
  \;:=\;
  \sum_{v \in \sigma_t(x, x') \cap V_t^{\mathrm{int}}} \nodew_t(v)
     + \one\set{\ell_t(x) \ne \ell_t(x')}
       \bigl(\nodew_t(\ell_t(x)) + \nodew_t(\ell_t(x'))\bigr).
\end{equation}
This coincides with the usual internal-node path distance when leaf
weights are zero, and it also assigns zero distance to pairs falling
in the same leaf.

It has the following explicit embedding. For every non-root internal
node, keep
\begin{equation}
  \psi_{t, v}(x)
  \;=\;
  \sqrt{\frac{\nodew_t(\parnode(v)) + \nodew_t(v)}{2}}\,
    \one\set{x \in S_v},
  \qquad v \in V_t^{\mathrm{int}},\ v \ne r_t.
\end{equation}
For every leaf $\ell$, use
\begin{equation}
  \psi_{t, \ell}(x)
  \;=\;
  \sqrt{\frac{\nodew_t(\parnode(\ell))}{2} + \nodew_t(\ell)}\,
    \one\set{x \in S_\ell}.
\end{equation}
The root coordinate may again be constant. Then
\begin{equation}
  \norm{\psi_t(x) - \psi_t(x')}_2^2
  \;=\;
  d_t^{\mathrm{sep}}(x, x').
\end{equation}
Indeed, when the two leaves differ, the internal path nodes are
counted exactly by the edge-telescoping argument of
\cref{lem:node-path-edge-sum}, and the two endpoint leaf coordinates
contribute their own leaf weights. When the two inputs reach the
same leaf, all indicator coordinates agree, so both sides are zero.

This construction is not used in the main text. The canonical KPP
core in \cref{sec:kpp-construction} keeps the cleaner zero-leaf
convention of \cref{ass:zero-leaf-weights}.

\section{Ridge primal-dual identity}
\label{app:ridge-primal-dual}

We supply the algebraic identity used to recast the fitted KPP-Ridge
predictor as the kernel-form $\widehat y = \Knode(\Knode + \lreg I_n)^{-1} y$
of \cref{prop:ridge-bias-variance}.

Let $\Phi \in \R^{n \times m}$ denote the design matrix with rows
$\Phinode(x_i)^\top$, so that $\Knode = \Phi \Phi^\top$ on the
training inputs. The empirical KPP-Ridge objective at regularisation
$\lreg > 0$\footnote{The regularisation convention differs between the
two tasks: this regression objective is \emph{unaveraged} (the loss is a
sum over the $n$ training samples), whereas the classification objective
of \cref{eq:kpp-logistic-objective} averages the loss by $1/n$. The
penalty $\tfrac{\lreg}{2}\norm{w}_2^2$ has the same form in both, so the
same $\lreg$ acts at an effective strength of $\lreg/n$ relative to the
averaged loss in regression versus $\lreg$ in classification. Each task's
bounds are stated consistently with its own convention, but the raw
$\lreg$ is not directly comparable across tasks.} is
\begin{equation}
  \widehat w_{\lreg}
  \;:=\;
  \arg\min_{w \in \R^m}
    \tfrac{1}{2}\norm{y - \Phi w}_2^2
    + \tfrac{\lreg}{2}\,\norm{w}_2^2,
  \label{eq:kpp-ridge-objective}
\end{equation}
with first-order condition
\begin{equation}
  (\Phi^\top \Phi + \lreg I_m)\, \widehat w_{\lreg}
  \;=\;
  \Phi^\top y.
  \label{eq:ridge-normal-equation}
\end{equation}
The matrix identity
\begin{equation}
  (\Phi^\top \Phi + \lreg I_m)^{-1} \Phi^\top
  \;=\;
  \Phi^\top (\Phi \Phi^\top + \lreg I_n)^{-1}
  \label{eq:ridge-push-through}
\end{equation}
follows by multiplying both sides on the left by
$(\Phi^\top \Phi + \lreg I_m)$ and on the right by
$(\Phi \Phi^\top + \lreg I_n)$ and checking
$\Phi^\top \Phi \Phi^\top + \lreg \Phi^\top = \Phi^\top \Phi \Phi^\top + \lreg \Phi^\top$.
Substituting into the normal equation gives
\begin{equation}
  \widehat w_{\lreg}
  \;=\;
  \Phi^\top (\Knode + \lreg I_n)^{-1} y,
\end{equation}
and multiplying by $\Phi$ on the left yields the fitted-value
identity
\begin{equation}
  \widehat y
  \;=\;
  \Phi\, \widehat w_{\lreg}
  \;=\;
  \Knode (\Knode + \lreg I_n)^{-1} y
  \;=\;
  S_{\lreg}\, y,
\end{equation}
where $S_{\lreg}$ is the ridge smoother of
\cref{prop:ridge-bias-variance,eq:effective-dimension}. This is the
expression used in the bias--variance decomposition of
\cref{eq:ridge-bias-variance}.

\section{Honesty matters: conditional bounds versus full-pipeline guarantees}
\label{app:honesty-matters}

The Rademacher and ridge-identity statements of
\cref{sec:rademacher-regression,sec:rademacher-classification} are
all conditional on the KPP representation $\Phinode$ in the sense of
\cref{sec:proof-boundary}. This appendix records what they do, and do
not, prove about the full pipeline
\begin{equation*}
  (X_i, Y_i)_{i=1}^{n}
  \;\longrightarrow\;
  (T, \nodew, \Phinode)
  \;\longrightarrow\;
  \widehat w
\end{equation*}
of \cref{eq:full-kpp-pipeline}.

If the same labels are used both to choose the KPP representation
(splits, node weights) and to fit the final ERM weights $\widehat w$,
then the hypothesis class itself is label-dependent: the class
$\Fcal_B = \set{x \mapsto \inner{w}{\Phinode(x)} : \norm{w}_2 \le B}$
of \cref{eq:fcal-B-linear} depends on the same labels that the
empirical Rademacher complexity averages over. This does not make
the algorithm invalid; it changes the proof problem. The uniform
squared-loss bound \cref{prop:uniform-squared-loss-bound} and the
uniform logistic bound \cref{thm:uniform-logistic-bound} then become
statements about the fitted representation, not about the full data-
to-predictor map.

A clean end-to-end guarantee requires one of the three regimes of
\cref{sec:proof-boundary}:
\begin{itemize}
  \item \textbf{Fixed representation.} The forest, node weights, and
    features are regarded as fixed before the final learner is
    analysed; the conditional bound is directly a bound on the final
    predictor.
  \item \textbf{Honest representation.} The sample is split into a
    partition fold (which builds the trees and node weights) and a
    fit fold (which constructs the KPP design and fits the ERM); the
    conditional bound is then conditional on the partition fold and
    averages over the labels in the fit fold only.
  \item \textbf{Cross-fit representation.} For each fold $q$, a
    representation $\Phinode^{(-q)}$ is learned without using labels
    from fold $q$, and points in fold $q$ are embedded with that
    representation; the conditional bound is applied fold-wise.
\end{itemize}
Any KPP statistical theorem in this paper carries one of these three
clauses. An alternative route --- proving a full-pipeline guarantee
through stability of the representation-building step or a uniform
complexity bound on the class of possible forests --- is consistent
with this framework but is not pursued here.

\section{Variance gain as a node-weight choice}
\label{app:variance-gain-choice}

For regression, a natural internal-node weight is the variance
decrease
\begin{equation}
  \nodew_t(v) \;:=\; \Delta Q(v),
  \qquad v \in V_t^{\mathrm{int}},
  \label{eq:variance-gain-node-weight}
\end{equation}
with leaf weights set to zero under \cref{ass:zero-leaf-weights}.
With this choice, the KPP path distance
$\dT(x, x') = \sum_t \sum_{v \in \sigma_t(x, x')} \Delta Q(v)$
aligns geometrically with the variance oracle of
\cref{cor:variance-oracle}: pairs separated by high-gain splits sit
at large path distance, while pairs falling in the same well-pooled
leaf sit at distance zero.

Other nonnegative weight families are admissible --- impurity decrease
in the Gini sense, raw conditional variance, uniform weights, or
hand-crafted relevance scores --- but the oracle interpretation
changes accordingly. The unified geometry and isometry results of
\cref{sec:kpp-construction} hold for any nonnegative weight family
satisfying \cref{ass:zero-leaf-weights}.

Empirically, this choice is competitive and robust rather than optimal.
\Cref{tab:node-weight} compares the three weightings head to head on the
40-dataset suite of \cref{sec:bench-demsar}, run on a shared forest with
the per-dataset $\lreg$ held fixed so that the arms differ only in the
node weights; the default arm (variance / impurity decrease) reproduces
the benchmark KPP estimator bit for bit. The choice has a measurable
effect --- the arms are far from interchangeable --- yet no weighting wins
universally: against the default, raw conditional variance and uniform
weights each improve on it on $15$ of the $40$ datasets, tie on $5$, and
are worse on $20$, and on $15$ datasets neither alternative beats the
default. The better arm is dataset-dependent, while the default is beaten
only on a minority and holds the plurality of the head-to-head
comparisons. This is the behaviour expected of a reasoned design choice:
the variance-gain weighting of \cref{eq:variance-gain-node-weight} is
robust within the admissible family, not a proven per-dataset optimum, and
we do not present it as optimal. The comparison is an internal ablation of
the node weights alone, on a different footing from the cross-method
ranking of \cref{sec:bench-demsar} and from the leaf-only representation
ablation reported in \cref{sec:bench-five-datasets}.

\begin{table}[t]
  \centering
  \small
  \caption{Node-weight ablation on the 40-dataset suite of
    \cref{sec:bench-demsar} ($20$ regression $+$ $20$ classification, five
    seeds). The three admissible weightings of this appendix --- variance
    / impurity decrease (the default), raw conditional variance, and
    uniform --- are run on a shared forest with the per-dataset $\lreg$
    held fixed, so the arms differ only in the node weights; the default
    arm reproduces the benchmark KPP estimator bit for bit. Each
    alternative is compared head to head against the default per dataset
    on the seed-averaged primary metric (RMSE for regression, error rate
    $1 - \mathrm{accuracy}$ for classification, both lower-is-better);
    ``Wins'' is from the alternative's perspective (strictly better than
    the default), a difference at or below the numerical-precision floor
    ($\lvert \Delta \rvert \le 10^{-12}$) counting as a tie. On $15$ of the
    $40$ datasets neither alternative beats the default.}
  \label{tab:node-weight}
  \begin{tabular}{lrrr}
    \toprule
    Alternative vs.\ default & Wins & Ties & Losses \\
    \midrule
    raw conditional variance & $15$ & $5$ & $20$ \\
    uniform                  & $15$ & $5$ & $20$ \\
    \bottomrule
  \end{tabular}
\end{table}

\section{Crude logistic calibration: proof of the pointwise inequality}
\label{app:logistic-calibration-proof}

We prove the pointwise inequality
\cref{eq:logistic-pointwise-calibration} used in
\cref{sec:rademacher-classification}: for every $u \in \R$ and
$y \in \set{-1, +1}$,
\begin{equation*}
  \one\set{y u \le 0}
  \;\le\;
  \frac{\ell_{\log}(u, y)}{\log 2},
  \qquad \ell_{\log}(u, y) = \log(1 + e^{-y u}).
\end{equation*}

If $y u > 0$, the left-hand side is zero and the right-hand side is
nonnegative, so the inequality is trivial. If $y u \le 0$, then
$e^{-y u} \ge 1$, hence $\ell_{\log}(u, y) = \log(1 + e^{-y u}) \ge \log 2$,
so the right-hand side is at least $1$, while the left-hand side
equals $1$. Taking expectations of both sides over
$(X, Y) \sim P$ at the score $u = g_w(X)$ recovers
\cref{eq:logistic-calibration}.

\section{Entropy telescoping: direct derivation}
\label{app:entropy-telescoping-proof}

We give the direct derivation of \cref{cor:entropy-oracle}, as an
alternative to the task-neutral specialisation of
\cref{lem:partition-telescoping} given in the main text.

For every internal node $v$, multiplying the definition of
$\Delta h(v)$ (see \cref{eq:delta-h}) by $p(v)$ and using
$p(v) \pi_L(v) = p(v_L)$, $p(v) \pi_R(v) = p(v_R)$ gives
\begin{equation}
  p(v)\, h(\eta(v))
  \;=\;
  p(v_L)\, h(\eta(v_L)) + p(v_R)\, h(\eta(v_R)) + p(v)\, \Delta h(v).
\end{equation}
Summing over internal nodes telescopes: every non-root node appears
exactly once as a child of its parent. The surviving terms are the
root entropy $h(\eta(r_t))$, the leaf entropies
$\sum_{\ell \in V_t^{\mathrm{leaf}}} p(\ell)\, h(\eta(\ell)) = L_{\log}^{\mathrm{leaf}}(t)$,
and the accumulated information gain
$\sum_{v \in V_t^{\mathrm{int}}} p(v)\, \Delta h(v) = IG_h(t)$,
yielding \cref{eq:entropy-oracle-tree}.

\section{Counterexample template: geometric margin does not imply score margin}
\label{app:counterexample-margin}

The implication
\begin{equation}
  \Prob\bigl(\deltaT(X, \mathcal B) \le r\bigr) \le C r^{\nu}
  \quad \Longrightarrow \quad
  \Prob\bigl(\abs{g(X)} \le \gamma\bigr) \le C' \gamma^{\alpha}
  \label{eq:false-margin-implication}
\end{equation}
is false in general, where $\mathcal B = \set{x : g(x) = 0}$ is
the score's zero set and the left-hand side is a geometric-margin
mass condition in the KPP path metric $\deltaT$ of
\cref{sec:setup-representation}. Here $g$ denotes a general
$\sqrt{\deltaT}$-Lipschitz score on the embedding metric. A Lipschitz score can be nearly
flat and close to zero on a large set far from its zero boundary.
Lipschitzness controls how quickly the score can change as one moves
in $\deltaT$, but it does not force the score to grow away from the
boundary. A lower-margin condition, or the direct score-margin
condition of \cref{ass:direct-score-margin}, is therefore required
for any fast-rate argument in \cref{sec:robust-radius-classification}.

\paragraph{A one-dimensional example.}
Take $X \sim \mathrm{Unif}([0, 1])$, the Bayes boundary
$\mathcal B = \set{1/2}$, and the super-flat score
$g(x) = \sign(x - 1/2)\, e^{-1/\abs{x - 1/2}}$ with $g(1/2) := 0$.
Writing $t = \abs{x - 1/2}$, the derivative
$\abs{g'(x)} = t^{-2} e^{-1/t}$ is bounded on $[0, 1]$ and tends to $0$
as $t \to 0$, so $g$ is Lipschitz in the embedding metric
$\sqrt{\deltaT}$ whenever $\sqrt{\deltaT}$ dominates the Euclidean
metric; the geometric-margin mass condition
$\Prob(\abs{X - 1/2} \le r) = 2 r$ holds with $\nu = 1$ (equivalently $\nu = 1/2$ in $\deltaT$ via
\cref{rem:deltaT-half-metric}); the conclusion fails for every
$\alpha$ regardless of $\nu$.
Simultaneously $\abs{g(x)} \le \gamma$ if and only if
$\abs{x - 1/2} \le 1 / \ln(1/\gamma)$, so for small $\gamma$,
\begin{equation*}
  \Prob\bigl(\abs{g(X)} \le \gamma\bigr) = \frac{2}{\ln(1/\gamma)},
\end{equation*}
which decays more slowly than any power $\gamma^{\alpha}$ with
$\alpha > 0$, so no score-margin tail of the form $C' \gamma^{\alpha}$
can hold. Every KPP score $g_w$ is $\norm{w}_2$-Lipschitz in
$\sqrt{\deltaT}$, hence lies in this ambient class; the witness $g$ is
not a realisable finite KPP score $g_w$ but exhibits the obstruction
within the ambient Lipschitz class. Whether a realisable $g_w$ can
violate the score-margin tail uniformly across refining forests is the
open problem recorded in \cref{sec:radical-honesty}.
Geometric margin therefore does not imply a score-margin
tail, and the direct score-margin condition of
\cref{ass:direct-score-margin} is what closes the gap. This is the
obstruction anchored as the M-C-01 pitfall in the radical-honesty
discussion of \cref{sec:radical-honesty}.

\section*{Supplementary Material}
\phantomsection
\label{suppl:bench-detailed}

\subsection*{Per-dataset benchmark tables}

This appendix supplies the per-dataset detailed bench tables
deferred from \cref{sec:bench-five-datasets}. Each table reports the
primary metric (RMSE for regression, test error rate
$1 - \mathrm{accuracy}$ for classification, both lower-is-better),
the mean fit time in seconds, and the number of seeds for the six
baselines and the KPP estimator. Five seeds per (dataset, method),
mean $\pm$ standard deviation. The aggregation is produced by
\texttt{examples/aggregate\_benchmarks.py} from the per-dataset CSVs.

\begin{table}[t]
  \centering
  \small
  \caption{\texttt{breast\_cancer} --- classification, $n_{\mathrm{train}} = 455$,
    $n_{\mathrm{test}} = 114$, $p = 30$. Test error rate (lower is better).}
  \label{tab:bench-breast-cancer}
  \begin{tabular}{lrrr}
    \toprule
    Method & Error rate (mean $\pm$ std) & Fit time (s, mean) & Seeds \\
    \midrule
    KPP              & $0.0281 \pm 0.0039$ & $0.74$ & $5$ \\
    KPP\_leaf-only   & $0.0351 \pm 0.0107$ & $-$    & $5$ \\
    RandomForest     & $0.0351 \pm 0.0164$ & $0.15$ & $5$ \\
    GradientBoosting & $0.0351 \pm 0.0196$ & $0.82$ & $5$ \\
    XGBoost          & $0.0316 \pm 0.0078$ & $0.07$ & $5$ \\
    LightGBM         & $0.0351 \pm 0.0164$ & $0.24$ & $5$ \\
    CatBoost         & $0.0246 \pm 0.0130$ & $0.43$ & $5$ \\
    LogReg\_raw      & $0.0193 \pm 0.0073$ & $0.00$ & $5$ \\
    \bottomrule
  \end{tabular}
\end{table}

\begin{table}[t]
  \centering
  \small
  \caption{\texttt{california\_housing\_subsample\_2000} --- regression,
    $n_{\mathrm{train}} = 1\,600$, $n_{\mathrm{test}} = 400$, $p = 8$.
    RMSE (lower is better).}
  \label{tab:bench-california}
  \begin{tabular}{lrrr}
    \toprule
    Method & RMSE (mean $\pm$ std) & Fit time (s, mean) & Seeds \\
    \midrule
    KPP              & $0.5469 \pm 0.0413$ & $4.74$ & $5$ \\
    KPP\_leaf-only   & $0.5709 \pm 0.0389$ & $-$    & $5$ \\
    RandomForest     & $0.5710 \pm 0.0387$ & $0.39$ & $5$ \\
    GradientBoosting & $0.5402 \pm 0.0435$ & $0.68$ & $5$ \\
    XGBoost          & $0.5582 \pm 0.0472$ & $0.18$ & $5$ \\
    LightGBM         & $0.5287 \pm 0.0421$ & $0.33$ & $5$ \\
    CatBoost         & $0.5183 \pm 0.0379$ & $0.14$ & $5$ \\
    Ridge\_raw       & $0.6793 \pm 0.0424$ & $0.00$ & $5$ \\
    \bottomrule
  \end{tabular}
\end{table}

\begin{table}[t]
  \centering
  \small
  \caption{\texttt{concrete} --- regression, $n_{\mathrm{train}} = 824$,
    $n_{\mathrm{test}} = 206$, $p = 8$. RMSE (lower is better).}
  \label{tab:bench-concrete}
  \begin{tabular}{lrrr}
    \toprule
    Method & RMSE (mean $\pm$ std) & Fit time (s, mean) & Seeds \\
    \midrule
    KPP              & $4.3024 \pm 0.2299$ & $1.16$ & $5$ \\
    KPP\_leaf-only   & $4.7186 \pm 0.2705$ & $-$    & $5$ \\
    RandomForest     & $4.8994 \pm 0.3106$ & $0.19$ & $5$ \\
    GradientBoosting & $4.7117 \pm 0.5452$ & $0.24$ & $5$ \\
    XGBoost          & $4.4086 \pm 0.3166$ & $0.18$ & $5$ \\
    LightGBM         & $4.2472 \pm 0.2464$ & $0.37$ & $5$ \\
    CatBoost         & $4.3316 \pm 0.3106$ & $0.12$ & $5$ \\
    Ridge\_raw       & $10.7906 \pm 0.7432$ & $0.00$ & $5$ \\
    \bottomrule
  \end{tabular}
\end{table}

\begin{table}[t]
  \centering
  \small
  \caption{\texttt{spambase\_full} --- classification,
    $n_{\mathrm{train}} = 3\,680$, $n_{\mathrm{test}} = 921$, $p = 57$.
    Test error rate (lower is better).}
  \label{tab:bench-spambase}
  \begin{tabular}{lrrr}
    \toprule
    Method & Error rate (mean $\pm$ std) & Fit time (s, mean) & Seeds \\
    \midrule
    KPP              & $0.0452 \pm 0.0033$ & $65.84$ & $5$ \\
    KPP\_leaf-only   & $0.0443 \pm 0.0031$ & $-$     & $5$ \\
    RandomForest     & $0.0465 \pm 0.0039$ & $0.34$  & $5$ \\
    GradientBoosting & $0.0510 \pm 0.0072$ & $2.77$  & $5$ \\
    XGBoost          & $0.0502 \pm 0.0058$ & $0.25$  & $5$ \\
    LightGBM         & $0.0456 \pm 0.0051$ & $0.55$  & $5$ \\
    CatBoost         & $0.0456 \pm 0.0077$ & $0.68$  & $5$ \\
    LogReg\_raw      & $0.0743 \pm 0.0100$ & $0.01$  & $5$ \\
    \bottomrule
  \end{tabular}
\end{table}

\begin{table}[t]
  \centering
  \small
  \caption{\texttt{wine\_quality\_red} --- regression,
    $n_{\mathrm{train}} = 1\,279$, $n_{\mathrm{test}} = 320$, $p = 11$.
    RMSE (lower is better).}
  \label{tab:bench-wine}
  \begin{tabular}{lrrr}
    \toprule
    Method & RMSE (mean $\pm$ std) & Fit time (s, mean) & Seeds \\
    \midrule
    KPP              & $0.5506 \pm 0.0272$ & $2.85$ & $5$ \\
    KPP\_leaf-only   & $0.5408 \pm 0.0253$ & $-$    & $5$ \\
    RandomForest     & $0.5577 \pm 0.0279$ & $0.32$ & $5$ \\
    GradientBoosting & $0.6016 \pm 0.0265$ & $0.48$ & $5$ \\
    XGBoost          & $0.5823 \pm 0.0299$ & $0.17$ & $5$ \\
    LightGBM         & $0.5854 \pm 0.0249$ & $0.33$ & $5$ \\
    CatBoost         & $0.5811 \pm 0.0174$ & $0.12$ & $5$ \\
    Ridge\_raw       & $0.6291 \pm 0.0205$ & $0.00$ & $5$ \\
    \bottomrule
  \end{tabular}
\end{table}

\end{document}